\documentclass{article}


\usepackage[numbers]{natbib}


\usepackage[final]{neurips_2021}


\usepackage[utf8]{inputenc} 
\usepackage[T1]{fontenc}    
\usepackage{url}            
\usepackage{booktabs}       
\usepackage{amsfonts}       
\usepackage{nicefrac}       
\usepackage{microtype}      
\usepackage{xcolor}         

\usepackage{amsthm}
\usepackage{amssymb}
\usepackage{amsmath,nccmath}
\usepackage{mathtools}
\usepackage{algorithm}
\usepackage{algorithmic}
\usepackage{wrapfig}
\usepackage{thmtools}
\usepackage{thm-restate}
\usepackage{cleveref}
\usepackage{enumitem} 
\usepackage[font={it,small},skip=5pt]{caption}

\title{Fair Clustering Under a Bounded Cost}

\newcommand{\floor}[1]{\left\lfloor #1 \right\rfloor}
\newcommand{\ceil}[1]{\left\lceil #1 \right\rceil}
\newtheorem{theorem}{Theorem}[section]

\newtheorem{fact}{Fact}[section]

\newtheorem{lemma}{Lemma}[section]

\DeclareMathOperator*{\argmin}{argmin}

\DeclareMathOperator{\point}{\mathnormal{j}}
\DeclareMathOperator{\Points}{\mathcal{C}}
\DeclareMathOperator{\Centers}{\mathcal{S}}
\DeclareMathOperator{\pcolor}{\mathnormal{h}}
\DeclareMathOperator{\colPoints}{\mathcal{C}^{\pcolor}}
\DeclareMathOperator{\Colors}{\mathcal{H}}

\DeclareMathOperator{\FC}{\bold{FC}}
\DeclareMathOperator{\FA}{\bold{FA}}
\DeclareMathOperator{\FCBC}{\bold{FCBC}}
\DeclareMathOperator{\FABC}{\bold{FABC}}

\DeclareMathOperator{\ALGFCBC}{\bold{ALG-FCBC}}
\DeclareMathOperator{\ALGFABC}{\bold{ALG-FABC}}

\newcommand{\Util}{\textsc{Util}}
\newcommand{\GroupUtilitarian}{\textsc{Group-Utilitarian}}
\newcommand{\Egalitarian}{\textsc{Group-Egalitarian}}
\newcommand{\Leximin}{\textsc{Group-Leximin}}
\newcommand{\GroupUtilitarianColor}{\textsc{Group-Utilitarian-Sum}}
\newcommand{\EgalitarianColor}{\textsc{Group-Egalitarian-Sum}}

\newcommand{\obj}{\textsc{Unfairness-Objective}}

\DeclareMathOperator{\POF}{\mathrm{PoF}}
\DeclareMathOperator{\OPT}{\mathrm{OPT}}

\DeclareMathOperator{\dpc}{\Delta_{\pcolor}}
\DeclareMathOperator{\dpcr}{\bar{\Delta}_{\pcolor}}
\DeclareMathOperator{\dpcmin}{\Delta^{\min}_{\pcolor}}
\DeclareMathOperator{\upof}{\mathnormal{U}} 
\DeclareMathOperator{\upofp}{\mathnormal{U^p}} 
\DeclareMathOperator{\eset}{\mathnormal{E_{\epsilon}}}

\DeclareMathOperator{\Colfixed}{\Colors_{\text{fixed}}}

\DeclareMathOperator{\TRUE}{\mathnormal{\textbf{TRUE}}} 
\DeclareMathOperator{\FALSE}{\mathnormal{\textbf{FALSE}}}

\newcommand{\adult}{\textbf{Adult}}

\newcommand{\credit}{\textbf{CreditCard}}
\newcommand{\cens}{\textbf{Census1990}}

\definecolor{pptx-orange}{rgb}{0.929411, 0.490196, 0.192156}
\definecolor{pptx-blue}{rgb}{0.3568, 0.607843, 0.835294}

%

\author{%
  Seyed Esmaeili \\
  University of Maryland\\
  \texttt{esmaeili@cs.umd.edu} \\
\And
  Brian Brubach \\
  Wellesley College\\
  \texttt{bb100@wellesley.edu} \\
 \And
 Aravind Srinivasan \\
 University of Maryland \\
 \texttt{asriniv1@umd.edu} \\
 \And
  John P. Dickerson \\
  University of Maryland\\
  \texttt{johnd@umd.edu} \\
}

\begin{document}

\maketitle

\begin{abstract}
Clustering is a fundamental unsupervised learning problem where a dataset is partitioned into clusters that consist of nearby points in a metric space. A recent variant, \emph{fair} clustering, associates a \emph{color} with each point representing its group membership and requires that each color has (approximately) equal representation in each cluster to satisfy group fairness. In this model, the cost of the clustering objective increases due to enforcing fairness in the algorithm. The relative increase in the cost, the ``price of fairness,'' can indeed be unbounded. Therefore, in this paper we propose to treat an upper bound on the clustering objective as a constraint on the clustering problem, and to maximize equality of representation subject to it. We consider two fairness objectives: the group utilitarian objective and the group egalitarian objective, as well as the group leximin objective which generalizes the group egalitarian objective. We derive fundamental lower bounds on the approximation of the utilitarian and egalitarian objectives and introduce algorithms with provable guarantees for them. For the leximin objective we introduce an effective heuristic algorithm. We further derive impossibility results for other natural fairness objectives. We conclude with experimental results on real-world datasets that demonstrate the validity of our algorithms. 
\end{abstract}

\section{Introduction} 
Machine learning algorithms are increasingly being applied to settings that directly influence human lives. This has spurred a growing \emph{fair machine learning} community \citep{barocas-hardt-narayanan}, which develops machine learning algorithms that are made to satisfy certain fairness criteria.  Choosing an appropriate definition of fairness---and even deciding \emph{if} explicitly defining fairness is appropriate to begin with---is a morally-laden and application-specific decision~\citep{Holstein19:Improving,Saha20:Measuring}.  We make no normative statements here; rather, we focus on a commonly-used and often legally-backed family of fairness definitions---\emph{group fairness}---in the context of \emph{clustering}, arguably the most fundamental unsupervised learning problem.

A recent group-membership fairness definition, called \emph{fair clustering} in the literature, has received significant interest~\citep{chierichetti2017fair,bera2019fair,bercea2018cost,ahmadian2019clustering,Backurs2019,NIPS2019_8976,esmaeili2020probabilistic,bandyapadhyay2020coresets,ahmadian2020fair,kleindessner2019guarantees,ahmadian2020hierarchical}.  In fair clustering, each point has a color that designates its group membership, and a clustering objective such as $k$-median or $k$-means is given. The goal is to find a clustering that minimizes the objective subject to the constraint that each cluster has each color represented within some pre-specified proportions. For example, there may be two colors, red and blue, and the constraint could require per-color representation between 40\% and 60\%.

An acknowledged fact in fair clustering---and, indeed, in many allocation and matching settings---is that the fairness (e.g., proportion) constraint could cause degradation in the clustering objective~\citep{Bertsimas11:Price,Caragiannis09:Efficiency}. A point may be assigned to a further away center (cluster) to satisfy the proportion constraint~\citep{chierichetti2017fair}. The degradation in the objective due to the imposed fairness constraint is called the \emph{price of fairness} ($\POF$), mathematically defined as $\POF=(\text{\em cost of fair solution}) \ / \ (\text{\em cost of agnostic solution})$. 

\begin{wrapfigure}[13]{r}{0.40\textwidth}
\centering
\vspace{-2.5mm}
\includegraphics[width=0.38\textwidth]{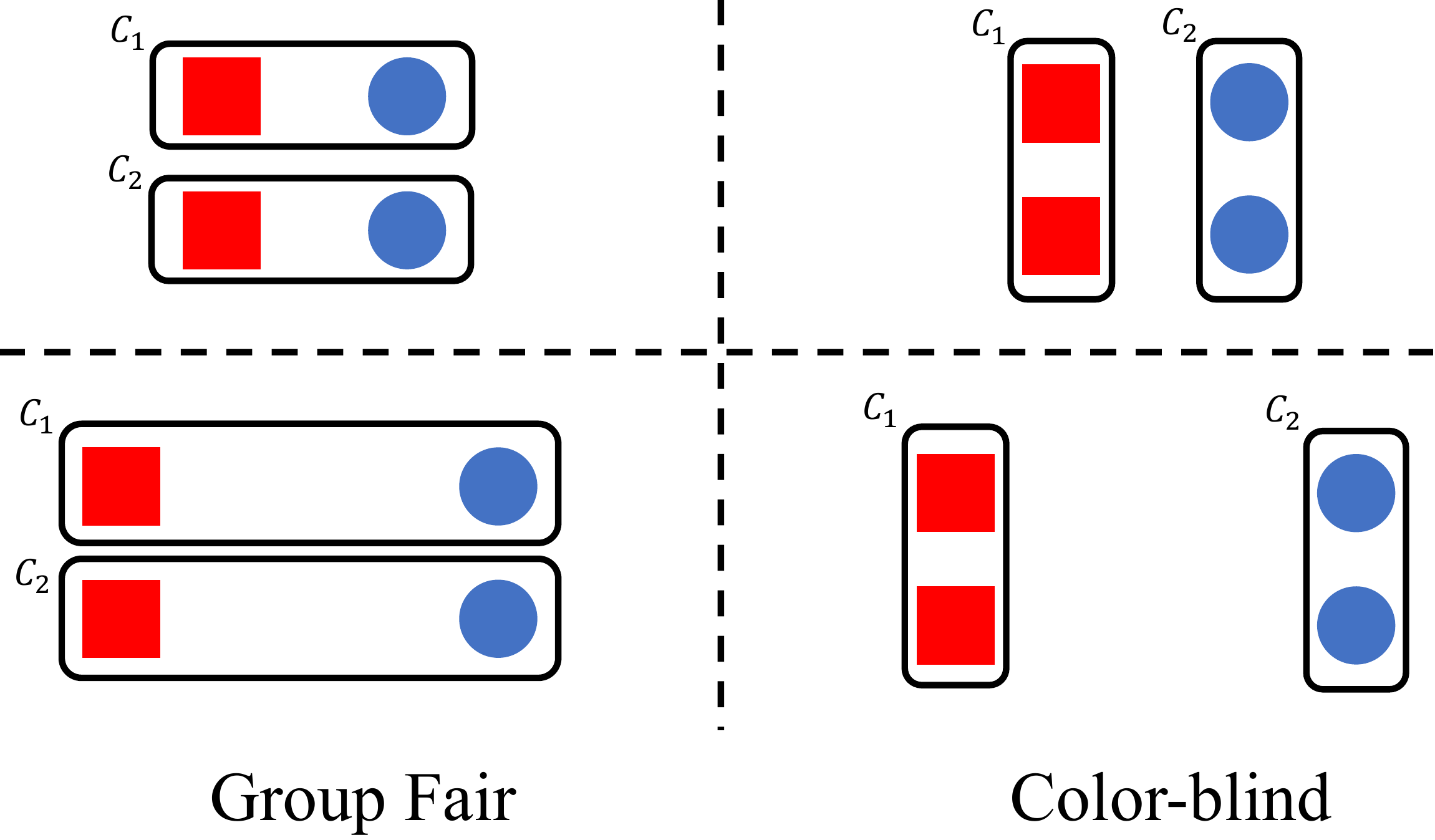}
\caption{Comparison between group fair (left) and color-blind (right) clustering. Unlike color-blind clusters, group fair clusters may combine faraway points (bottom-left).}
\label{fig_pof_ubounded}
\end{wrapfigure}

Unlike some examples in the literature~\citep{Bertsimas11:Price,Dickerson14:Price}, the price of fairness in the case of fair clustering is unbounded, as seen in Figure~\ref{fig_pof_ubounded}.  
By enforcing a form of group fairness requiring an even split across colors in each cluster, a fair clustering algorithm would perform arbitrarily poorly as the two groups of points separate in space, while a ``color-blind'' algorithm would remain unchanged (bottom-left and bottom-right of Figure~\ref{fig_pof_ubounded}, respectively).  The possibly unbounded increase in the clustering cost (unbounded price of fairness) indicates that fair clustering can yield clusters consisting of points that are far apart in the metric space instead of combining nearby points---often the main motivation behind clustering in machine learning and data analysis. Furthermore, the legal notion of disparate impact does not force an organization to output a fair clustering if it can justify an unfair one due to ``business necessity,'' i.e., potential loss in quality~\citep{CivilRights1991,SCOTUS15:Texas}. This possible conflict between the clustering objective and the fairness constraint indicates the need for fair clustering algorithms that operate in a setting where the clustering cost cannot exceed a pre-set upper bound.


\noindent\textbf{Our Contributions.}  In this paper, we address fair clustering under an exogenous threshold on the clustering objective.  We formulate the problem mathematically in a general setting that captures all of the traditional $k$-clustering objectives, i.e. $k$-center, $k$-median, and $k$-means.  Throughout, we focus on two general formulations of group fairness: \GroupUtilitarian{} and \Egalitarian{}, along with the \Leximin{} objective which generalizes the traditional \Egalitarian{} definition. We show that these objectives lead to problems that are NP-hard in general. Further, assuming $P\neq NP$ we derive lower bounds on the additive approximation of any polynomial time algorithm for the \GroupUtilitarian{} and \Egalitarian{} objectives.  
We provide bi-criteria approximation algorithms for the \GroupUtilitarian{} and \Egalitarian{} objectives in which the constraint has a bounded violation and the objective is bounded from the optimal value by an additive error. 
For the \Leximin{} objective we provide an effective heuristic. Further, we consider other possibly more ``flexible'' fairness objectives, but demonstrate inapproximability results for them. Finally, we test the performance of our algorithms on a collection of datasets and see that we obtain good solutions with low ``fairness violations.'' We note that due to the page limit, all proofs are placed in Appendix \ref{app:proofs}.


\vspace{-0.5cm}
\section{Related Work}\label{sec:related-work}
\vspace{-0.3cm}
The metric clustering problems $k$-center, $k$-median, and $k$-means are fundamental in unsupervised learning and operations research. All are NP-hard with a long line of research on approximation algorithms. For $k$-center, two distinct algorithms achieve a $2$-approximation which is tight assuming $P\neq NP$~\citep{Hochbaum1985,Gonzalez1985,Hochbaum1986}. The current best approximation for $k$-median is a $(2.675 + \epsilon)$-approximation in $n^{O((1/\epsilon)\log(1/\epsilon))}$ time~\citep{byrka2014improved}, and for $k$-means, there is a $6.357$-approximation~\citep{ahmadian2019better}.

The \emph{fair} clustering problem with group fairness constraints was proposed by~\cite{chierichetti2017fair}. They studied $k$-center and $k$-median in a setting with only two colors. Followup work by~\cite{bercea2018cost,bera2019fair,Backurs2019,NIPS2019_8976} gave extensions to the $k$-means objective, more than two colors, multi-color points (i.e., intersecting demographic groups), and scalability. 
Other works by~\cite{Mahabadi20:Individual,Brubach20:Pairwise,brubach2021constrained} look at non-group-fairness definitions; the former investigates individual fairness, while the latter two address probabilistic fairness guarantees for pairs or communities of points. The aforementioned works optimize the clustering objective subject to fairness constraints; however, satisfying the fairness constraints may come at the expense of a significant increase in the clustering objective. Accordingly \cite{Ziko2021variational} and very recently \cite{liu2021stochastic} explored the cost/fairness tradeoff, but using a multi-objective approach. Unlike our work they do not establish approximation guarantees. Further, the fairness objectives used are different from ours, i.e. in \cite{Ziko2021variational} the fairness objective is penalized for proportions that are not precisely equal to the population level while in \cite{liu2021stochastic} it is penalized for color ratios that are not equal to 1. Moreover, in \cite{Ziko2021variational} the cost/fairness tradeoff is a non-monotone function of a parameter which the user must adjust, while \cite{liu2021stochastic} only focuses on the $k$-means objective and provides convergence guarantees only for a smoothed version of the original problem.


We focus on the formal tradeoff between fairness and the clustering objective. We note that the clustering objective can be replaced by the price of fairness ($\POF$). However, in our setting, the results are clearer if we refer to the clustering objective instead of the $\POF$. Ultimately a higher $\POF$ corresponds to a weakly higher fair clustering objective and vice versa. In fact, they are multiples of one another: $\POF=(\text{\em cost of fair solution}) \ / \ (\text{\em cost of agnostic solution})$. 
This tradeoff between fairness and efficiency manifested the $\POF$ concept, in operations research by~\cite{Bertsimas11:Price} and simultaneously in computer science by~\cite{Caragiannis09:Efficiency}, 
showing general approaches to defining and measuring it. 
Similar to our work, others have adapted $\POF$ as a hard constraint in emergency response~\citep{Huang15:Modeling}, organ allocation~\citep{Mattei18:Fairness}, and rideshare~\citep{Lesmana19:Balancing,Nanda20:Balancing}, and a partial constraint in scarce resource allocation for kidney dialysis~\citep{Hooker12:Combining} and organ exchange~\citep{McElfresh18:Balancing}. We propose a framework for balancing $\POF$ for a traditional ``efficient'' objective in clustering, which finds application in areas such as advertising, network analysis, and data summarization.

\vspace{-0.3cm}
\section{Preliminaries}\label{sec:prelims}
\vspace{-0.3cm}
In a clustering problem, we are given a set of points $\Points$ in a metric space. A distance function $d(i,j)$ specifies the distance between each pair of points $i,j \in \Points$. 
Furthermore, $d$ is symmetric, non-negative, and satisfies the triangle inequality. 
Common clustering cost (loss) functions (e.g., $k$-means or $k$-median) can be written as $    \min\limits_{S: |S| \leq k,\phi} L^k_{p}(\Points) = \min\limits_{S: |S| \leq k,\phi}  \Big(\sum_{\point \in \Points} d^p(\point,\phi(\point))\Big)^{1/p}$
where $k$ is the number of clusters, $S$ is the set of cluster centers chosen from a candidate set of centers $\Centers$, and $\phi\colon \Points \rightarrow S$ is an assignment function that assigns points to cluster centers. The value $p$ determines the type of clustering, i.e., $p=\infty,1, \text{and 2}$ for $k$-center, $k$-median, and $k$-means, respectively.

In fair clustering, each point has a color associated with it to indicate its group membership. Specifically, we have a function  $\chi\colon \Points \rightarrow \Colors$ where $\Colors$ is the set of possible colors. We denote the set of all points of color $\pcolor$ by $\colPoints$.  
The fair clustering problem ($\FC$)~\citep{chierichetti2017fair,bera2019fair,bercea2018cost,ahmadian2019clustering,Backurs2019,NIPS2019_8976,bandyapadhyay2020coresets} is to minimize the clustering objective while satisfying additional fairness constraints: 
\vspace{-0.2cm}
\begin{subequations}\label{pfc_opt}
 \begin{equation}
    \min_{S: |S| \leq k,\phi} \Big(\sum_{\point \in \Points} d^p(\point,\phi(\point))\Big)^{1/p} \\ 
 \end{equation}
\vspace{-0.3cm}
 \begin{align}
    \label{LC-PN}
    \text{s.t. } \forall i \in S, \forall \pcolor \in \Colors : \beta_{\pcolor} |\Points_i| \leq |\Points^{\pcolor}_i| \leq \alpha_{\pcolor} |\Points_i|
 \end{align}
\end{subequations}
\vspace{-0.05cm}
where $\Points_i$ is the set of points in cluster $i$ and $\Points^h_i \subseteq \Points_i$ is the subset of points in cluster $i$ with color $\pcolor$. $\beta_{\pcolor}$ and $\alpha_h$ are pre-specified lower and upper proportion bounds for color $\pcolor$, respectively. Clearly, $0 < \beta_{\pcolor} \leq \alpha_{\pcolor} < 1$.

In ``unfair'' clustering problems, the assignment function $\phi$ maps points to the nearest center in $S$, i.e., $\phi(j) =\argmin_{i \in S} d(i,j)$ since this minimizes the objective. However, satisfying the added constraints in fair clustering may cause points to be assigned to clusters that are farther away. 
This motivates the fair assignment problem ($\FA$), in which the set of centers $S$ is given and the objective is to minimize the clustering cost subject to fairness constraints:
\vspace{-0.25cm}
\begin{subequations}\label{fa_opt}
 \begin{equation}
    \min_{\phi} \Big(\sum_{\point \in \Points} d^p(\point,\phi(\point))\Big)^{1/p} \\ 
 \end{equation}
 \vspace{-0.3cm}
 \begin{equation}
    \text{s.t. } \forall i \in S, \forall \pcolor \in \Colors : \beta_{\pcolor} |\Points_i| \leq |\Points^{\pcolor}_i| \leq \alpha_{\pcolor} |\Points_i|
 \end{equation}
\end{subequations}
The only difference between the fair assignment~(\ref{fa_opt}) and fair clustering~(\ref{pfc_opt}) problems is that $S$ is not an optimization variable in the fair assignment problem. 

\vspace{-0.3cm}
\section{Fair Clustering Under a Bounded Cost ($\FCBC$)}\label{sec:fcbc}
\vspace{-0.3cm}
The fundamental idea of fair clustering under a bounded cost ($\FCBC$) is to minimize a measure of unfairness subject to an upper bound on the clustering cost:
\begin{subequations}\label{mf}
\vspace{-0.1cm}
 \begin{equation}\label{mf_obj}
     \min{\text{ Unfairness }} 
 \end{equation}
\vspace{-0.5cm}
 \begin{equation}\label{mf_constraint}
     \text{s.t.} \ \text{Clustering Cost}  \leq \text{Given upper bound} 
 \end{equation}
\end{subequations}


Next, we transform (\ref{mf_obj}) and (\ref{mf_constraint}) above into a clear mathematical optimization problem.    
\vspace{-0.5cm}
\paragraph{The Constraint (\ref{mf_constraint}):} The clustering cost is $\big(\sum_{\point \in \Points} d^p(\point,\phi(\point)) \big)^{1/p}$. Let $U$ denote the exogeneous upper bound on clustering cost.  Then, (\ref{mf_constraint}) becomes $\big(\sum_{\point \in \Points} d^p(\point,\phi(\point)) \big)^{1/p} \leq U$. Note that for the case of the $k$-center where $p=\infty$, the constraint reduces to a simpler form, specifically $\forall j \in \Points, d(\point,\phi(\point)) \leq U$.
\vspace{-0.3cm}
\paragraph{The Objective (\ref{mf_obj}):} In prior work, a given clustering is considered fair if for each cluster, the proportions of each color lie within pre-specified lower and upper bounds, i.e.: $\forall i \in S, \forall \pcolor \in \Colors: \beta_{\pcolor} |\Points_i| \leq |\Points^{\pcolor}_{i}| \leq \alpha_{\pcolor} |\Points_i|$. However, bounding the clustering cost may make it impossible to have a fair feasible solution. Therefore, we instead set a measure of unfairness for each color and try to minimize this measure. Let $\dpc$ denote the worst proportional violation across the clusters for a color $\pcolor$. Specifically, for a given clustering, $\dpc \in [0,1]$ is the minimum non-negative value such that:
\begin{equation}
    \forall i \in S : (\beta_{\pcolor}-\dpc) |\Points_i| \leq |\Points^{\pcolor}_i| \leq (\alpha_{\pcolor}+\dpc) |\Points_i|. 
\end{equation}
Clearly, if $\dpc =0$, then color $\pcolor$ is within the desired proportion in every cluster. Having set $\dpc$ to be a measure of the unfair treatment that group $\pcolor$ receives,  we are faced with the question of setting the fairness objective, for which there are many reasonable options. We consider two prominent and intuitive fairness objectives~\citep{brandt2016handbook}:
\vspace{-0.1cm}
\begin{align*}
     \ \ \GroupUtilitarian{} = \min \sum\limits_{\pcolor \in \Colors} \dpc  \ \ \text{ , } \ \  \ \ \Egalitarian{} = \min \max\limits_{\pcolor \in \Colors} \dpc \ \ 
\end{align*}

\vspace{-0.2cm}
The \GroupUtilitarian{} objective minimizes the sum of proportional violations for all of the colors, treating all points of a specific color as a single player in a game. The \Egalitarian{} objective minimizes the maximum proportional violation across the colors. We also consider a more generalized version of the \Egalitarian{} objective, the \Leximin{} objective. Like \Egalitarian{}, the \Leximin{} objective minimizes the maximum (worst) violation, but it goes further to minimizes the second-worst violation, then the third-worst violation, and so on until no further improvement can be made. We now state the fair clustering under a bounded cost problem ($\FCBC$):
\vspace{-0.2cm}
\begin{subequations}\label{pof_opt}
 \begin{equation}\
     \min_{S: |S| \leq k,\phi}{\obj} 
 \end{equation}
\vspace{-0.3cm}
 \begin{equation}
     \text{s.t. } \qquad \big(\sum_{\point \in \Points} d^p(\point,\phi(\point))\big)^{1/p} \leq U
 \end{equation}
\end{subequations}
where the \obj{} could equal \GroupUtilitarian{}, \Egalitarian{}, or \Leximin{}. Similar to the fair assignment $\FA$ problem (\ref{fa_opt}), we may define the fair assignment under a bounded cost ($\FABC$) problem as:
\vspace{-0.2cm}
\begin{subequations}\label{pof_fa_opt}
 \begin{equation}
     \min_{\phi}{\obj} 
 \end{equation}
\vspace{-0.3cm}
 \begin{equation}
     \text{s.t. } \qquad \big(\sum_{\point \in \Points} d^p(\point,\phi(\point))\big)^{1/p} \leq U
 \end{equation}
\end{subequations}
where similarly the optimization is over the assignment function $\phi$ while the set of centers $S$ is fixed. 

\vspace{-0.3cm}
\section{Hardness of $\FCBC$ \& $\FABC$}\label{sec:hardness}
\vspace{-0.3cm}
First, we formally state the hardness of the fair clustering $\FC$ and the fair assignment $\FA$ problems.

\begin{restatable}{theorem}{FAnphard}\label{FA_np_hard}
The fair clustering $\FC$ (\ref{pfc_opt}) and fair assignment $\FA$ (\ref{fa_opt}) problems are NP-hard.
\end{restatable}

\vspace{-0.1cm}
We now establish the hardness of fair clustering under a bounded cost $\FCBC$ and fair assignment under a bounded cost $\FABC$. We note that these hardness results follow for all objectives (\GroupUtilitarian{}, \Egalitarian{}, and \Leximin{}).

\begin{restatable}{theorem}{theoremPoFNPHard}\label{pof_np_hard}
Fair clustering under a bounded cost $\FCBC$ and fair assignment under a bounded cost $\FABC$ are NP-hard.
\end{restatable}



\vspace{-0.1cm}
Although we have shown that both the fair clustering and fair assignment problems under a bounded cost are NP-hard, the reductions rely on setting the upper bound $U$ to a small enough value, precisely that of the optimal fair clustering cost. It seems reasonable to expect both problems to transition into being polynomial time solvable if the upper bound becomes sufficiently large. We show in Section~\ref{poly_for_large_U} that such a result is not easy to establish and would lead to a true approximation for fair clustering which is yet to be produced in the fair clustering literature for arbitrary metric spaces and arbitrary lower and upper color proportion bounds. 

For a given clustering cost $U$, there are many clusterings (solutions) of cost not exceeding $U$. Let $\mathcal{S}_U$ be the set of those solutions, i.e. if $(S_t,\phi_t) \in \mathcal{S}_U$, then $(S_t,\phi_t)$ is a clustering with a cost that does not exceed $U$. Further, let $L_t$ be the size of the smallest non-empty cluster\footnote{An empty cluster is a cluster with no points assigned to it. This could happen if for example the assignment function $\phi$ does not map any point to a a given center including the center itself.} in the clustering  $(S_t,\phi_t)$, then we define $L(U)$ to be the size of the smallest cluster across all clusterings of cost not exceeding $U$, i.e. $L(U)= \min_{(S_t,\phi_t) \in \mathcal{S}_U}{L_t}$. Clearly, for $U_1$ and $U_2$ such that $U_2 \ge U_1$, then $L(U_2) \leq L(U_1)$ since $\mathcal{S}_{U_1} \subseteq \mathcal{S}_{U_2}$. We can conclude the following fact from  the definition of $L(U)$:

\begin{fact}\label{lb_assump}
For a given upper bound $U$, no clustering with cost less than or equal to $U$ can have less than $L(U)$ many points in a non-empty cluster. 
\end{fact} 


We show that the quantity $L(U)$ plays a fundamental role. In fact, lower bounds on the additive approximation\footnote{An algorithm for a minimization problem with additive approximation $\mu > 0$, returns a value for the objective that is at most $\OPT+\mu$ where $\OPT$ is the optimal value.} for the proportional violations and fairness objectives are related to $L(U)$ as shown in the following theorem:
\begin{restatable}{theorem}{theoremLBmain}\label{LBmain}
For a given instance of the $\FCBC$ or $\FABC$ problem with an arbitrary upper bound $U$, unless $P=NP$ no polynomial time algorithm can produce a solution with a cost not exceeding $U$ that satisfies any of the following conditions: \textbf{(a)} The proportional violation of any color $h \in \Colors$ is $\Delta_h < \frac{1}{8L(U)}$. \textbf{(b)} The additive approximation for the \GroupUtilitarian{} objective is less than $\frac{|\Colors|}{8L(U)}$. \textbf{(c)} The additive approximation for the \Egalitarian{} objective is less than $\frac{1}{8L(U)}$.
\end{restatable}



\section{Algorithms for $\FCBC$}\label{sec:algorithms}
Our main result for the $\FCBC$ problem is the following theorem which follows as a direct consequence of the guarantees of Theorems~\ref{thm:pof-solve},~\ref{thm:lp-violation1},~\ref{thm:lp-violation2},~\ref{thm:egal-violation}, and~\ref{thm:round_viol_th}: 
\begin{theorem}\label{th_full_guarentee}
For any clustering objective, given a bound $U$ on the clustering cost, Algorithm~\ref{alg:fcbc_alg} solves the fair clustering under a bounded cost $\FCBC$ problem at a cost of at most $U'=(2+\alpha)U$ where $\alpha$ is the approximation ratio of the color-blind clustering algorithm. The additive approximation is $|\Colors|(\epsilon+\frac{2}{L(U')})$ for the \GroupUtilitarian{} objective and $\epsilon+\frac{2}{L(U')}$ for the \Egalitarian{} objective. 
\end{theorem}

From the theorem above, it is clear that the additive approximation guarantees we have improve when the cost does not permit small clusters. Indeed, in the absence of outlier points and for reasonable values of $k$, small clusters are unlikely to exist. Further, empirically we verify the smallest cluster size and find that the smallest cluster size is $159$ points (see Section \ref{lb_assump_exp}). See Appendix \ref{lower_bound_appendix} for more discussion.

We now provide our general algorithm for fair clustering under a bounded cost $\FCBC$ which we denote by $\ALGFCBC(U,\obj)$ where we have made explicit reference to the dependence of $\ALGFCBC$ on the given cost upper bound $U$ and the desired \obj{} which could either be the \GroupUtilitarian{},\Egalitarian{}, or \Leximin{} objective. 
 
$\ALGFCBC(U,\obj)$ (see Algorithm \ref{alg:fcbc_alg}) involves two steps, in step \textbf{(1):} we use a color-blind approximation algorithm to find the cluster centers $S$, in step \textbf{(2):} we call the algorithm $\ALGFABC(S,U',\obj)$ for the $\FABC$ problem. It should be noted that we have fed $\ALGFABC$ the set of centers $S$ from step \textbf{(1)}, further the cost upper bound for $\ALGFABC$ is set to $U'=(2+\alpha)U$ while the \obj{} remains unchanged. We further note that $\ALGFABC$ will have the same clustering objective as $\ALGFCBC$, e.g. if $\ALGFCBC$ is given the $k$-median objective so well $\ALGFABC$. 

Clearly, from algorithm $\ALGFCBC$ the $\FCBC$ problem is closely related to the $\FABC$ problem. In fact, we establish the following general theorem for all clustering objectives: $k$-center, $k$-median, and $k$-means that shows that an algorithm which solves the $\FABC$ problem with provable guarantees can be used to solve the $\FCBC$ problem with provable guarantees:
\begin{restatable}{theorem}{theoremPoFSolve}\label{thm:pof-solve}
For any clustering objective and both the \GroupUtilitarian{} and \Egalitarian{} objectives, given an algorithm that solves fair assignment under a bounded cost $\FABC$ with additive approximation $\mu$, the fair clustering under a bounded cost $\FCBC$ problem can be solved with an additive approximation of $\mu$ and at a cost of at most $(2+\alpha)U$, where $\alpha$ is the approximation ratio of the color-blind clustering algorithm.
\end{restatable}

 \begin{algorithm}[h!]
   \caption{:$\ALGFCBC(U,\obj)$}
   \label{alg:fcbc_alg}
\begin{algorithmic}[1]
   \STATE Choose a set of centers $S$ by running a color-blind clustering algorithm of approximation ratio $\alpha$.   
   \STATE Set $U'=(2+\alpha)U$ and call $\ALGFABC(S,U',\obj)$
\end{algorithmic}
\end{algorithm}

\vspace{-0.2cm}
\subsection{Fair Assignment Under a Bounded Cost}
Algorithm block \ref{alg:fabc_alg} shows the steps of our algorithm $\ALGFABC$ for the $\FABC$ objective. In step \textbf{(1):} we search for the optimal proportional violations given the bound on the clustering cost $U$ using LPs. Having found the near-optimal solution, in step \textbf{(2):} we round the possibly fractional solution to a feasible integer solution using a netowk flow algorithm. We note that the details of the search done in step \textbf{(1)} depend on the objective, i.e., \GroupUtilitarian{} or \Egalitarian{}.

\vspace{-0.2cm}
\begin{algorithm}[h!]
  \caption{:$\ALGFABC(S,U,\obj)$}
  \label{alg:fabc_alg}
\begin{algorithmic}[1]
  \STATE Given the \obj{}, search for the optimal proportion violation values $\dpc$ at a cost upper bound of $U$ using the feasibility LPs of (\ref{lpf}).
  \STATE Apply network flow rounding to the LP solution with the optimal value.
  \STATE \textbf{return} the set of centers $S$ and the assignment function $\phi$ (resulting from the rounded LP solution). 
\end{algorithmic}
\end{algorithm}

\vspace{-0.3cm}
We note that in fair assignment under a bounded cost $\FABC$ the set of centers $S$ has already been chosen and the optimization is done only over the assignment $\phi$ of points to centers. We let $x_{ij}$ be a decision variable that equals 1 if point $j$ is assigned to center $i \in S$ and 0 otherwise. Note that the values of $x_{ij}$ are a way to represent the assignment function $\phi$. Regardless of the objective that is being minimized, the following set of constraints must hold:



\vspace{-.2cm}
{\small


\vspace{-0.08cm}
\begin{subequations}\label{lpf}
\begin{ceqn}
  \begin{align}
    &  \sum_{i,j} d^p(i,j) x_{ij} \leq U^p \label{lpf_1} \\ 
    &  \forall j \in \Points: \sum_{i \in S} x_{ij}=1, \quad x_{ij} \in [0,1] \label{lpf_11} \\ 
    &  \forall h \in \Colors: \dpc \in [0,1] \label{lpf_dpc_orig} 
 \end{align}
\end{ceqn}
\begin{ceqn}
\vspace{-0.2cm}
 \begin{align}
     & \forall \pcolor \in \Colors, \forall i \in S: 
    (\beta_h-\dpc) \Big( \sum_{j \in \Points} x_{ij} \Big) 
    \leq \sum_{j \in \Points^{\pcolor}} x_{ij} \label{lpf_2}      
    \leq  (\alpha_h+\dpc) \Big( \sum_{j \in \Points} x_{ij} \Big)  
\end{align}
\end{ceqn}
\end{subequations}


}
\vspace{-.4cm}

For the $k$-center problem, the first constraint (\ref{lpf_1}) is replaced by $\forall j \in \Points:  x_{ij} =0 \text{ if } d(i,j) > \upof$. Note that in the above we have $x_{ij} \in [0,1]$ which is a relaxation of $x_{ij} \in \{0,1\}$, as the latter would result in an intractable mixed-integer program. With our variables being  $x_{ij}$ and $\Delta_h$ it is reasonable to consider a convex optimization approach. That is, we could choose to minimize the objective $\GroupUtilitarian{}$ or the objective $\Egalitarian{}$ with our set of constraints being (\ref{lpf}). Looking at the form of the  $\GroupUtilitarian{}$ and the $\Egalitarian{}$ objectives, it is not difficult to see that they are linear (and therefore convex) in the parameters $x_{ij}$ and $\Delta_h$, however as the following theorem shows, the constraint set (\ref{lpf}) is not convex. In fact, either of the proportion bounds alone in constraint (\ref{lpf_2}) would lead to a non-convex set. The non-convexity of the constraint set implies that the resulting optimization problem would also be non-convex:
\begin{restatable}{theorem}{nonConvex}\label{thm:non_convex}
The constraint set (\ref{lpf}) is not convex. 
\end{restatable}
\vspace{-0.2cm}
Although the constraint set (\ref{lpf}) is not convex, if we fix the values of $\Delta_h$ then we clearly have a simple feasibility LP with variables $x_{ij}$. We therefore take an approach where for a given objective ($\GroupUtilitarian{}$ or $\Egalitarian{}$), we search for the corresponding optimal values of $\Delta_h$ by running the feasibility LP of (\ref{lpf}). Note that with a given set of values for $\Delta_h$, we can obtain the corresponding value for the \GroupUtilitarian{} or \Egalitarian{} objectives and therefore the LP does not need an objective: a feasibility check suffices. Further, since we only use non-trivial values for $\dpc \in [0,1]$,  constraint (\ref{lpf_dpc_orig}) can be omitted. Sections \ref{util_sol} and \ref{egal_sol} detail how we use the feasibility LPs of (\ref{lpf}) to obtain LP solutions that are approximately optimal (having bounded additive approximation from the optimal) for the \GroupUtilitarian{} and \Egalitarian{} objectives, respectively. Since these resulting LP solutions could contain fractional values, i.e., it is possible to have a value $x_{ij} \notin \{0,1\}$, the approximately optimal LP solution would have to be rounded to an integral solution. This rounding further degrades the approximation, but we show that this degradation is not large and can be bounded. The details of the rounding are shown in Section~\ref{rounding_sec}. The search algorithms of Sections~\ref{util_sol} and~\ref{egal_sol}, followed by the rounding of Section~\ref{rounding_sec}, lead to an algorithm for $\FABC$.



\subsubsection{Search Algorithm for the \GroupUtilitarian{} Objective}\label{util_sol}
We are searching for the optimal proportional violations $\Delta^*_h \in [0,1]$ for the \GroupUtilitarian{} objective. The first step we take is to discretize the space by a parameter $\epsilon \in (0,1)$. For convenience, we set $\epsilon=\frac{1}{r}$ where $r \in \mathbb{Z}^{+}$, i.e., $r$ is a positive integer. Accordingly, instead of interacting with the continuous interval $[0,1]$ for the proportional violations, we instead interact with $\eset=\{\epsilon,2\epsilon,\dots, \dots, (\frac{1}{\epsilon}-1){\epsilon},1\}$, with $|\eset|=\frac{1}{\epsilon}$. Therefore, we have a set of $(\frac{1}{\epsilon})^{|\Colors|}$ many possibilities for the proportional violations and we can obtain the optimal solution for the \GroupUtilitarian{} objective through exhaustive search by checking the feasibility of LP (\ref{lpf}) and picking the feasible combination of proportional violations which leads to the smallest value for the \GroupUtilitarian{} objective, i.e., $\GroupUtilitarian =\sum_{\pcolor \in \Colors} \dpc$. 

The above approach would take $O((\frac{1}{\epsilon})^{|\Colors|})$ many LP runs. Therefore, we show a faster search that tries instead $O((\frac{1}{\epsilon})^{|\Colors|-1})$. The key to this speed up comes from the fact that for the two-color case, we only need to evaluate $O(\frac{1}{\epsilon})$ many possibilities.

\begin{restatable}{theorem}{theoremLPViolation}\label{thm:lp-violation1}
For $\FABC$ with \GroupUtilitarian{} objective, we can use  $O\Big(\big(\frac{1}{\epsilon}\big)^{|\Colors|-1}\Big)$--many LP runs to obtain an LP solution with additive approximation $|\Colors|\epsilon$.
\end{restatable}
\vspace{-0.1cm}

Furthermore, for the important two-color case with \emph{symmetric} upper and lower bounds we show a search algorithm that requires only $O\Big(\log{\frac{1}{\epsilon}}\Big)$ LP runs. The two color case with \emph{symmetric} upper and lower bounds is that where the two colors ${\pcolor}_1$ and ${\pcolor}_2$ are present with proportions $r_1$ and $r_2$ in the dataset, and the proportion bounds are set to $\alpha_i=r_i+\lambda_i,\beta_i=r_i-\lambda_i \text{ for } i \in \{1,2\}$ and some valid $\lambda_1, \lambda_2  \in [0,1]$. The key observation for the two-color symmetric case is that the proportion of one color implies the proportion of the other; hence, we can run binary search over the set $\eset$.

\begin{restatable}{theorem}{theoremLPViolationnn}\label{thm:lp-violation2}
For $\FABC$ with two colors, symmetric lower \& upper bounds, and the \GroupUtilitarian{} objective, we can use  $O\Big(\log(\frac{1}{\epsilon})\Big)$--many LP runs to get a solution with an additive approximation of $|\Colors|\epsilon=2\epsilon$.
\end{restatable}

\subsubsection{Search Algorithm for \Egalitarian{} and \Leximin{} Objectives}\label{egal_sol}
For the \Egalitarian{} objective we follow the same discretization step as for the \GroupUtilitarian{} objective. For all colors, their violation $\dpc$ is set to the same value and the optimal solution is found simply by doing binary search over the set $\eset$ by running the feasibility LP (\ref{lpf}). 

\vspace{-0.1cm}
\begin{restatable}{theorem}{theoremEgalViolation}\label{thm:egal-violation}
For $\FABC$ with the \Egalitarian{} objective, we can use $O\Big( \log\Big(\frac{1}{\epsilon}\Big)\Big)$--many LP runs to get a solution with an additive approximation of $\epsilon$. 
\end{restatable}
\vspace{-0.1cm}
We provide a heuristic algorithm for the \Leximin{} objective; a rough sketch follows.  In the first step, it obtains the \Egalitarian{} solution. Then, it proceeds by finding a color that cannot improve beyond the current optimal violation; if more than one color is found, then one of these colors is randomly picked. The algorithm then looks for the optimal violation for the remaining colors, having the violations of the previous colors fixed. These steps are followed until no color can have its proportional violation improved. See Appendix~\ref{app:lex-proof} for the full details.  

\subsubsection{The Rounding Scheme and $\ALGFABC$ Guarantees}\label{rounding_sec}
Having obtained the optimal LP solutions for either the \GroupUtilitarian{} or \Egalitarian{} objectives, we now round the solutions to integral values at a bounded increase to the additive approximation. To do the rounding, we apply the network flow method of \cite{bercea2018cost} (see~\Cref{nf_rounding} for details), although other rounding methods are applicable. Given the LP solution $x_{ij}$ and its associated proportional violations $\dpc$, if we denote the rounded integral solution by $\bar{x}_{ij}$ and $\dpcr$, then network-flow rounding guarantees the following: 
\textbf{(i)} $ \sum_{i,j} d^p(i,j) \bar{x}_{ij} \le   \sum_{i,j} d^p(i,j) x_{ij}$. \textbf{(ii)} $\forall i \in [k]: \floor{\sum_{j \in \Points} x_{ij}} \leq \sum_{j \in \Points} \bar{x}_{ij} \leq \ceil{\sum_{j \in \Points} x_{ij}}$. \textbf{(iii)} $\forall \pcolor \in \Colors,\forall i \in [k]: \floor{\sum_{j \in \colPoints} x_{ij}} \leq \sum_{j \in \colPoints} \bar{x}_{ij} \leq \ceil{\sum_{j \in \colPoints} x_{ij}}$. 
\vspace{-0.1cm}
Property \textbf{(i)} ensures that the clustering objective will not increase beyond the LP value, and thus, provided the LP cost does not exceed the upper bound on the cost $U$, the cost of the rounded assignment will not exceed $U$ as well. Property \textbf{(ii)} guarantees that the total number of points assigned to a cluster will not vary by more than 1 point. Property \textbf{(iii)} guarantees that the total number of points of a given color assigned to a cluster will not vary by more than 1 point. 
We can use the above properties along with with the lower bound on the size of any cluster $L(U)$ to establish the following theorem:


\begin{restatable}{theorem}{theoremRoundViol}\label{thm:round_viol_th}
For the $\FABC$ problem, the rounded solution has cost of at most $U$ and an additive approximation of: (1) $|\Colors|(\epsilon+\frac{2}{L(U)})$ for the \GroupUtilitarian{} objective and (2) $\epsilon+\frac{2}{L(U)}$ for the \Egalitarian{} objective.
\end{restatable}
\vspace{-0.1cm}
Recalling the additive approximation lower bounds of Theorem \ref{LBmain} for the $\FABC$ problem, we see that we obtain a solution for $\FABC$ of cost at most $U$ with near-optimal additive approximation. Specifically, our additive approximations for the \GroupUtilitarian{} and \Egalitarian{} are $\frac{2|\Colors|}{L(U)}$ and $\frac{2}{L(U)}$ compared to their lower bounds of $\frac{|\Colors|}{8L(U)}$ and $\frac{1}{8L(U)}$, respectively.      

\smallskip \textbf{A randomized extension.} We also briefly mention a randomized rounding algorithm's guarantees; the description of this algorithm (which is motivated by an approach of \cite{DBLP:journals/jacm/KumarMPS09}) is detailed in Appendix \ref{app:random_ext}. This algorithm efficiently constructs a random vector $\bar{X}$ with entries $\bar{X}_{i,j}$ in $\{0,1\}$ such that: (a) Properties (ii) and (iii) hold with probability 1, and 
(b) the expected value $E[\bar{X}_{i,j}]$ equals $x_{i,j}$ for all $(i,j)$. This has three consequences: (b1) the fairness guarantee for each cluster and color become \emph{better in expectation}: for all $\pcolor \in \Colors$ and for all $i \in S$: $E[\sum_{j \in \Points^{\pcolor}} \bar{X}_{ij}]$ indeed lies between $(\beta_h-\dpc) \Big( \sum_{j \in \Points} x_{ij} \Big)$ and
$(\alpha_h + \dpc) \Big( \sum_{j \in \Points} x_{ij} \Big)$. (b2) The expected value of the objective function (the left-hand side in (i)) is at most the right-hand side of (i). (b3) Even if we had multiple linear objective functions, they will all be preserved in expectation.

\vspace{-0.3cm}
\section{Fairness Across the Clusters is not Possible}\label{sec:fairness-impossibility}
\vspace{-0.3cm}
It is tempting to modify both the \GroupUtilitarian{} and \Egalitarian{} (or \Leximin{}) objectives to sum across the clusters instead of taking the maximum violation across the clusters. More specifically, we can replace the objectives by the following: $\GroupUtilitarianColor{}$, which equals $\sum\limits_{\pcolor \in \Colors, i\in [k]} \Delta^i_{\pcolor}$, and $\EgalitarianColor$, which equals $\min\max\limits_{\pcolor \in \Colors}  \sum\limits_{i\in [k]} \Delta^i_{\pcolor}$, 
where $\Delta^i_{\pcolor}$ is the violation of color $\pcolor$ in cluster $i$; clearly the previously-considered violation $\dpc$ is $\max\limits_{i \in [k]} \Delta^i_{\pcolor}$. It can be seen that such an objective is more flexible. For example, the maximum violations might occur in a cluster that cannot be improved within the given bound on the clustering cost, while it may be possible to improve it for other clusters. The original \GroupUtilitarian{} and \Egalitarian{} objectives may bring no improvement in such a situation but their above modifications could.
We prove a negative result. Specifically, while we were able to approximate $\FABC$ by small additive values for the original objectives (Theorem~\ref{thm:round_viol_th}), for the new objectives we cannot efficiently approximate the $\FABC$ problems within even relatively-large additive approximations:     
\begin{restatable}{theorem}{theoremAddImpos}\label{thm:add_imposs}
For $\FABC$, the objectives $\GroupUtilitarianColor$ and \EgalitarianColor{} that sum across the clusters cannot be approximated in polynomial time to within an additive approximation of $O(n^\delta)$ where $\delta$ is a constant in $[0,1)$, unless $P=NP$. 
\end{restatable}

\vspace{-0.5cm}
\section{Solving the Problem Optimally for a Large-Enough Cost}\label{poly_for_large_U}
\vspace{-0.3cm}
It seems reasonable to assume that when the upper bound on the cost $U$ is large enough, the problem becomes solvable in polynomial time. It is not difficult to devise such guarantees for some special cases. However, in the theorem below we show that an algorithm with such a guarantee would imply a true approximation\footnote{A true approximation algorithm yields a solution that approximates the optimal objective value with no constraint violation, in contrast to bi-criteria algorithms which have a bounded violation in the constraints.} for fair clustering. Since fair clustering has remained resistant to a true polynomial-time approximation for general metric spaces and arbitrary lower- and upper-  proportion bounds \citep{bohm2020fair,bandyapadhyay2020coresets}, this suggests that the problem is indeed nontrivial. Furthermore, we also show the converse, i.e., a true approximation algorithm for fair clustering would imply an exact algorithm for fair clustering under a bounded cost $\FCBC$.
\begin{restatable}{theorem}{theoremOptLargeBound}\label{thm:opt_large_bound}
Suppose that there is a polynomial time algorithm which can obtain the optimal solution for $\FCBC$ for the upper bound of $U$ if $U \ge \alpha(\mathcal{I}) \OPT_{\textbf{cb}}(\mathcal{I})$ where $\mathcal{I}$ is a specific instance of $\FCBC$ and $\OPT_{\textbf{cb}}(\mathcal{I})$ is the optimal cost of its color-blind clustering. Then we have a true polynomial time approximation algorithm for fair clustering. Further, a true polynomial time $\alpha'({\mathcal{I}})$-approximation algorithm for fair clustering implies that $\FCBC$ can be solved optimally in polynomial time for $U \ge \alpha'({\mathcal{I}}) \OPT_{\FC}(\mathcal{I})$.
\end{restatable}

\vspace{-.8cm}
\section{Experiments}\label{sec:experiments}
\vspace{-.3cm}
We validate our algorithms on datasets from the UCI repository~\citep{gunduz2013uci}. The results here are for $k$-means clustering; additional experiments are in~\Cref{further_exps}.

\noindent
\textbf{Hardware, Software, and Algorithms:}
We only use commodity hardware for all experiments with our programs running on Python 3.6. For the color-blind $k$-means clustering, we use the $k$-means$++$ algorithm \cite{arthurk} which has an approximation ratio of $O(\log k)$. Our LPs are solved using \texttt{CPLEX} ~\cite{manual2016version}. \texttt{Scikit-learn} \citep{varoquaux2011scikit} is called for  subroutines such as $k$-means$++$. The network-flow rounding is handled using \texttt{NetworkX} \citep{hagberg2013networkx}.   

\noindent
\textbf{Datasets:} 
We use all 32,561 entries of the $\adult$  dataset~\citep{kohavi1996scaling}. For the $\cens$ dataset~\citep{meek2002learning}, because of its large size (over 2 million points) we sub-sample the dataset to a range similar to that considered in the fair clustering literature~\citep{chierichetti2017fair,bera2019fair}; specifically we use 20,000 data points. We also use the $\credit$ dataset~\citep{yeh2009comparisons} which has 30,000 points (results are in ~\Cref{further_exps}). For all datasets we use the numeric attributes to assign the coordinates in the space and the distance between any two points is set to the Euclidean distance. 

\vspace{-0.2cm}
\noindent
\textbf{Setting and Measurements:} 
Each color $\pcolor \in \Colors$ has proportion $r_{\pcolor}$, i.e., $r_{\pcolor}=\frac{|\Points^{\pcolor}|}{|\Points|}$. We set the upper and lower bounds for each color to $\alpha_{\pcolor} = (1+\delta) r_{\pcolor}$ and $\beta_{\pcolor} = (1-\delta) r_{\pcolor}$. This means that each cluster should have each color with the same proportion as in the population with a possible deviation of $\delta$. 

We first do color-blind clustering using the $k$-means$++$ algorithm. The clustering cost we obtain from the $k$-means$++$ is a proxy of the lowest possible value of the clustering cost (since the hardness of clustering forbids the calculation of the true optimal value). We gradually increase the upper bound cost from the color-blind cost to higher values and for each choice of the upper bound, we minimize either the \GroupUtilitarian{} or \Leximin{} objectives using our algorithms and record the objective value. For better interpretation, instead of showing the value of the upper bound, we show its ratio to the color-blind clustering, which is the  $\POF$. Further, for all experiments we discretize the space by $\epsilon=\frac{1}{2^7}<0.008$. 

\subsection{$\GroupUtilitarian$ Experiments}

We use the $\adult$ and $\cens$ datasets with self-reported gender (male or female) as the attribute.
We note that both datasets explicitly use categorical labels for this socially-complex concept, and acknowledge that this is reductive~\citep{buolamwini2018gender}.
Figure \ref{util_fig} shows the $\POF$ versus the achieved \GroupUtilitarian{} objective, with $\delta=0.1$. As expected, as the price of fairness increases (higher bound on the cost), we can further minimize the proportional violations. Eventually the \GroupUtilitarian{} objective becomes less than $0.1$ and even very close to zero. We also observe that at a given cost upper bound, we can achieve lower values for the \GroupUtilitarian{} objective when the number of clusters ($k$) is lower.  

\begin{figure}[h!]
\centering
\includegraphics[width=0.8\linewidth]{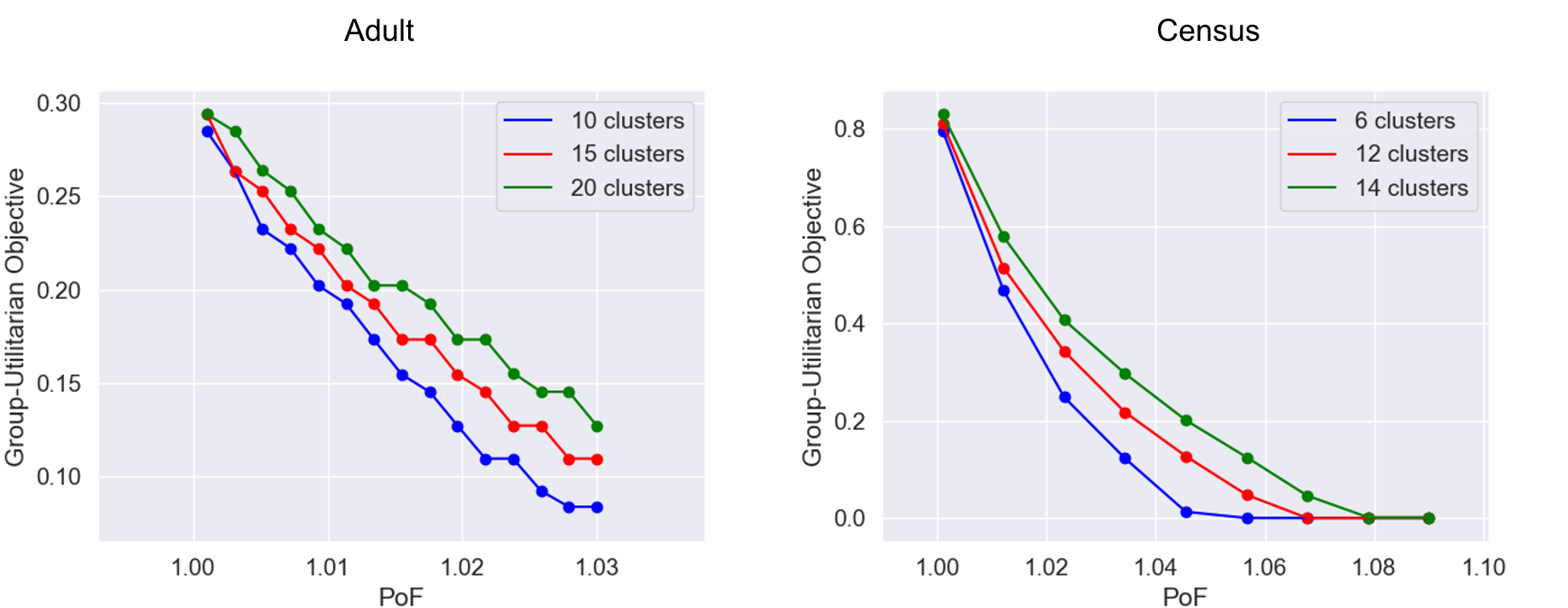}
\caption{$\POF$ vs the $\GroupUtilitarian$ objective for the $\adult$ and $\cens$ datasets.}\label{util_fig}
\end{figure}

\vspace{-0.3cm}
\subsection{$\Egalitarian$ and $\Leximin$ Experiments}
\vspace{-0.1cm}


We again use the $\adult$ and $\cens$ datasets. However, for $\adult$, we set the fairness attribute to race which---in this dataset, and with the same inherent social caveats as the categorization of gender---has 5 groups (colors). For $\cens$, we set the fairness attribute to age where we have three age groups.\footnote{$\cens$ actually has 8 age groups. For better interpretability of the results, we merge nearby groups $\{0,1, 2\}$, $\{3,4,5\}$, and $\{7,8\}$ to form 3 groups.}
We set $\delta=0.05$ and $k=10$ for $\adult$ and $\delta=0.1$ and $k=5$ for $\cens$. 
Figure \ref{lex_fig} shows the results of our algorithm. We notice that for some colors smaller violations are harder to achieve and we need to set the maximum allowable clustering cost to larger values to reduce their violations.

\begin{figure}[h!]
\centering
\includegraphics[width=0.8\linewidth]{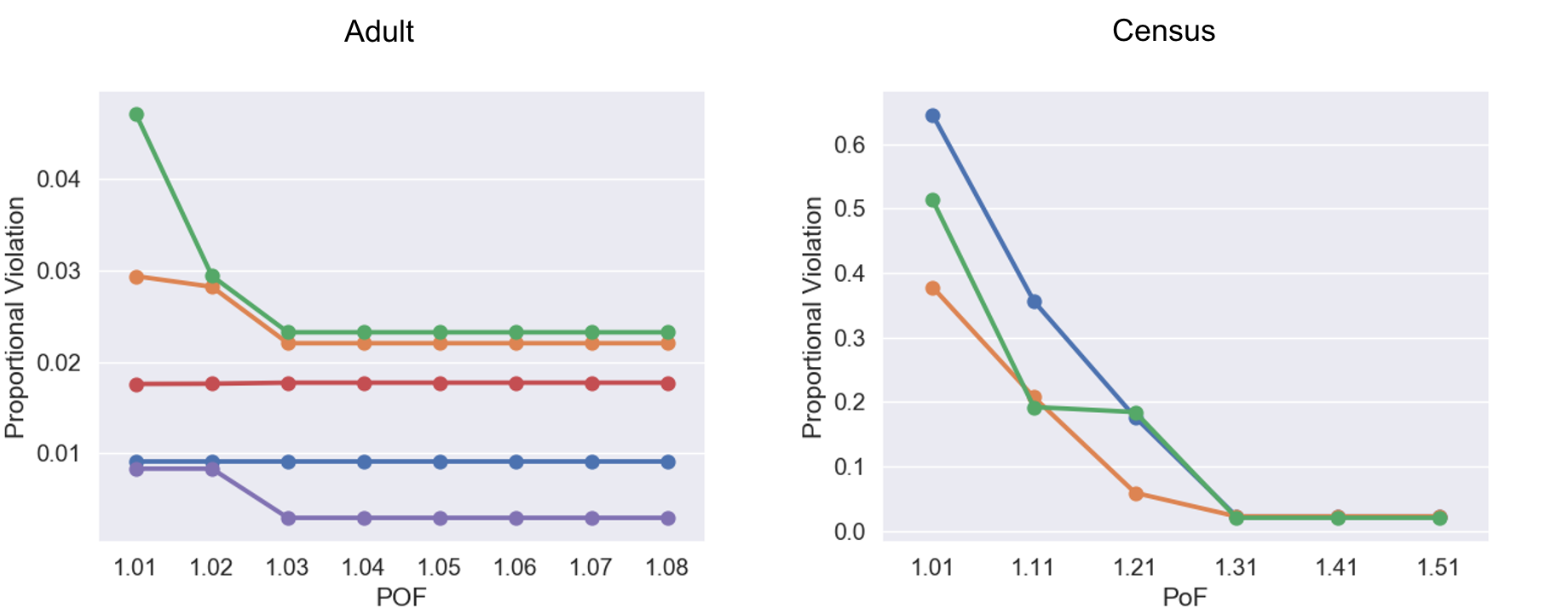}
\caption{$\POF$ versus the proportional violation for different groups (each colored graph is a group) in the $\adult$ and $\cens$ datasets.}\label{lex_fig}
\end{figure}




\vspace{-0.3cm}
\subsection{Checking the Size of the Smallest Cluster}\label{lb_assump_exp} 
\vspace{-0.3cm}
As mentioned in Section \ref{sec:hardness} and Theorem \ref{th_full_guarentee} our approximations are dependent on the size of the smallest cluster in the solution. While it is not tractable to obtain the value of $L(U)$ for a given $U$, we can still empirically check the size of the smallest cluster in the cost bounded clusterings we obtain. We note that, throughout, we do not impose any lower bound on the cluster size in our algorithm. For the above experiments we considered, we find that the minimum cluster size (across all choices of $k$) are as follows: \adult{} (159 points), \cens{} (171 points). The fact that the size of the smallest cluster is large means that we are achieving small (accurate) additive approximations with near-optimal objective values and when we obtain a large objective value it is because of how stringent the cost upper bound is.    

\noindent\textbf{Acknowledgments.}
Seyed Esmaeili and John Dickerson were supported in part by NSF CAREER Award IIS-1846237, NSF D-ISN Award \#2039862, NSF Award CCF-1852352, NIH R01 Award NLM-013039-01, NIST MSE Award \#20126334, DARPA GARD \#HR00112020007, DoD WHS Award \#HQ003420F0035, and a Google Faculty Research award.  Aravind Srinivasan was supported in part by NSF awards CCF-1749864 and CCF-1918749, as well as research awards from Adobe, Amazon, and Google.  We thank the anonymous reviewers for helpful feedback.

\bibliography{refs,references,new_refs}

\nocite{langley00}

\bibliographystyle{abbrv}

\onecolumn
\appendix
\section{Omitted Proofs}\label{app:proofs}
In this section, we provide proofs for theorems and lemmas in the main paper.
We recall \Cref{FA_np_hard}:

\FAnphard* 

\begin{proof}\label{proof_th_hard}
Since fair clustering problems, i.e. fair $k$-(center, median, or means) generalize their NP-hard classical counterparts, i.e. the $k$-(center, median, or means) clustering, it follows that fair clustering problems are also NP-hard. 

The hardness of the fair assignment problem was established by~\cite{bercea2018cost} for $k$-center clustering. Here we show that fair assignment is NP-hard for $k$-median and $k$-means clustering as well. 

First, following Section~4 of \cite{bercea2018cost}, the reduction is from the Exact Cover by 3-Sets (X3C). In Exact Cover by 3-Sets, we have a universal set of elements $\mathcal{U}$ with $|\mathcal{U}|=3r$ where $r$ is a positive integer and a set $\mathcal{F}$ whose elements are subsets of $\mathcal{U}$. The problem is to decide if there exists a set $\mathcal{F'}$ such that $\mathcal{F'} \subseteq \mathcal{F}$ and each element in $\mathcal{U}$ is included exactly once in one set in $\mathcal{F'}$. 

The reduction is done by creating the following graph (see Figure~\ref{fig_FA_reduc} for an example). In the lowest level we have the elements $e$ of the set $\mathcal{U}$ each represented with a blue vertex. In the higher level we have the sets in $\mathcal{F}$ each represented with a blue vertex. We draw edges between vertices in $e \in \mathcal{U}$ and vertices in $F \in \mathcal{F}$ if and only if the element $e \in F$. For set $F$ in $\mathcal{F}$ we add $3$ auxiliary blue vertices which are connected to it through an edge. Finally, we add a set $\mathcal{T}$ of red vertices where $|\mathcal{T}|=\frac{|\mathcal{U}|}{3}=r$ in the highest level where each of those vertices is connected through an edge to every vertex in the set $\mathcal{F}$.  

The distance function puts a cost of zero if the distance is between identical vertices and a cost of one between vertices connected through an edge. For vertices with no edges between them, the distance is the minimum distance found according to this graph by calculating the minimum cost path. This means that the distance between the blue auxiliary vertices and a center which is not their parent center is 3 (the path from the vertex to the associated center to an element in $\mathcal{T}$, then the specified center). 

In fair assignment, the set of centers is already chosen. We choose the set of centers to be the elements of $\mathcal{F}$. Therefore, the number of centers $k=r$. Further, it is clear that this is a two color problem, we set the lower and upper bounds for the red color to $\beta_{\text{red}}=\alpha_{\text{red}}=\frac{1}{4}$. It follows that $\beta_{\text{blue}}=\alpha_{\text{blue}}=\frac{3}{4}$, i.e. the ratio of red to blue vertices is $1:3$. 

\begin{figure}[h!]
\vskip 0.2in
\begin{center}
\centerline{\includegraphics[width=\columnwidth]{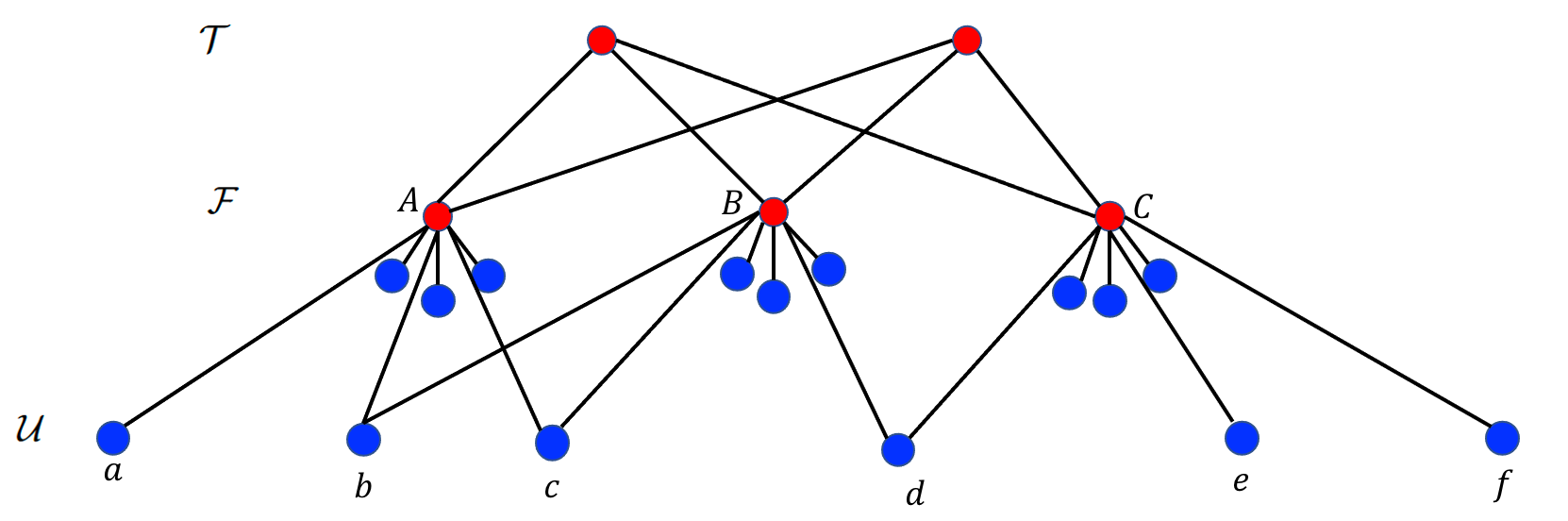}}
\caption{Figure follows the example of \cite{bercea2018cost}. We show the fair assignment resulting graph, from the given Exact Cover by 3-Sets example where we have $\mathcal{U}=\{a,b,c,d,e,f\}$ and $\mathcal{F}=\{A=\{a,b,c\},B=\{b,c,d\},C=\{d,e,f\}\}$. }
\label{fig_FA_reduc}
\end{center}
\vskip -0.2in
\end{figure}

We note the following lemma:
\begin{lemma}\label{lower_bound_np}
Given the constructed graph with the set of centers being $\mathcal{F}$, the minimum clustering cost is lower bounded by $1$ for the $k$-center problem and $n-k$ for the $k$-median and $k$-means. 
\end{lemma}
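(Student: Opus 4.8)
The plan is to prove the three lower bounds by a direct counting argument that does not use the fairness constraints at all: I will show that in \emph{any} assignment of the $n$ points to the centers $\mathcal{F}$ (feasible or not), every point that is not itself a center must pay at least $1$, and then add up these contributions. Since imposing the color balance constraints can only shrink the set of admissible assignments, the optimal \emph{fair} assignment cost is at least as large, and the bounds transfer.

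First I would record the one metric fact that drives everything. Every edge of the constructed graph has weight $1$, and by construction the distance between two vertices is the weight of a shortest path between them, with the only pairs at distance $0$ being pairs of identical vertices. Hence any two \emph{distinct} vertices are at distance at least $1$. Since each center is one of the $\mathcal{F}$-vertices, it follows that every vertex $v\notin\mathcal{F}$ is at distance at least $1$ from every center, and in particular from whichever center it is assigned to. (This can alternatively be verified by a short case split over the element vertices of $\mathcal{U}$, the auxiliary vertices, and the vertices of $\mathcal{T}$, but the generic argument already suffices.)

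Now I would finish by counting. The $k$ centers are $k$ distinct vertices of the instance, so exactly $n-k$ of the $n$ points are non-centers. For the $k$-median objective, fix any assignment $\sigma$; each of the $n-k$ non-center vertices contributes $d(v,\sigma(v))\ge 1$ by the previous step and the $k$ center vertices contribute at least $0$, so the total cost is at least $n-k$. For $k$-means the same argument applies verbatim, since for a non-center vertex $d(v,\sigma(v))^2\ge 1^2=1$. For $k$-center it suffices to exhibit a single non-center point — for instance any $e\in\mathcal{U}$ — whose assignment distance, and hence the maximum assignment distance over all points, is at least $1$.

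I do not expect any genuine obstacle in this lemma; the only things to be careful about are that the bound is taken over \emph{all} assignments (so that adding the fairness constraints cannot decrease the cost) and that the tally of non-center points is exactly $n-k$, which holds because the chosen centers are $k$ distinct points of the instance. The substantive content of the reduction — that the optimum equals \emph{exactly} $n-k$ (resp.\ exactly $1$) precisely when the X3C instance is a yes-instance — lies in the matching upper bound, which is argued separately; this lemma only supplies the easy direction.
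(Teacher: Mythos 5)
Your proof is correct and follows essentially the same route as the paper's: both rest on the observation that any two distinct vertices are at distance at least $1$ (the paper's Fact~\ref{dist_fact}), then split the cost sum into the $k$ centers (contributing $0$) and the $n-k$ non-centers (each contributing at least $1^p$), with the $k$-center bound following from the existence of at least one non-center point. The only difference is presentational — you make explicit that the bound holds for arbitrary assignments and hence transfers to the fair ones, which the paper leaves implicit.
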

\begin{proof}
First we note the following fact:
\begin{fact}\label{dist_fact}
$\forall u,v \in G$ where $u$ and $v$ are distinct, we have that $d(u,u)=d(v,v)=0$ and $d(u,v) \ge 1$ if $u\neq v$.
\end{fact}

\paragraph{$k$-center:} Since the number of points is greater than the number of centers it follows that there exists a point $u$ which will be assigned to another vertex $v$ and therefore $d(u,v)\ge 1$.

\paragraph{$k$-median and $k$-means:} Denoting the assignment function (assigning vertices to centers) by $\phi$, the set of centers by $S$, and the integer $p$ where $p=1$ for the $k$-median and $p=2$ for the $k$-means, we have that:
\begin{align*}
    \sum_{v \in G} d^p(v,\phi(v)) & = \sum_{v \in S} d^p(v,\phi(v)) + \sum_{v \in G-S} d^p(v,\phi(v)) \\
    & \ge 0 + \sum_{v \in G-S} d^p(v,\phi(v)) \\
    & \ge 0 + \sum_{v \in G-S} 1^p \\
    & \ge \sum_{v \in G-S} 1 \\
    & = n-k
\end{align*}
where the above follows from Fact~\ref{dist_fact}. 
\end{proof}

Therefore, we have:
\begin{lemma}
If there exists an exact cover, then the fair assignment problem can have a $1:3$ red to blue vertex ratio and at a cost of 1 for the $k$-center and a cost of $n-k$ for the $k$-median and $k$-means. 
\end{lemma}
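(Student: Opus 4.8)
The plan is to take an exact cover $\mathcal{F}'\subseteq\mathcal{F}$ and convert it into an explicit fair assignment whose cost equals the lower bound of \Cref{lower_bound_np}; since that bound is then met with equality, the stated cost is both achievable and optimal. Note first that, because each set has three elements and every element of $\mathcal{U}$ is covered exactly once, $|\mathcal{F}'|=|\mathcal{U}|/3=r$, the number of centers.

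Next I would define the assignment $\phi$. Send every center vertex of $\mathcal{F}$ to itself, at cost $0$ by \Cref{dist_fact}. Send each element $e\in\mathcal{U}$ to the unique set $F\in\mathcal{F}'$ with $e\in F$; this is an edge of the graph, so the cost is $1$. Send each auxiliary vertex to its parent center, which is the only center at distance $1$ from it (every other center is at distance $3$), again at cost $1$. Finally, distribute the $r$ red vertices of $\mathcal{T}$ — each of which is adjacent to every center — among the centers, and route the non-cover set-vertices of $\mathcal{F}\setminus\mathcal{F}'$ along element edges into suitable clusters, choosing these last assignments so that the colour ratio is respected; all of these assignments also cost $1$.

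Then I would verify the two required properties. For the cost: every vertex outside the center set is assigned at distance exactly $1$, so the $k$-center objective equals $1$ and the $k$-median / $k$-means objective equals $\sum_{v\notin S}1^{p}=n-k$; these coincide with the lower bounds of \Cref{lower_bound_np}, hence are optimal. For fairness: cluster by cluster, I would tally the blue vertices assigned to a center (itself, its three auxiliary vertices, and whatever element- and set-vertices were routed there) against the red vertices of $\mathcal{T}$ routed there, and check that the red fraction falls in $[\beta_{\text{red}},\alpha_{\text{red}}]=[1/4,1/4]$, i.e.\ that every cluster has red-to-blue ratio $1:3$ (up to the rounding built into the fairness bounds).

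The step I expect to be the main obstacle is this last fairness accounting together with the distribution of $\mathcal{T}$ in the construction of $\phi$. The three auxiliary vertices per set and the red set $\mathcal{T}$ are introduced precisely so that, when an exact cover exists, every cluster can be padded to the exact $1:3$ proportion; the delicate point is to show that the $r$ available red vertices can be spread so that \emph{all} clusters satisfy the bound at once. This is exactly where the exact-cover hypothesis is used — it forces each element to be served by a single center with the right multiplicity, which is the combinatorial condition under which the per-cluster counts balance simultaneously. Everything else (the shortest-path values of $1$ along edges, $3$ from an auxiliary vertex to a non-parent center, and $\geq1$ otherwise by \Cref{dist_fact}, and the two objective computations) is routine given the construction of the metric.
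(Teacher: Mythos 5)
Your overall plan is the same as the paper's: translate the exact cover into an explicit assignment, observe that its cost meets the lower bound of Lemma~\ref{lower_bound_np}, and then check the per-cluster proportions. However, there are two concrete problems. The first is the step ``route the non-cover set-vertices of $\mathcal{F}\setminus\mathcal{F}'$ along element edges into suitable clusters.'' This contradicts your own earlier stipulation that every center is sent to itself, and it is incompatible with the cost bound: in this reduction the \emph{entire} collection $\mathcal{F}$ is the center set, so a vertex $F\in\mathcal{F}\setminus\mathcal{F}'$ is itself a center, and assigning it to any other center costs at least $2$ (element vertices are not centers, and the nearest distinct center is at graph distance $2$), which pushes the $k$-center cost above $1$ and the $k$-median/means cost above $n-k$. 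In the paper's assignment the unchosen sets simply remain as their own clusters, containing only themselves and their three auxiliary vertices; nothing is routed away from them.

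The second and more serious issue is that you defer the fairness accounting --- ``the delicate point is to show that the $r$ available red vertices can be spread so that all clusters satisfy the bound at once'' --- without actually carrying it out. That accounting \emph{is} the content of the lemma, and asserting that a suitable distribution of $\mathcal{T}$ exists is assuming what must be shown. The paper commits to an explicit allocation: each chosen set $F\in\mathcal{F}'$ receives its three covered elements, its three auxiliaries, and exactly one vertex of $\mathcal{T}$, while each unchosen set receives only its three auxiliaries. The exact-cover hypothesis is used precisely to make this well defined (every element of $\mathcal{U}$ lies in exactly one chosen set) and to match the $|\mathcal{T}|=r$ red vertices one-to-one with the $|\mathcal{F}'|=r$ chosen sets. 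Your proof needs to state this allocation and then verify the resulting per-cluster color counts; without that, the argument establishes the cost claim but not the proportion claim.
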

\begin{proof}
We translate the exact cover by 3-sets solution to the constructed graph. Each chosen set in exact cover $\mathcal{F'}$ will have the 3 corresponding elements from $\mathcal{U}$ assigned to its center, along with its 3 auxiliary vertices and 1 vertex from $\mathcal{T}$. If the set was not chosen in the exact cover, then it will have only its 3 auxiliary vertices assigned to it.

This clearly matches the lower bound on the cost function from lemma (\ref{lower_bound_np}) for each clustering objective. Further, it is also clear that the $1:3$ red to blue color ratio is preserved in each cluster. 
\end{proof}

\begin{lemma}
If there exists a fair assignment solution with $1:3$ red to blue proportion and whose cost is $1$ for the $k$-center and $(n-k)$ for the $k$-median and $k$-means, there exists a solution to the exact cover by 3-sets problem.  
\end{lemma}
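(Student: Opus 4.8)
The plan is to run the reduction of the previous lemma backwards: from a minimum-cost fair assignment $\phi$ I read off an exact cover $\mathcal{F}'\subseteq\mathcal{F}$. The first step is to use the cost bound to pin down the shape of $\phi$. For $k$-median and $k$-means, attaining cost exactly $n-k$ forces every inequality in the proof of Lemma~\ref{lower_bound_np} to be tight, so every center is assigned to itself and every non-center vertex $v$ has $d(v,\phi(v))=1$; by the distance rule of the construction this is the same as saying $v$ is joined to $\phi(v)$ by an edge, since any two distinct non-adjacent vertices are at distance at least $2$. For $k$-center, cost $1$ means $d(v,\phi(v))\le 1$ for every $v$, and by Fact~\ref{dist_fact} this again forces every non-center vertex to be assigned to an adjacent center. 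Thus in all three cases $\phi$ respects the edge structure of the graph, and the three objectives reduce to a single combinatorial scenario.

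The second step reads off which edge-respecting assignments are possible and then applies fairness. The three auxiliary vertices attached to a set $F$ have $F$ as their only neighbouring center, so they all lie in $F$'s cluster; each element vertex $e\in\mathcal{U}$ is adjacent only to the sets that contain it, so $\phi(e)=F$ for some $F\ni e$; and each red vertex of $\mathcal{T}$ can be assigned to any center. Writing $a_F$ for the number of element vertices assigned to $F$, the blue vertices in $F$'s cluster are exactly its three auxiliary vertices together with those $a_F$ elements, so the cluster has $3+a_F$ blue vertices, and the exact $1:3$ fairness requirement forces $3+a_F$ to be divisible by $3$; since $F$ contains only three elements, $a_F\le 3$, hence $a_F\in\{0,3\}$. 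Setting $\mathcal{F}'=\{F\in\mathcal{F}:a_F=3\}$, every member of $\mathcal{F}'$ receives all three of its own elements, so distinct members are disjoint (a shared element could be assigned to only one of them), and every element of $\mathcal{U}$ is assigned to some member of $\mathcal{F}'$ (the set receiving it has $a_F\ge 1$, hence $a_F=3$); therefore $\mathcal{F}'$ partitions $\mathcal{U}$ into $3$-sets and is an exact cover, with $|\mathcal{F}'|=r$.

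The step I expect to be the real obstacle is the fairness argument: one has to rule out meeting the $1:3$ ratio with a lopsided cluster --- many element vertices balanced by many red vertices --- and show that no value of $a_F$ other than $0$ or $3$ survives. The auxiliary vertices are exactly what supply this rigidity: they give every cluster a fixed nonzero blue baseline and make its blue count congruent to $a_F$ modulo $3$, and combined with $a_F\le 3$ this closes the argument. A smaller point to handle carefully is that everything should go through uniformly for $k$-center, $k$-median and $k$-means, but the first step already collapses all three to the same edge-respecting assignment, after which the remaining reasoning is identical for all objectives.
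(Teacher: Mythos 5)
Your overall strategy is the same as the paper's: the cost bound forces an edge-respecting assignment (auxiliary vertices to their parent center, each element of $\mathcal{U}$ to a set containing it, the vertices of $\mathcal{T}$ free to go anywhere), and the fairness constraint is then supposed to force each set-center to absorb either all three of its elements or none, from which the exact cover is read off. Your first step is actually argued more carefully than in the paper (tightness of every inequality in Lemma~\ref{lower_bound_np}, the distance-$\ge 2$ observation for non-adjacent vertices), and your final decoding step --- disjointness of the members of $\mathcal{F}'$ and coverage of $\mathcal{U}$ --- is correct given the dichotomy $a_F\in\{0,3\}$.

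The gap is in the counting that is supposed to establish that dichotomy. First, the center $F$ is itself a blue point of its own cluster --- the paper's Lemma~\ref{main_lb_lemma} explicitly counts the minimal cluster as the center together with its three auxiliaries, i.e.\ size $4$ --- so the blue count of $F$'s cluster is $4+a_F$, not $3+a_F$; requiring that quantity to be a multiple of $3$ yields $a_F\equiv 2\pmod 3$, and the dichotomy collapses. Second, even granting your count, divisibility of the blue total by $3$ is necessary but not sufficient for an exact $1{:}3$ ratio: you must also exhibit the matching number of red vertices in each cluster, and on your accounting the budget does not close --- a cluster with $a_F=3$ (six blues) needs two reds and a cluster with $a_F=0$ (three blues) needs one, for a total of $2r+(|\mathcal{F}|-r)$ reds against only $|\mathcal{T}|=r$ available. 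The paper's own (terse) argument does not go through divisibility of the blue count at all; it couples the two colors directly, arguing that a center taking $3$ elements must take exactly one vertex of $\mathcal{T}$ to restore the proportion while a center taking no elements must take none. Your proof needs an argument of that shape --- one that tracks the red side of the ratio and the baseline blue mass of each gadget simultaneously --- rather than a mod-$3$ condition on the blues alone.
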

\begin{proof}
The costs of 1 and $(n-k)$ for the $k$-center and $k$-median/mean respectively can only be achieved by assigning elements $e \in \mathcal{U}$ to a center that they have an edge between. Similarly, all of the blue auxiliary vertices have to be assigned to their parent. Further to achieve the $1:3$ red to blue ratio, a center will either choose 3 elements from $\mathcal{U}$ and therefore has to choose an element from $\mathcal{T}$ to satisfy the proportion. Or a center will not choose any element from $\mathcal{U}$ and in that case it would not need to pick an element from $\mathcal{T}$ to satisfy the proportion.
\end{proof}

\end{proof}

Here, we recall \Cref{pof_np_hard}:

\theoremPoFNPHard*
\begin{proof}
The hardness of fair clustering under a bounded cost $\FCBC$ simply follows by setting the upper bound to $U=\OPT_{\FC}$ where $\OPT_{\FC}$ is the optimal value of fair clustering $\FC$. An optimal solution to fair clustering would achieve the optimal value of 0 for all possible fairness objectives of $\FCBC$ and would have a cost $\OPT_{\FC} \leq U$. 

Conversely, an optimal solution for $\FCBC$ would have a proportional violation of zero for all colors (therefore it is fair). Moreover, its cost would not exceed $U=\OPT_{\FC}$. Therefore, it is an optimal solution for fair clustering. 

By the above, a solution is optimal for a fair clustering if and only if it is an optimal solution to the corresponding $\FCBC$ instance with $U=\OPT_{\FC}$. It follows that since fair clustering is NP-hard from theorem (\ref{FA_np_hard}), that fair clustering under a bounded cost $\FCBC$ is also NP-hard.

In a similar manner, by setting $U=\OPT_{\FA}$ the hardness of fair assignment under a bounded cost $\FABC$ can be established from the hardness of fair assignment. 
\end{proof}

Here, we recall \Cref{LBmain}:
\theoremLBmain*
\begin{proof}
We note that our derivation uses the reduction from X3C shown in the proof of theorem (\ref{FA_np_hard}) and the resulting graph shown in figure (\ref{fig_FA_reduc}). We start by deriving a collection of useful lemmas:
\begin{lemma}\label{main_lb_lemma}
If $U=1$ for the $k$-center objective or $U=n-k$ for the $k$-median and $k$-means objectives, then $L(U)=4$ for all objectives. Further the only set of centers that can lead to a cost not exceeding $U$ is $S=\mathcal{F}$. 
\end{lemma}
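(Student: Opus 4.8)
The plan is to argue entirely inside the X3C gadget graph $G$ of \Cref{FA_np_hard} (Figure~\ref{fig_FA_reduc}), leaning on \Cref{lower_bound_np} and Fact~\ref{dist_fact}. The key point is that the threshold $U$ — that is, $U=1$ for $k$-center and $U=n-k$ for $k$-median/$k$-means — is chosen precisely so that the clusterings of cost at most $U$ (i) must use the centers $S=\mathcal{F}$ and (ii) form the \emph{same} family of clusterings for all three objectives; once that is in place, the value $L(U)=4$ comes out of a single finite check on the gadget, the same check for each objective.

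\textbf{Step 1: the only cost-feasible center set is $\mathcal{F}$.} The ``if'' direction is immediate: sending each non-center vertex to an adjacent center — each element of $\mathcal{U}$ to a set containing it, each auxiliary vertex to its parent, each vertex of $\mathcal{T}$ to an arbitrary set — places every non-center at distance exactly $1$, so by \Cref{lower_bound_np} this has cost exactly $U$ and $\mathcal{F}$ is cost-feasible. Conversely, suppose $|S|=k=|\mathcal{F}|$ but $S\neq\mathcal{F}$, and set $a=|\mathcal{F}\setminus S|\ge 1$, so $|S\setminus\mathcal{F}|=a$. The $a$ omitted sets carry $3a$ auxiliary vertices, none in $\mathcal{F}$, so at most $a$ of them can be chosen into $S$; hence at least $2a\ge 2$ of these auxiliary vertices are non-centers, and each has its unique neighbor (its parent set) outside $S$, so lies at distance $\ge 2$ from $S$. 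For $k$-center this forces cost $\ge 2>1$; for $k$-median/$k$-means, the $n-k$ non-centers each pay $\ge 1$ while these $\ge 2$ stranded vertices each pay $\ge 2^{p}>1$, so the cost strictly exceeds $n-k$. Either way $S$ is not cost-feasible. (This uses only the cost bound, not fairness.)

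\textbf{Step 2: $L(U)=4$, uniformly over objectives.} Because all distances in $G$ are positive integers, a cost of at most $U$ forces not only $S=\mathcal{F}$ (Step 1) but also that \emph{every} non-center is assigned to an adjacent center: for $k$-median/$k$-means, $\sum_v d^{p}(v,\phi(v))=n-k$ is a sum of $n-k$ terms each $\ge 1$, hence each equals $1$; for $k$-center, $\max_v d(v,\phi(v))\le 1$ says the same. The only remaining freedom — which containing set an element picks, and which cluster a $\mathcal{T}$-vertex joins — is exactly the X3C decision, and this feasible region is literally identical for $k$-center, $k$-median, and $k$-means. Therefore the relevant bounded-cost fair optimum, $L(U)$, is one and the same number for all three; and a direct inspection of the gadget (the three auxiliary vertices pin three blue points onto each set, and the $\mathcal{T}$-vertices are the only red points, so the $1{:}3$ proportion can be met in a non-empty cluster exactly as the elements can be exactly covered) evaluates it to $4$.

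The step I expect to demand the most care is the converse in Step 1: an adversarial $S$ may try to rescue a stranded auxiliary vertex by spending one of its few non-$\mathcal{F}$ slots on that very vertex, or on a $\mathcal{T}$-vertex adjacent to its parent, so the slot count must be run against \emph{all} omitted sets at once to guarantee that at least one vertex is left at distance $\ge 2$ from every center. After that, Step 2 is bookkeeping, and the cross-objective uniformity is essentially automatic, since $U$ was chosen exactly to make the cost-feasible region objective-independent.
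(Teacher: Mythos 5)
Your proof is correct, and the interesting part — the converse of Step 1 — takes a genuinely different route from the paper. The paper argues iteratively: if one set $f\in\mathcal{F}$ is omitted from $S'$, its three auxiliary vertices must each become centers (else they sit at distance $\ge 2$ from every center), which forces at least two more sets of $\mathcal{F}$ to be omitted, and repeating this cascade yields $|S'-\mathcal{F}|\ge 3k$, a contradiction. You replace this recursion with a single global count: with $a=|\mathcal{F}\setminus S|$ omitted sets there are $3a$ stranded auxiliaries but only $|S\setminus\mathcal{F}|\le a$ non-$\mathcal{F}$ slots to rescue them, so at least $2a\ge 2$ non-center vertices remain at distance $\ge 2$, which already breaks the bound $U$ for every objective ($\max\ge 2$ for $k$-center; $(n-k-2)+2\cdot 2^{p}>n-k$ for $k$-median/means). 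Your version is shorter, avoids the somewhat informally justified "invoke the argument again" step, and — as you note — correctly accounts for an adversary spending its non-$\mathcal{F}$ slots on the stranded vertices themselves, which the paper's cascade handles only implicitly. Step 2 then coincides with the paper's: cost $\le U$ pins every auxiliary vertex to its parent, so every cluster contains its center plus three auxiliaries (size $\ge 4$), and a size-$4$ cluster is realizable, giving $L(U)=4$ identically for all three objectives. One small caution: $L(U)$ here is purely the minimum cluster size among assignments of cost at most $U$ and does not involve the fairness constraints, so your parenthetical about meeting the $1{:}3$ proportion is unnecessary (and the witness size-$4$ cluster, a set with its three auxiliaries, is in fact all blue); dropping that remark leaves the argument clean.
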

\begin{proof}
First it is clear that if we choose the set $\mathcal{F}$ to be the centers, i.e. $S=\mathcal{F}$, then if we route each point to one of its closest centers in $\mathcal{F}$, then we can have for the $k$-center we would have a cost of $1$ since every point in the graph is at most a distance $1$ from a point in $\mathcal{F}$. Further, for the $k$-median and $k$-means objectives, the points $\mathcal{F}$ would be routed to themselves and every other point would be routed to one of its closest centers in $\mathcal{F}$ which is at a distance of $1$, this leads to a cost of $(0)k+(1)(n-k)=n-k$, therefore choosing the $\mathcal{F}$ as the set of centers we can indeed satisfy the upper bound $U$ for all objectives. 

Now, consider another set of centers $S'$ such that $\exists i \in S'$ and $i \notin \mathcal{F}$, i.e. we have at least one center not from $\mathcal{F}$. Let $f$ be the point in  $\mathcal{F}$ not selected in $S'$. For the $k$-center objective with $U=1$, it follows that the blue auxiliary points of $f$ have to be made as centers since every other point is at least a distance of $2$ away from them, but each auxiliary point of $f$ is made a center, then it follows that $|S'-\mathcal{F}| \ge 3$, i.e. at least two more points of $\mathcal{F}$ have not be selected as centers. We can invoke the argument again on the new auxiliary points to conclude that $|S'-\mathcal{F}| \ge 9$. Invoking the argument again, we will see get that $|S'-\mathcal{F}| \ge 3k$ which is infeasible since $|S'-\mathcal{F}| \leq |S'|\leq k$. Therefore, for the $k$-center with $U=1$, we must have $S=\mathcal{F}$. Now having proven that $S=\mathcal{F}$ and since $U=1$, it follows that the smallest cluster size is $4$ formed by mapping the center in $S=\mathcal{F}$ to itself along with its auxiliary points, i.e. $L(U)=4$ for the $k$-center.

For the $k$-median and $k$-means objectives with $U=n-k$, similiar to the $k$-center it is clear that every point which has not been selected as a center must have a center at a distance of at most $1$ away. If we exclude one point $f \in \mathcal{F}$ from the set of centers, then its auxiliary points will each have to become centers to satisfy the upper bound cost of $U=n-k$, but this would mean that there are at least 2 more points in $\mathcal{F}$ that have been excluded. Following an argument similar to that of the $k$-center, we will have that the set of required centers would be at least $3k$ which is a contradiction. Therefore, the only possible choice of centers is $S=\mathcal{F}$. It follows as well that the smallest cluster size if $4$ formed by mapping the center in $S=\mathcal{F}$ to itself along with its auxiliary points, i.e. $L(U)=4$ for the $k$-median and $k$-means objectives.  
\end{proof}

Further, we define $\Delta^i_{\text{red}}$ and $\Delta^i_{\text{blue}}$ as the red and blue violations in the $i^\text{th}$ cluster, respectively. Then we have the following lemma
\begin{lemma}\label{helper_lemma_1}
For the two color case of the above reduction, $\Delta^i_{\text{red}}=\Delta^i_{\text{blue}}$ and $\Delta_{\text{red}}=\Delta_{\text{blue}}$. 
\end{lemma}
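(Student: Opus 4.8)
The plan is to exploit the fact that the reduction of \Cref{FA_np_hard} produces a genuine two-colour instance: every vertex of $G$ is coloured either blue (elements of $\mathcal{U}$, the sets in $\mathcal{F}$, and the auxiliary vertices) or red (the vertices in $\mathcal{T}$), so in each cluster the red count and the blue count sum to the cluster size. Combined with the "complementarity" of the prescribed bounds, namely $\alpha_{\text{red}}+\alpha_{\text{blue}} = \beta_{\text{red}}+\beta_{\text{blue}} = \tfrac14+\tfrac34 = 1$ (and hence also $\alpha_{\text{red}}+\beta_{\text{blue}} = \beta_{\text{red}}+\alpha_{\text{blue}} = 1$), the per-cluster deviation of the red proportion from its feasible interval coincides, up to sign, with that of the blue proportion, which yields $\Delta^i_{\text{red}} = \Delta^i_{\text{blue}}$; the aggregate equality then follows because $\Delta_{c}$ is a symmetric aggregate (max, or sum, depending on the definition used in the main text) of the per-cluster violations.

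Concretely, I would fix a clustering with clusters $C_1,\dots,C_k$, write $r_i$ and $b_i$ for the number of red and blue vertices in $C_i$ so that $r_i+b_i=|C_i|$, and set $p^i_{\text{red}} = r_i/|C_i|$, $p^i_{\text{blue}} = b_i/|C_i|$, so that $p^i_{\text{red}}+p^i_{\text{blue}}=1$. Recalling that $\Delta^i_c$ is the distance from $p^i_c$ to $[\alpha_c,\beta_c]$ (suitably scaled by $|C_i|$ if the violation is measured in counts rather than proportions), I would verify that $p^i_{\text{blue}}$ exceeds $\beta_{\text{blue}}$ by exactly the amount $p^i_{\text{red}}$ falls short of $\alpha_{\text{red}}$, using $\beta_{\text{blue}} = 1-\alpha_{\text{red}}$, and symmetrically that $p^i_{\text{blue}}$ falls short of $\alpha_{\text{blue}}$ by exactly the amount $p^i_{\text{red}}$ exceeds $\beta_{\text{red}}$, using $\alpha_{\text{blue}} = 1-\beta_{\text{red}}$. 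Hence the one-sided violations match on both sides of the interval, giving $\Delta^i_{\text{red}} = \Delta^i_{\text{blue}}$ for every $i$. For the global claim I would then substitute this equality into the aggregation rule: if $\Delta_c = \max_i \Delta^i_c$ then $\Delta_{\text{red}} = \max_i \Delta^i_{\text{red}} = \max_i \Delta^i_{\text{blue}} = \Delta_{\text{blue}}$, and the same substitution works verbatim for a sum or any other symmetric aggregate.

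I do not anticipate a substantive obstacle here; the only care needed is bookkeeping around the definition of violation—whether it is phrased in terms of proportions or integer counts, and whether the feasible set is treated as the single point $\tfrac14$ (since $\alpha_{\text{red}}=\beta_{\text{red}}$) or as a genuine interval—and making sure the complementarity relation is applied to the correct endpoint of each colour's interval.
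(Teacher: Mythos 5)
Your proposal is correct and takes essentially the same route as the paper: the paper's proof likewise writes $\Delta^i_{\text{red}} = |p^i_{\text{red}} - \beta_{\text{red}}| = |(1-p^i_{\text{blue}}) - (1-\beta_{\text{blue}})| = \Delta^i_{\text{blue}}$ using the complementarity $p^i_{\text{red}}+p^i_{\text{blue}}=1$ together with the complementarity of the (single-point) target proportions, and then passes to the maximum over clusters. Your extra bookkeeping for a genuine interval $[\beta_c,\alpha_c]$ is harmless but unnecessary here since $\alpha_c=\beta_c$ in this reduction.
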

\begin{proof}
for cluster $i$, consider the red and blue violations $\Delta^i_{\text{red}},\Delta^i_{\text{blue}}$ at that cluster, then we have:
\begin{align*}
    \Delta^i_{\text{red}} = |p^i_{\text{red}}-\frac{1}{3}| = |(1-p^i_{\text{blue}})-(1-\frac{2}{3})| = |\frac{2}{3}-p^i_{\text{blue}}|= \Delta^i_{\text{blue}}
\end{align*}
It is clear then that $\Delta_{\text{red}}=\max\limits_{i \in [k]}\Delta^i_{\text{red}}=\max\limits_{i \in [k]}\Delta^i_{\text{blue}}=\Delta_{\text{blue}}$
\end{proof}
The following lemma follows immediately from the above:
\begin{lemma}\label{helper_lemma_2}
For the two color case of the above reduction $\GroupUtilitarian{} = 2 \Egalitarian{}$. 
\end{lemma}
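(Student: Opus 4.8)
The plan is to unwind the definitions of the two aggregate fairness objectives in the two‑color setting and then feed in Lemma~\ref{helper_lemma_1} essentially verbatim. Recall that the Egalitarian objective is the worst proportional violation over the color classes, $\Egalitarian{} = \max(\Delta_{\text{red}}, \Delta_{\text{blue}})$, whereas the Group Utilitarian objective sums the per‑color violations, $\GroupUtilitarian{} = \Delta_{\text{red}} + \Delta_{\text{blue}}$. (The stated factor of $2$ rather than $1$ is what pins down that the Group Utilitarian aggregate is the sum and not the average of the color violations.)

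First I would invoke Lemma~\ref{helper_lemma_1}, which asserts that for every assignment arising in the X3C reduction of Lemma~\ref{lower_bound_np} one has the cluster‑wise identity $\Delta^i_{\text{red}} = \Delta^i_{\text{blue}}$, and hence, taking the maximum over $i \in [k]$, the aggregated identity $\Delta_{\text{red}} = \Delta_{\text{blue}}$. Substituting $\Delta_{\text{blue}} = \Delta_{\text{red}}$ into the definition of the Egalitarian objective collapses the maximum to a single term, $\Egalitarian{} = \Delta_{\text{red}}$, and substituting the same identity into the Group Utilitarian objective gives $\GroupUtilitarian{} = \Delta_{\text{red}} + \Delta_{\text{red}} = 2\Delta_{\text{red}}$.

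Chaining the two computations then yields $\GroupUtilitarian{} = 2\Delta_{\text{red}} = 2\,\Egalitarian{}$, which is the claim. There is no genuine obstacle here: the lemma is an immediate corollary of Lemma~\ref{helper_lemma_1}, and the only thing to be careful about is that the equality of violations is used at the level of the aggregated quantities $\Delta_{\text{red}}, \Delta_{\text{blue}}$ (needed for the $\max$ in the Egalitarian objective) — but Lemma~\ref{helper_lemma_1} already supplies exactly that aggregated equality alongside the per‑cluster one, so the argument closes in two lines.
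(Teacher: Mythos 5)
Your proposal is correct and follows essentially the same route as the paper: the paper's proof is the one-line chain $\GroupUtilitarian{} = \Delta_{\text{red}} + \Delta_{\text{blue}} = 2\Delta_{\text{red}} = 2\Egalitarian{}$, using the aggregated equality $\Delta_{\text{red}} = \Delta_{\text{blue}}$ from Lemma~\ref{helper_lemma_1} exactly as you do. Your added remark that the $\max$ in the Egalitarian objective collapses to $\Delta_{\text{red}}$ is the only implicit step in the paper's version, and you fill it in correctly.
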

\begin{proof}
$\GroupUtilitarian{} = \Delta_{\text{red}} + \Delta_{\text{blue}} = 2 \Delta_{\text{red}} = 2 \Egalitarian{}$.
\end{proof}
We also note the following lemma:
\begin{lemma}\label{helper_lemma_3}
For a given cluster $i$ with set of points $C_i$, if the set of red points in the cluster $C^{\text{red}}_i$ satisfy $\Delta^i_{\text{red}}=|\frac{|C^{\text{red}}_i|}{|C_i|}-\frac{1}{4}| < \frac{1}{4|C_i|}$, then cluster $i$ has no violation.
\end{lemma}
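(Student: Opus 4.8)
The plan is to exploit the integrality of the two counts $|C_i^{\text{red}}|$ and $|C_i|$. Writing the red violation over a common denominator,
\[
\Delta^i_{\text{red}} \;=\; \left|\frac{|C_i^{\text{red}}|}{|C_i|} - \frac14\right| \;=\; \frac{\bigl|\,4|C_i^{\text{red}}| - |C_i|\,\bigr|}{4|C_i|},
\]
the key observation is that the numerator $\bigl|\,4|C_i^{\text{red}}| - |C_i|\,\bigr|$ is a non-negative integer, since it is the absolute value of a difference of integers.

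Next I would argue by contradiction. Suppose cluster $i$ has a nonzero red violation, i.e. $\Delta^i_{\text{red}} \neq 0$. Then $4|C_i^{\text{red}}| \neq |C_i|$, so the integer numerator is at least $1$, which forces $\Delta^i_{\text{red}} \ge \tfrac{1}{4|C_i|}$. This contradicts the hypothesis $\Delta^i_{\text{red}} < \tfrac{1}{4|C_i|}$. Hence the numerator must vanish, i.e. $4|C_i^{\text{red}}| = |C_i|$, so the red proportion in cluster $i$ is exactly $\tfrac14$ and $\Delta^i_{\text{red}} = 0$.

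Finally, invoking \Cref{helper_lemma_1}, $\Delta^i_{\text{blue}} = \Delta^i_{\text{red}} = 0$, so neither color is violated in cluster $i$; that is, cluster $i$ has no violation. There is essentially no obstacle here; the only point requiring care is that ``no violation'' means that \emph{both} color violations vanish, which is precisely why the last step appeals to \Cref{helper_lemma_1} rather than resting on the red computation alone.
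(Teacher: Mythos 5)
Your proof is correct and follows essentially the same route as the paper: both arguments clear denominators to exploit the integrality of $|C_i^{\text{red}}|$ and $|C_i|$, conclude that the red proportion must be exactly $\tfrac14$, and then invoke Lemma~\ref{helper_lemma_1} to handle the blue color. Your version of the integrality step (multiplying by $4|C_i|$ so the numerator is an integer that is either $0$ or at least $1$) is a slightly cleaner packaging of the paper's case analysis on the fractional part of $\tfrac14|C_i|$, but it is the same idea.
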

\begin{proof}
Suppose that $|\frac{|C^{\text{red}}_i|}{|C_i|}-\frac{1}{4}| < \frac{1}{4|C_i|}$, then it follows that $||C^{\text{red}}_i|-\frac{1}{4}|C_i||<\frac{1}{4}$. Since $|C_i|$ is an integer it follows that $\frac{1}{4}|C_i|$ is of the form $m, m+\frac{1}{4}, m+\frac{1}{2}, \text{ or } m+\frac{3}{4}$ where $m$ is an integer. Further since $|C^{\text{red}}_i|$ is also an integer, the fact that  $||C^{\text{red}}_i|-\frac{1}{4}|C_i||<\frac{1}{4}$ implies that $|C^{\text{red}}_i|=\frac{1}{4}|C_i|$ and we have no violation for the red color in cluster $i$. Further, from Lemma \ref{helper_lemma_1} the blue violation equals the red violation and therefore we have no violation in cluster $i$. 
\end{proof}

Now we are ready to prove the main claims for the $\FCBC$ problem. 

For the first claim, assume by contradiction that a polynomial time algorithm gave a solution of violation less than $\frac{1}{8L}$ and cost $\leq U$. Now, if we consider clusters $i$ of size $|C_i|$ such that $4 \leq |C_i| \leq 8$, then it clear that since $\Delta^i_{\text{red}} \leq \Delta_{\text{red}} \leq \frac{1}{8L(U)}$, $\Delta^i_{\text{red}} \leq \frac{1}{4|C_i|}$ because $|C_i|\leq 8\leq 2L(U)$, therefore there is no violation in these clusters by Lemma \ref{helper_lemma_3}. 

Now consider a cluster of size greater than $8$, (note by Lemma \ref{main_lb_lemma} that $S=\mathcal{F}$) because of the upper bound $U$ such clusters could only add points for the top row set $\mathcal{T}$ to the cluster which are all red, it clear that the more red points are added the greater the violation, if one additional red point is added, then for the best color proportions the cluster has a total of: 6 blues and 3 reds, which lead to a violation of $|\frac{1}{3}-\frac{1}{4}|=\frac{1}{12}>\frac{1}{8L(U)}=\frac{1}{32}$, therefore it is impossible for the algorithm to form such clusters as that would contradict the assumption that the algorithm obtains a violation $<\frac{1}{8L(U)}$ for each color. Therefore such clusters are not possible. This means that there is no violation in any cluster and that the problem has been solved optimally which by the NP-hardness is impossible unless $P=NP$.

Now the two remaining claims follow easily. By definition we have that $\Egalitarian{} = \max\limits_{h \in \Colors} \dpc$. If $\Egalitarian{} < \frac{1}{8L(U)}$, then it follows that $\dpc <\frac{1}{8L(U)}$ for every color $h \in \Colors$ which by the first claim cannot happen unless $P=NP$.

Further, by Lemma \ref{helper_lemma_2} $\GroupUtilitarian= 2 \ \Egalitarian$, therefore if $\GroupUtilitarian{} < \frac{|\Colors|}{8L(U)}$, then $\Egalitarian{} < \frac{1}{8L}$. which is impossible unless $P=NP$.

The same claims for the $\FABC$ problem can be proven by simply setting the set of centers $S=\mathcal{F}$ and the upper bound $U=1$ for $k$-center and $n-k$ for the $k$-median/means, then following similar arguments. 
\end{proof}

\paragraph{Note:} In the proof above, if we consider only the upper bound cost $U$ and ignore the fairness objective the problem is solvable in polynomial time. For $\FCBC$ simply set $S=\mathcal{F}$, then route each point to one of its closest centers. For the $\FABC$ with a given $S=\mathcal{F}$, again simply route each points to one of its closest centers. This highlights that the hardness is not from the upper bound cost $U$.

Next, we recall \Cref{thm:pof-solve}
\theoremPoFSolve*
\begin{proof}
Let $S$ and $\phi$ be the set of centers and assignment of the color-blind algorithm. Let $S^*$ and $\phi^*$ be the optimal set of centers and assignment for the fair assignment under bounded cost $\FABC$. Let $\phi'$ be an assignment that routes the vertices from their center in $S^*$ to the nearest center in $S$, i.e. for a given vertex $\point,$ $\phi'(\point)=\argmin_{i' \in S} d(i',\phi^*(\point))$. Based on this setting we can upper bound the objective based on the following:
\begin{align*}
    d(\point,\phi'(\point)) & \leq d(\point,\phi^*(\point))  + d(\phi'(\point),\phi^*(\point))  \\ 
    & \leq d(\point,\phi^*(\point))  + d(\phi(i),\phi^*(\point)) \\  
    & \leq d(\point,\phi^*(\point))  + d(\point,\phi^*(\point))  + d(\point,\phi(\point)) \\
    & \leq 2  d(\point,\phi^*(\point)) + d(\point,\phi(\point))  
\end{align*}
It follows then by the triangle inequality of the $p$-norm and the non-negativity of the components, that $\Big( \sum_{\point \in \Points} d^p(\point,\phi'(\point))\Big)^{1/p} \leq 2 \Big( \sum_{\point \in \Points} d^p(\point,\phi^*(\point))\Big)^{1/p} + \Big( \sum_{\point \in \Points} d^p(\point,\phi(\point))\Big)^{1/p} \leq 2\upof+\alpha U = (2+\alpha) U$. Note that in the last inequality the bounded the color-blind cost as follows: $\Big( \sum_{\point \in \Points} d^p(\point,\phi(\point))\Big)^{1/p} \leq \alpha \OPT_{\text{cb}} \leq \alpha U$, where as noted the optimal color-blind cost $\OPT_{\text{cb}}$ is upper bounded by $U$, i.e.  $\OPT_{\text{cb}} \leq U$ otherwise the problem would not be feasible. This proves the upper bound on the objective. 

Now we establish guarantees on the proportions. For a given center $s$ in $S$, let $N(s)=\{i' \in S^*| s=\argmin_{i \in S} d(i,i')\}$, i.e. $N(s)$ is the set of centers in $S^*$ routing their vertices to $s$. Denote the set of points assigned to cluster $i'$ by ${\phi^*}^{-1}(i')$, i.e. ${\phi^*}^{-1}(i')=\{j\in \Points| \phi^*(\point)=i'\}$. Then for any color $\pcolor$ we have that:
\begin{align*}
   & \min_{i' \in N(s)} \frac{ \big( \sum_{\point \in {\phi^*}^{-1}(i'),\chi(\point)=\pcolor} 1 \big) }  {|{\phi^*}^{-1}(i')|}  \leq \\ & \frac{\sum_{i' \in N(s)} \big( \sum_{\point \in {\phi^*}^{-1}(i'),\chi(\point)=\pcolor} 1 \big) }  {\sum_{i' \in N(s)}|{\phi^*}^{-1}(i')|} \leq \\ 
   & \max_{i' \in N(s)} \frac{ \big( \sum_{\point \in {\phi^*}^{-1}(i'),\chi(\point)=\pcolor} 1 \big) }  {|{\phi^*}^{-1}(i')|} 
\end{align*}
That is the final color proportion will be within the lower and upper proportions of the routing centers. It follows that $\dpc$ does not increase for any color and that the \GroupUtilitarian{}, \Egalitarian{}, and \Leximin{} objectives using $\phi'$ are not greater than that of the optimal solution. 

The above facts, combined with the premise of having an algorithm that solves the fair assignment under bounded cost $\FABC$ with an additive violation of $\mu$ completes the proof.   
\end{proof}

Next, we recall \Cref{thm:non_convex}
\nonConvex*
\begin{proof}
The non-convexity of the constraint set (\ref{lpf}) can be shown even when ignoring the upper proportionality constraint, i.e. constraint (\ref{lpf_2}) only with the lower bound. Specifically, we would have the following constraint set:
\begin{subequations}\label{lpf_om}
  \begin{equation}
    \label{lpf_om_1}
     \sum_{i,j} d^p(i,j) x_{ij} \leq \upofp
  \end{equation}
  \begin{equation}
    \label{lpf_om_2}
    \forall j \in \Points: \sum_{i \in S} x_{ij}=1, \quad x_{ij} \in [0,1]
  \end{equation}
  \begin{equation}
    \label{lpf_dpc}
    \forall h \in \Colors: \dpc \in [0,1]
  \end{equation}
  \begin{equation}
    \label{lpf_om_3}
    \forall \pcolor \in \Colors, \forall i \in S: 
    (\beta_h-\dpc) \Big( \sum_{j \in \Points} x_{ij} \Big) 
    \leq \sum_{j \in \Points^{\pcolor}} x_{ij}       
  \end{equation}
\end{subequations}
Now, assume that the upper bound on the cost $U$ is sufficiently large (this would let assignments of a high cost remain feasible). Consider the case of two colors: red and blue, with $\beta_{\text{red}}=\beta_{\text{blue}}=\frac{1}{2}$. Let each color constitute half the dataset, i.e. $|\Points^{\text{red}}|=|\Points^{\text{blue}}|=\frac{n}{2}$, clearly $|\Points|=2|\Points^{\text{red}}|=2|\Points^{\text{blue}}|=n$. 
Set the number of clusters to two ($k=2$), consider the following two feasible solutions $x^1_{ij},\Delta^1_{\text{red}},\Delta^1_{\text{blue}}$ and $x^2_{ij},\Delta^2_{\text{red}},\Delta^2_{\text{blue}}$ with $\Delta^1_{\text{blue}}=\Delta^2_{\text{blue}}=1$, then the following holds (note that $\alpha=\frac{2}{3}$):
\begin{align*}
\textbf{For $x^1_{ij},\Delta^1_{\text{red}}$:}\\
    \text{cluster 1: }& \sum_{\point \in \Points^{\text{red}}} x^1_{1j} = \sum_{\point \in \Points^{\text{blue}}} x^1_{1j} = \alpha \frac{n}{2}= \frac{2}{3}\frac{n}{2}=\frac{n}{3} \\
    \text{cluster 2: }& \sum_{\point \in \Points^{\text{red}}} x^1_{2j} = \sum_{\point \in \Points^{\text{blue}}} x^1_{2j} = (1-\alpha) \frac{n}{2}= \frac{1}{3}\frac{n}{2}=\frac{n}{6} \\
    & |C_2| =\sum_{\point \in \Points} x^1_{2j} = \frac{n}{3}\\
    \Delta^1_{\text{red}} = 0 &  \\ \\ 
\textbf{For $x^2_{ij},\Delta^2_{\text{red}}$:}\\
    \text{cluster 1: }& \sum_{\point \in \Points^{\text{red}}} x^2_{1j} = \frac{n}{2}, \\
    & \sum_{\point \in \Points^{\text{blue}}} x^2_{1j} = (1-(\alpha+\frac{1}{n/2})) \frac{n}{2}= \frac{n}{6} -1\\
    \text{cluster 2: }& \sum_{\point \in \Points^{\text{red}}} x^2_{2j} = 0, \\
    & \sum_{\point \in \Points^{\text{blue}}} x^2_{2j} = (\alpha+\frac{1}{n/2}) \frac{n}{2}= (\frac{2}{3}+\frac{1}{n/2})\frac{n}{2}=\frac{n}{3} 
    +1\\
    & |C_2| =\sum_{\point \in \Points} x^2_{2j} = \frac{n}{3}+1\\
    \Delta^2_{\text{red}} = \frac{1}{2} &
\end{align*}
We now form a simple convex combination of the two solutions $x_{ij}=\frac{1}{2}(x^1_{ij}+x^2_{ij}),\Delta_{\text{red}}=\frac{1}{2}(\Delta^1_{\text{red}}+\Delta^2_{\text{red}})=\frac{1}{4}$. Constraints (\ref{lpf_om_1}), (\ref{lpf_om_2}), and (\ref{lpf_dpc}) would clearly be satisfied, but if we consider constraint (\ref{lpf_om_3}) for the red color and the second cluster, then we have:
\begin{align*}
    & \mathit{RHS} = \sum_{\point \in \Points^{\text{red}}} x_{2j} = \frac{n}{12}\\
    & \mathit{LHS} = (\frac{1}{2}-\frac{1}{4}) (\frac{n}{3}+\frac{1}{2}) = \frac{n}{12} + \frac{1}{8}
\end{align*}
It is clear that $\mathit{LHS} \leq \mathit{RHS}$ does not hold and therefore, the constraint is not satisfied for the convex combination and therefore the constraint set of the problem is indeed not convex.  

A similar assignment of solutions can be used to show that the set is not convex if we were to consider only the over-representation constraint in (\ref{lpf_2}) instead.  
\end{proof}

Next, we recall \Cref{thm:lp-violation1}
\theoremLPViolation*
Before we give the proof of \Cref{thm:lp-violation1} we introduce some observations and algorithm (\ref{alg:tc_util_alg}). The core of the speed up is an algorithm that runs $O(\frac{1}{\epsilon})$ many LPs for the two color case. Therefore given a general instance of $|\Colors|>2$ many colors, we simply explore all $O((\frac{1}{\epsilon})^{|\Colors|-2})$ possibilities for $|\Colors|-2$ colors and find the optimal value for the two excluded colors through $O(\frac{1}{\epsilon})$ many LP runs for each possibility, leading to a total of $O((\frac{1}{\epsilon})^{|\Colors|-1})$.

\paragraph{Algorithm for $\FABC$ with \emph{two colors} and \emph{arbitrary proportion bounds}:} Now we explain our algorithm for the two color case that runs at most $O(\frac{1}{\epsilon})$ many LPs. The algorithm utilizes two facts:
\begin{fact}\label{fact1}
If the LP is feasible for $\Delta_1$ and $\Delta_2$, then it is also feasible for $\Delta'_1$ and $\Delta'_2$ where it either holds that $\Delta'_1 \ge \Delta_1$, $\Delta'_2 \ge \Delta_2$, or both. 
\end{fact}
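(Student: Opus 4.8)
The plan is to observe that fixing the per-colour violation budgets $\Delta_1,\Delta_2$ turns the feasibility question into an ordinary linear program in the assignment variables $x_{ij}$, in which $\Delta_1,\Delta_2$ enter \emph{only} through the proportionality constraints — on the under-representation side via the coefficient $(\beta_h-\Delta_h)$ as in (\ref{lpf_om_3}), and on the over-representation side via the coefficient $(\alpha_h+\Delta_h)$ — and always in a direction that \emph{relaxes} the constraint as $\Delta_h$ grows. Hence I would prove the stronger statement that the \emph{same} assignment $x$ that witnesses feasibility for $(\Delta_1,\Delta_2)$ also witnesses feasibility for any $(\Delta_1',\Delta_2')$ with $\Delta_1'\ge\Delta_1$ and $\Delta_2'\ge\Delta_2$; this subsumes all three cases in the statement at once (raise only $\Delta_1$, raise only $\Delta_2$, or raise both).

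Concretely, let $x=(x_{ij})$ be feasible for $(\Delta_1,\Delta_2)$. The cost constraint (\ref{lpf_om_1}) and the assignment constraints (\ref{lpf_om_2}) do not mention $\Delta_1,\Delta_2$ at all, so they continue to hold verbatim. Now fix a colour $h$ and a cluster $i$, and write $m_i=\sum_{j\in\Points}x_{ij}\ge 0$ for the fractional size of cluster $i$ and $m_i^{h}=\sum_{j\in\Points^{h}}x_{ij}\ge 0$. Since $m_i\ge 0$ and $\Delta_h'\ge\Delta_h$, we get $(\beta_h-\Delta_h')\,m_i\le(\beta_h-\Delta_h)\,m_i\le m_i^{h}$, where the last inequality is exactly feasibility of $x$ for $(\Delta_1,\Delta_2)$; symmetrically $m_i^{h}\le(\alpha_h+\Delta_h)\,m_i\le(\alpha_h+\Delta_h')\,m_i$. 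Thus both the under- and over-representation constraints hold for $(\Delta_1',\Delta_2')$, so $x$ remains feasible there, which is the claim.

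I do not expect any real obstacle: the argument is a one-line monotonicity observation about how the violation parameters sit in the constraints, and the only care needed is sign bookkeeping. Specifically, $\beta_h-\Delta_h'$ may turn negative when $\Delta_h'$ is large, but then the under-representation inequality is trivially satisfied because its right-hand side $m_i^{h}$ is nonnegative; and we only ever consider $\Delta_h'$ within the admissible range so that the variable bound $\dpc\in[0,1]$ from (\ref{lpf_dpc}) is maintained. This same reasoning applies unchanged to the full constraint set (\ref{lpf}) with both proportionality bounds active.
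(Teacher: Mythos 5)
Your proof is correct and is exactly the monotonicity observation the paper relies on: the paper states Fact~\ref{fact1} without proof, and the intended justification is precisely that a witness $x$ for $(\Delta_1,\Delta_2)$ remains a witness when the coefficients $(\beta_h-\Delta_h)$ and $(\alpha_h+\Delta_h)$ are respectively decreased and increased, since the cost and assignment constraints do not involve the $\Delta$'s and $\sum_{j}x_{ij}\ge 0$. Your reading of the slightly ambiguous ``either\dots or\dots or both'' phrasing as componentwise weak domination is the right one (it matches how the fact is invoked in the proof of Lemma~\ref{two_color_util_arbit}), and your handling of the sign/clamping edge cases is fine.
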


\begin{fact}\label{fact2}
If the LP is feasible for $\Delta_1$ and $\Delta_2$, then $\Delta'_1=\Delta_1+i$ and $\Delta'_2=\Delta_2-i$ result in the same value for the \GroupUtilitarian{} objective.
\end{fact}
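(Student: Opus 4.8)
The plan is to reduce the statement to the defining form of the \GroupUtilitarian{} objective and then verify that the proposed move leaves it unchanged. Recall that \GroupUtilitarian{} is by definition the sum of the per-color violations, $\sum_{h\in\Colors}\Delta_h$, which in the present two-color instance is $\Delta_1+\Delta_2$ (this is exactly the first equality used in the proof of Lemma \ref{helper_lemma_2}). Since the LP here is parametrized by the two color-violation levels $\Delta_1$ and $\Delta_2$, the \GroupUtilitarian{} value attached to a feasible pair is precisely their sum.

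First I would make the parametrization explicit: supplying $\Delta_1$ and $\Delta_2$ to the LP fixes, cluster by cluster, the admissible deviation of each color's proportion from its target, and the \GroupUtilitarian{} value reported for a feasible pair is taken to be $\Delta_1+\Delta_2$. This is the only step that requires any care, because one must confirm that the quantity being optimized is the sum of the two \emph{parameters} themselves, as established for two colors, rather than the realized maximum cluster violation (which the parameters only upper bound); once this identification is in place the rest is immediate.

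Second, the computation is the one-line identity
\begin{align*}
\GroupUtilitarian{} \;=\; \Delta'_1+\Delta'_2 \;=\; (\Delta_1+i)+(\Delta_2-i) \;=\; \Delta_1+\Delta_2,
\end{align*}
so the objective is the same for $(\Delta'_1,\Delta'_2)$ as for $(\Delta_1,\Delta_2)$. There is no real computational obstacle; the only points worth flagging are that the shifted pair must remain admissible (in particular $\Delta'_2\ge 0$ and $\Delta'_1$ within its valid range), and that feasibility of the shifted pair is not needed for this objective equality itself but is instead supplied separately by the monotonicity of Fact \ref{fact1}.

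Finally, I would note why such an innocuous identity is isolated as a fact: together with Fact \ref{fact1} it shows that every feasible pair with a common sum $\Delta_1+\Delta_2$ lies on a single anti-diagonal iso-\GroupUtilitarian{} line, so minimizing \GroupUtilitarian{} reduces to sliding such a line toward the origin until it last meets the upward-monotone feasible region. This staircase structure is exactly what Algorithm \ref{alg:tc_util_alg} exploits to get away with only $O(1/\epsilon)$ LP solves.
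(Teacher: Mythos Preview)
Your proposal is correct and matches the paper's treatment: the paper states this as a fact without proof, since it is exactly the one-line identity $(\Delta_1+i)+(\Delta_2-i)=\Delta_1+\Delta_2$ that you give. Your surrounding discussion of why the fact is useful (anti-diagonal iso-objective lines exploited by Algorithm~\ref{alg:tc_util_alg}) is accurate and aligns with the paper's Figure~\ref{t_grid_ex} and the proof of Lemma~\ref{two_color_util_arbit}.
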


As mentioned before we discretize the set of possibilities for $\Delta_1$ and $\Delta_2$ in the range $[0,1]$ by $\epsilon=\frac{1}{r}$ where $r$ is a positive integer. This leads to a set of $\eset \times \eset$ many possibilities where $\eset=\{\epsilon,2\epsilon,\dots, \dots, (\frac{1}{\epsilon}-1){\epsilon},1\}$. This results in a two dimensional grid with each cell having a certain value for the \GroupUtilitarian{} objective as figure \ref{t_grid_ex} shows for a specific value of $\epsilon$.

The algorithm is shown in block (\ref{alg:tc_util_alg}). Note that $LP(\Delta_1,\Delta_2)$ is a function that returns $\textbf{true}$ if the LP is feasible for the proportional violations of $\Delta_1$ and $\Delta_2$ for colors 1 and 2, respectively and returns $\textbf{false}$ otherwise. The rough sketch of the algorithm is that it starts at the top right of the grid and checks for feasibility. It then chooses smaller values of $\Delta_1$ until it encounters an infeasible instance. Once that happens the algorithm moves diagonally, looking for a feasible instance\footnote{Note that there is no point to explore above the diagonal as these are cells that lead to higher not lower values for the \GroupUtilitarian{} objective}. Once a feasible instance is found on the diagonal, the algorithm moves vertically down until it reaches the bottom and terminates or reaches an infeasible instance and therefore goes back to exploring diagonally again. See figure \ref{t_grid_ex_run} for a run of the algorithm.   
\begin{algorithm}[h]
   \caption{Algorithm for Arbitrary Proportion Two Color Case for the \GroupUtilitarian{} Objective}
   \label{alg:tc_util_alg}
\begin{algorithmic}[1]
   \STATE Set $\Util^*=2$, $\Delta_1^{*}=1$, $\Delta_2^{*}=1$. 
   \STATE Set $\text{direction}=0$. \COMMENT{0 for horizontal, 1 for diagonal, 2 for vertical}
   \WHILE{ ($\text{direction}=1$ \AND $\Delta_1 \leq 1$ \AND $\Delta_2 \ge \epsilon$) \OR ($\text{direction}=2$ \AND $\Delta_2 \ge \epsilon$)}
   \IF{$\text{direction}=0$}
       \STATE $\Delta_1 = \Delta_1-\epsilon$ 
       \IF{$\Delta_1=0$}
          \STATE $\Delta_1=\Delta_1+\epsilon$
          \STATE $\text{direction}=2$
       \ELSIF{$LP(\Delta_1,\Delta_2)=\TRUE$} 
          \STATE $\Delta^{*}_1=\Delta_1$
          \STATE $\Util^*= \Delta_1 + \Delta_2$
       \ELSIF{$LP(\Delta_1,\Delta_2)=\FALSE$}
          \STATE $\Delta_1=\Delta_1+\epsilon$
          \STATE $\text{direction}=1$ \COMMENT{Explore diagonally}
       \ENDIF
  \ELSIF{$\text{direction}=1$}
      \STATE $\Delta_1 = \Delta_1+\epsilon$
      \STATE $\Delta_2 = \Delta_2-\epsilon$
      \IF{$LP(\Delta_1,\Delta_2)=\TRUE$}
          \STATE $\Util^*= \Delta_1 + \Delta_2$
          \STATE $\Delta^*_1= \Delta_1$
          \STATE $\Delta^*_2= \Delta_2$
          \STATE $\text{direction}=2$ \COMMENT{Explore vertically}
      \ENDIF
  \ELSIF{$\text{direction}=2$}
      \STATE $\Delta_2 = \Delta_2-\epsilon$
      \IF{$LP(\Delta_1,\Delta_2)=\TRUE$}
          \STATE $\Util^*= \Delta_1 + \Delta_2$
          \STATE $\Delta^*_2= \Delta_2$
       \ELSE
          \STATE $\text{direction}=1$ \COMMENT{Explore diagonally}
      \ENDIF
  \ENDIF
   \ENDWHILE
   \STATE $\textbf{return }\Delta^*_1,\, \Delta^*_2,\, \Util^*$. 
\end{algorithmic}
\end{algorithm}

\begin{figure}[h!]
\vskip 0.2in
\begin{center}
\centerline{\includegraphics[width=0.4\columnwidth]{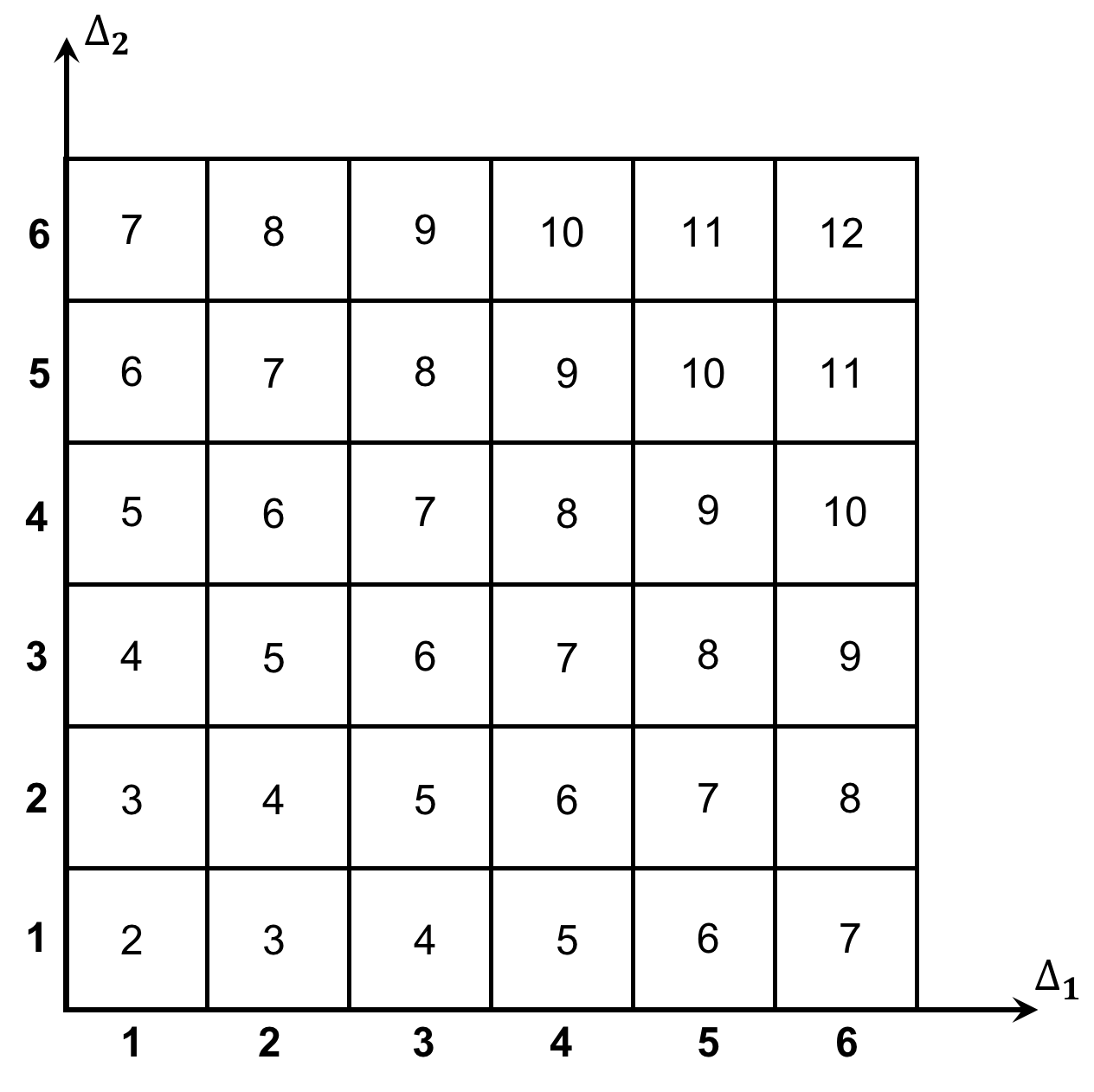}}
\caption{The two dimensional grid formed by discretizing the values of $\Delta_1$ and $\Delta_2$. Here $\epsilon=\frac{1}{6}$. Note that each number shown is a multiple of $\epsilon$, i.e. the first coordinate on the ($\Delta_1$) axis is $1\times \epsilon$, then $2\times \epsilon$ and so on. For each cell we put the \GroupUtilitarian{} objective value, which is again a multiple of $\epsilon$, i.e. the top right cell has a \GroupUtilitarian{} of $12\epsilon=12\frac{1}{6}=2$. Notice how the cells on the same diagonal have the same value as stated in Fact (\ref{fact2}.) }
\label{t_grid_ex}
\end{center}
\vskip -0.2in
\end{figure}

\begin{figure}[h!]
\vskip 0.2in
\begin{center}
\centerline{\includegraphics[width=0.4\columnwidth]{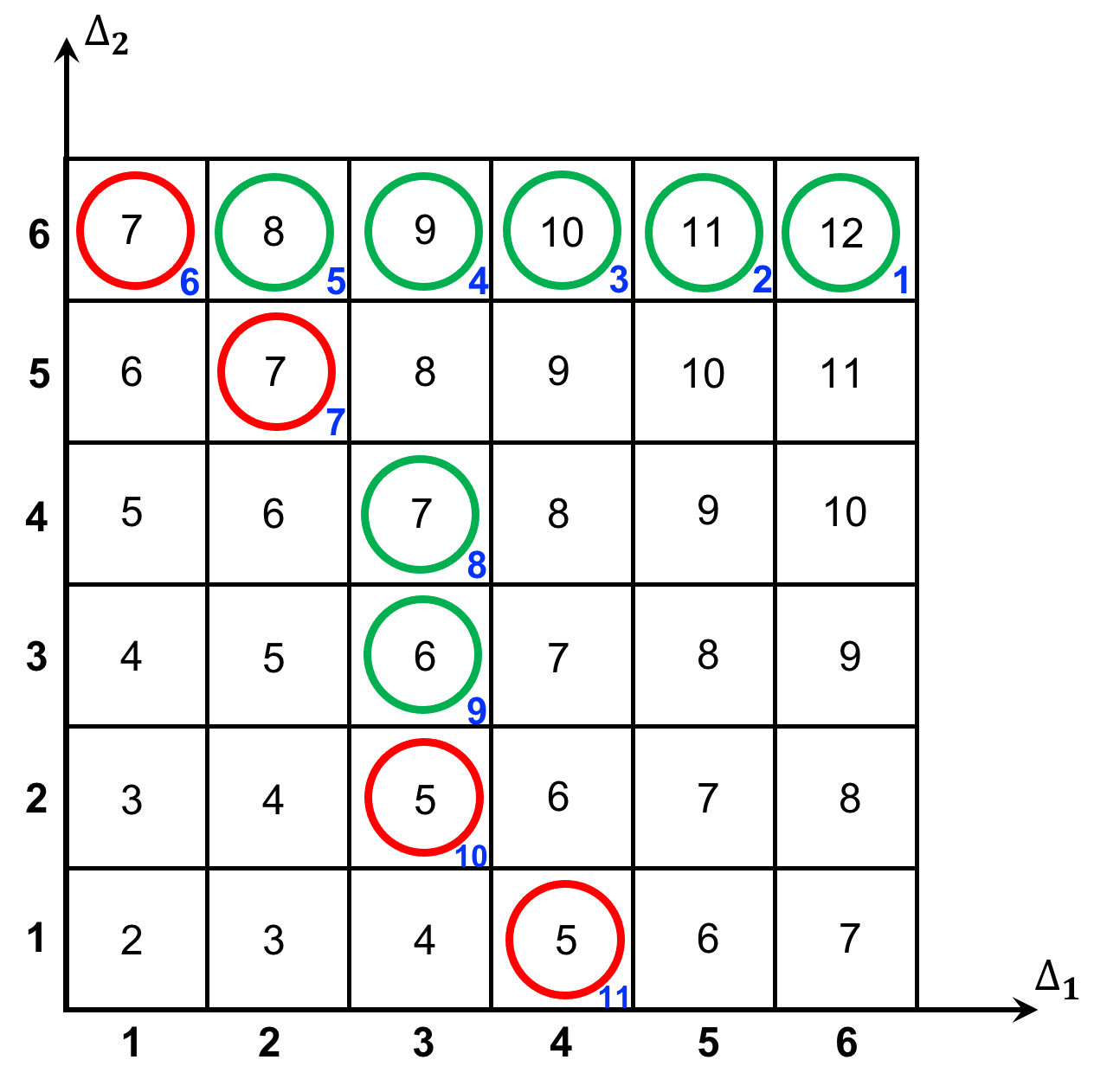}}
\caption{Here we show a possible run of the algorithm over the grid of figure \ref{t_grid_ex}. The circled cells are ones where the LP was run. A green circle indicates that the LP was feasible whereas a red circle indicates that the LP was infeasible. The blue number indicates the order where the LP corresponding to the cell was run, accordingly we start at the top right cell. It should be clear that the algorithm starts by horizontal exploration to the left, that it then moves to diagonal exploration which changes to vertical exploration once a feasible cell is found. Furthermore, the vertical exploration changes to diagonal once a an infeasible cell is encountered. It should be clear that the optimal value found by the algorithm is at the cell with $\Delta_1=\Delta_2=3\epsilon$ with $\GroupUtilitarian=6\epsilon$. }
\label{t_grid_ex_run}
\end{center}
\vskip -0.2in
\end{figure}

\begin{lemma}\label{two_color_util_arbit}
Algorithm (\ref{alg:tc_util_alg}) runs at most $O(\frac{1}{\epsilon})$ many LPs and finds a solution with additive approximation $2\epsilon$.
\end{lemma}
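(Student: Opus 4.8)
The plan is to establish the two halves of the claim separately: the $O(1/\epsilon)$ bound on the number of LP invocations by a monotone potential argument, and the $2\epsilon$ additive guarantee by a discretization bound combined with a correctness argument for the staircase traversal. For the running time, first note that each pass through the \textbf{while} loop of Algorithm~\ref{alg:tc_util_alg} performs at most one call to $LP(\cdot,\cdot)$, so it suffices to bound the number of loop iterations. The key monotonicity is that $\Delta_2$ never increases: it is untouched in horizontal mode and decremented by exactly $\epsilon$ in both diagonal and vertical modes. Moreover, horizontal mode ($\text{direction}=0$) is entered only at initialization—every branch that reassigns $\text{direction}$ sets it to $1$ or $2$, and none ever sets it back to $0$—so the horizontal phase contributes at most $1/\epsilon$ iterations, since it decrements $\Delta_1$ by $\epsilon$ from $\Delta_1=1$ and stops at $\Delta_1=0$. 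Every remaining iteration (diagonal or vertical) strictly decreases $\Delta_2$ by $\epsilon$, and the loop guard enforces $\Delta_2\ge\epsilon$; starting from $\Delta_2=1$ there can be at most $1/\epsilon$ such iterations. Hence the algorithm issues at most $2/\epsilon=O(1/\epsilon)$ LPs, regardless of how often it switches between diagonal and vertical exploration.

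For the approximation guarantee I would first isolate the discretization error. Let $\mathrm{OPT}$ denote the optimal \GroupUtilitarian{} value of $\FABC$, attained by a (possibly real-valued) feasible pair $(\Delta_1^\star,\Delta_2^\star)$ with $\Delta_1^\star+\Delta_2^\star=\mathrm{OPT}$. Rounding each coordinate up to the nearest multiple of $\epsilon$ yields a grid pair $(\widehat\Delta_1,\widehat\Delta_2)$ which is still feasible by Fact~\ref{fact1} (raising a violation bound cannot destroy feasibility) and satisfies $\widehat\Delta_1+\widehat\Delta_2\le\mathrm{OPT}+2\epsilon$. So there is a feasible grid cell of objective at most $\mathrm{OPT}+2\epsilon$, and it remains to show that the algorithm returns a feasible grid cell whose objective equals the minimum objective over all feasible grid cells.

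The heart of that step is a staircase/exchange invariant. By Fact~\ref{fact1} the feasible grid cells form an upward-closed set, so their lower-left boundary is a monotone staircase: as $\Delta_2$ decreases in steps of $\epsilon$, the least feasible $\Delta_1$ is non-decreasing. By Fact~\ref{fact2} the \GroupUtilitarian{} value is constant along anti-diagonals, so diagonal moves are objective-preserving repositioning steps and only vertical (and the initial horizontal) moves can lower the objective. I would then prove, by induction over the iterations, the invariant that (i) the current cell lies on or immediately off this staircase; (ii) $\Util^*$ equals the minimum objective among feasible grid cells in the rows already examined; and (iii) using the infeasible cells already certified together with upward-closedness, no feasible grid cell of objective strictly below $\Util^*$ can lie in the not-yet-examined region. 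The horizontal phase establishes the invariant for the top row $\Delta_2=1$ (which is feasible because $\Delta_2=1$ makes the fairness constraints vacuous and the bounded-cost instance is assumed feasible); a diagonal phase slides rightward along a constant-objective anti-diagonal until it reaches the frontier of the next row; a vertical phase walks a column downward, lowering the objective by $\epsilon$ per feasible step, until the staircase turns and diagonal exploration resumes; and the three stopping conditions ($\Delta_1=0$, $\Delta_1>1$, $\Delta_2<\epsilon$) occur exactly when the whole staircase has been swept. At termination, (i)–(iii) give that $\Util^*$ is the grid optimum, and combining this with the discretization bound yields $\GroupUtilitarian{}(\text{output})\le\mathrm{OPT}+2\epsilon$.

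I expect the main obstacle to be the bookkeeping in item (iii): formally ruling out an improving feasible cell hidden either to the left of the current column or below the last tested row. Handling this requires chaining the contrapositive of Fact~\ref{fact1} (one infeasible cell certifies an entire infeasible quadrant) with the constant-objective property of Fact~\ref{fact2}, and carefully matching each of the algorithm's direction switches and its grid-boundary behavior to the geometry of the monotone staircase.
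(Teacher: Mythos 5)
Your proposal is correct and follows essentially the same route as the paper's proof: bound the LP calls by a per-direction step count, charge a $2\epsilon$ discretization loss to rounding the optimal $(\Delta_1^\star,\Delta_2^\star)$ up to a feasible grid cell via Fact~\ref{fact1}, and then argue via Facts~\ref{fact1} and~\ref{fact2} that the horizontal/diagonal/vertical traversal of the upward-closed feasible region cannot miss a cell of that value. Your packaging of the correctness argument as an explicit loop invariant, and in particular your observation that $\Delta_2$ decreases by $\epsilon$ in \emph{every} diagonal or vertical iteration (which cleanly handles arbitrary alternation between those two modes), is somewhat tighter than the paper's informal phase-by-phase count, but it is the same underlying argument.
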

\begin{proof}
First we start by bounding the total number of LP runs. Since the algorithm in each step either moves horizontally (at most $O(\frac{1}{\epsilon})$ steps), or diagonally (at most $O(\frac{1}{\epsilon})$ steps), or vertically (at most $O(\frac{1}{\epsilon})$ steps), the total number of LP runs is indeed $O(\frac{1}{\epsilon})$. We note that the horizontal and vertical explorations can be done faster through binary search but the diagonal still has to be done through a linear sweep which makes the asymptotic number of LP runs still $O(\frac{1}{\epsilon})$. 

Now we prove that we obtain a values at most $2\epsilon$ greater than the optimal. Suppose that the optimal proportional violations are $\Delta^{*}_1$ and $\Delta^{*}_2$, then in the discrete grid we have $\bar{\Delta}_1$ and $\bar{\Delta}_2$ such that $\bar{\Delta}_1-\Delta^{*}_1 \leq \epsilon$ and $\bar{\Delta}_2-\Delta^{*}_2 \leq \epsilon$ which must be feasible, this follows by Fact (\ref{fact1}). It is therefore clear that there is a cell in the gird with value at most $2\epsilon$ more than the optimal. Now we show that the algorithm either finds it or finds a better solution. If $\bar{\Delta}_2=1$, then during the horizontal exploration the algorithm will pass over the $(\bar{\Delta}_1,\bar{\Delta}_2)$ cell and therefore it will either output $\bar{\Delta}_1+\bar{\Delta}_2$ as the optimal value or find a smaller value. 

If $\bar{\Delta}_2<1$, then by Fact (\ref{fact1}) cell $(\bar{\Delta}_1,1)$ must be feasible. Since $(\bar{\Delta}_1,1)$ is feasible, then if $(\bar{\Delta}_1-\epsilon,1)$ is infeasible the algorithm will switch to vertical exploration and encounter cell $(\bar{\Delta}_1,\bar{\Delta}_2)$. If a cell $(\Delta_1,1)$ with $\Delta_1 < \bar{\Delta}_1$ is feasible, then the algorithm will continue horizontally exploring, assuming cell $(\epsilon,\epsilon)$ is not feasible\footnote{If cell $(\epsilon,\epsilon)$ is feasible, then the algorithm will explore horizontally to the last cell, followed by vertical exploration to the last cell.} then the algorithm eventually switches to diagonal exploration if this diagonal has cells of \GroupUtilitarian{} value less than $\bar{\Delta}_1+\bar{\Delta}_2$, then the algorithm has an encountered a feasible cell with value less than or equal to $\bar{\Delta}_1+\bar{\Delta}_2$. If the diagonal has cells of \GroupUtilitarian{} value equal to  $\bar{\Delta}_1+\bar{\Delta}_2$, then either the algorithm will encounter the cell $(\bar{\Delta}_1,\bar{\Delta}_2)$ or it will encounter another feasible with the same value by Fact \ref{fact2}. If the diagonal has cells of \GroupUtilitarian{} value greater than  $\bar{\Delta}_1+\bar{\Delta}_2$, then we note that we could have a collection of optimal cells of \GroupUtilitarian{} value less than or equal to $\bar{\Delta}_1+\bar{\Delta}_2$ located below the diagonal, then it should be clear that the algorithm will encounter the optimal cell with smallest $\Delta_1$ value. 
\end{proof}

Now we proof theorem \Cref{thm:lp-violation1}:
\begin{proof}[Proof of \Cref{thm:lp-violation1}]
We have $|\Colors|$ many proportional violation values to decide, one for each color. We do exhaustive search over the proportional violation values in the discrete set $\eset$ for all colors except for the first two colors $h_1$ and $h_2$. For a given value of proportional violations for the colors in $\Colors-\{h_1,h_2\}$, we find the optimal values for colors $h_1$ and $h_2$ by running algorithm (\ref{alg:tc_util_alg}). It follows by Theorem (\ref{two_color_util_arbit}), that we would find the optimal value for $h_1$ and $h_2$. Further, since finding the optimal value for the two colors $h_1$ and $h_2$ takes $O(\frac{1}{\epsilon})$ many LP runs and exhaustive search over the set of colors $\Colors-\{h_1,h_2\}$ takes $O((\frac{1}{\epsilon})^{|\Colors|-2})$ LP runs, then the optimal value can be found in at most $O((\frac{1}{\epsilon})^{|\Colors|-1})$ LP runs.
\end{proof}

Next, we recall \Cref{thm:lp-violation2}
\theoremLPViolationnn*
Before we prove \Cref{thm:lp-violation2} we point out the following definition and observations. For the two color case, our color set is $\Colors=\{h_1,h_2\}$. Further, we denote the proportions for color $i$ by $r_i$ where $r_i=\frac{|\{j|j\in \Points, \chi(j)=i\}|}{|\Points|}=\frac{|\Points^i|}{|\Points|}$. We use color $h_1$ to denote the color with less points, i.e. $r_1 \leq r_2$. The upper and lower bounds we consider for each color are: $\beta_i=(1-\delta)r_i$ and $\alpha_i=(1+\delta)r_i$. The idea behind the algorithm is that the proportions of one color imply the proportion of the other color.

\paragraph{Algorithm for $\FABC$ with \emph{two colors} and \emph{symmetric lower and upper proportion bounds}:} Our algorithm is based on the simple observation shown in figure \ref{two_util_observation}
\begin{figure}[h!]
\vskip 0.2in
\begin{center}
\centerline{\includegraphics[width=\columnwidth]{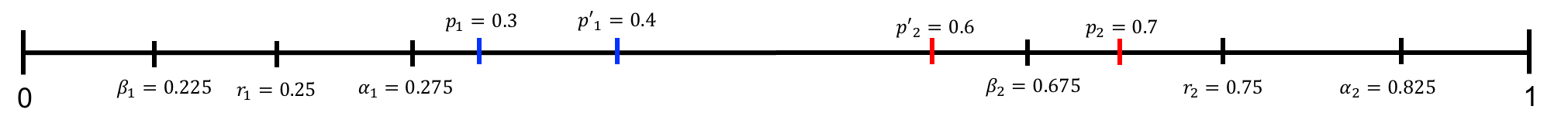}}
\caption{Proportions and bounds for two colors. $r_1=0.25,r_2=0.75$, $\lambda_i = \delta r_i$ for $i \in \{1,2\}$ where $\delta=0.1$. Notice how if color 1 violates the upper bound by having  $p_1=0.3$, then we must have $p_2=0.7$, but color 2 is not violating. On the other hand, a violation for color 1 with $p'_1=0.4$ implies $p'_2=0.6$ which causes a violation for color 2.}
\label{two_util_observation}
\end{center}
\vskip -0.2in
\end{figure}

Without loss of generality, let $\lambda_1 \leq \lambda_2$, based on the observation in figure \ref{two_util_observation}, we have the following lemma:
\begin{lemma}\label{lemma_bs_two_color}
If $\Delta_1 < \lambda_2-\lambda_1$, then $\Delta_2=0$. If $\Delta_1 \ge \lambda_2-\lambda_1$, then $\Delta_2=\Delta_1-(\lambda_2-\lambda_1)$ 
\end{lemma}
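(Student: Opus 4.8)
The plan is to prove the identity first for a single cluster and then lift it to the global (worst-cluster) violations, which is the form the algorithm uses. Fix a cluster $i'$ and let $p_1^{i'}, p_2^{i'}$ be the proportions of colors $h_1, h_2$ in it; since these are the only two colors, $p_1^{i'} + p_2^{i'} = 1$, so setting $t := p_1^{i'} - r_1$ gives $p_2^{i'} - r_2 = (1 - p_1^{i'}) - (1 - r_1) = -t$, i.e. the two colors deviate from their targets by equal and opposite amounts. Because $\beta_i = r_i - \lambda_i$ and $\alpha_i = r_i + \lambda_i$, the violation of color $i$ in cluster $i'$ is exactly the amount by which $p_i^{i'}$ falls outside the band $[\beta_i, \alpha_i]$, i.e. $\Delta_i^{i'} = \max\{0,\, |p_i^{i'} - r_i| - \lambda_i\}$; hence $\Delta_1^{i'} = \max\{0,\, |t| - \lambda_1\}$ and $\Delta_2^{i'} = \max\{0,\, |t| - \lambda_2\}$.

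Next I would split on $|t|$, equivalently on whether $\Delta_1^{i'} < \lambda_2 - \lambda_1$. If $|t| \le \lambda_2$, then $\Delta_2^{i'} = 0$ while $\Delta_1^{i'} = \max\{0,\, |t| - \lambda_1\} \le \lambda_2 - \lambda_1$, giving the first branch. If $|t| > \lambda_2$, then $\lambda_1 \le \lambda_2 < |t|$ forces $\Delta_1^{i'} = |t| - \lambda_1 > \lambda_2 - \lambda_1$ and $\Delta_2^{i'} = |t| - \lambda_2$, and subtracting gives $\Delta_2^{i'} = \Delta_1^{i'} - (\lambda_2 - \lambda_1)$, the second branch. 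So on every cluster $\Delta_2^{i'} = g(\Delta_1^{i'})$ with $g(x) := \max\{0,\, x - (\lambda_2 - \lambda_1)\}$, which is exactly the claimed relation cluster by cluster. The assumption $\lambda_1 \le \lambda_2$ (equivalently $r_1 \le r_2$, since $\lambda_i = \delta r_i$) is used only to ensure it is the smaller-slack color $h_1$ whose violation $h_2$ can absorb for free.

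Finally I would transfer the identity to the quantities named in the lemma. Reading $\Delta_1$ as the slack budget assigned to $h_1$ and $\Delta_2$ as the least extra slack this forces on $h_2$: translating $h_1$'s relaxed band (half-width $\lambda_1 + \Delta_1$) through $p_1 + p_2 = 1$ shows its intersection with $h_2$'s relaxed band (half-width $\lambda_2 + \Delta_2$) coincides with $h_1$'s band precisely when $\lambda_2 + \Delta_2 \ge \lambda_1 + \Delta_1$, so the least non-redundant choice is $\Delta_2 = g(\Delta_1)$. Reading $\Delta_i = \max_{i'} \Delta_i^{i'}$ instead: $g$ is non-decreasing, hence commutes with the maximum, so $\Delta_2 = \max_{i'} g(\Delta_1^{i'}) = g(\max_{i'} \Delta_1^{i'}) = g(\Delta_1)$. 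I do not expect a real obstacle here; the only points requiring care are using the two-sided (lower- and upper-bound) definition of the violation, which is why $|t|$ and both cases appear, and observing that taking the worst cluster commutes with the monotone map $g$, so the per-cluster identity transfers verbatim to the aggregate violations.
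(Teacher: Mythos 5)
Your proposal is correct and rests on the same key observation as the paper's proof, namely that $p_1+p_2=1$ forces the two colors' deviations from their targets to be equal and opposite, after which the claim is arithmetic on the band half-widths $\lambda_1\le\lambda_2$. Your formulation via $\Delta_i^{i'}=\max\{0,|t|-\lambda_i\}$ and the monotone map $g$ is a cleaner, unified version of the paper's explicit case split on $p_1=\alpha_1+\Delta_1$ versus $p_1=\beta_1-\Delta_1$ with its contradiction arguments, but it is not a genuinely different route.
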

\begin{proof}
Let color 1 have a $\Delta_1$ violating proportion of $p_1$, then in some cluster $p_1=\alpha_1+\Delta_1$ or $p_1=\beta_1-\Delta_1$. 

Consider the case where $p_1=\alpha_1+\Delta_1$, then $p_2=1-p_1=1-\alpha_1-\Delta_1$. Now if $\Delta_1 < \lambda_2-\lambda_1$, then we have $p_2>1-\alpha_1-(\lambda_2-\lambda_1)=1-\lambda_2 + \lambda_1 -\alpha=(1-r_1)-\lambda_2=r_2-\lambda_2=\beta_2$ this means that color 2 does not violate the lower bound. If we assume that color 2 violates the upper bound by an amount $\Delta_2>0$, then this would imply that $p_1=1-p_2$ and the lower violation for color 1 would be $\beta_1-p_1=\beta_1-(1-\alpha_2-\Delta_2)= \beta_1 - 1 + \alpha_2 + \Delta_2= r_1-\lambda_1 + r_2 + \lambda_2 + \Delta_2 -1 = 1 + (\lambda_2-\lambda_1) + \Delta_2 -1 = (\lambda_2-\lambda_1) + \Delta_2 > (\lambda_2-\lambda_1)$ which is a contradiction since we assumed that $\Delta_1 < (\lambda_2-\lambda_1)$. 

Similarly, if $\Delta_1 \ge (\lambda_2-\lambda_1)$, then we have $\Delta_2 = \beta_2 -(1-\alpha_1-\Delta_1) = \beta_2 - 1 + \alpha_1 + \Delta_1 = r_2 - \lambda_2 + r_1 + \lambda_1 - 1 + \Delta_1 = \lambda_1-\lambda_2 + \Delta_1 = \Delta_1 - (\lambda_2-\lambda_1)$. Now if we assume that color 2 has a violation of the upper bound by an amount $\Delta'_2 > \Delta_1 - (\lambda_2-\lambda_1)$, this would imply that color 1 violates the lower bound by $\beta_1 - p_1 = r_1 - \lambda_1 + r_2 + \lambda_2 - 1 + \Delta_2 = (\lambda_2-\lambda_1) + \Delta_2$ which is a contradiction since $\Delta_1 < \Delta_2 + (\lambda_2-\lambda_1)$, therefore color 2 cannot violate by more than $\Delta_1 - (\lambda_2-\lambda_1)$. 

The case of $p_1 = \beta_1 - \Delta_1$ follows similar arguments. 
\end{proof}

The above observations lead to the following algorithm: 
\begin{algorithm}[h!]
   \caption{\GroupUtilitarian{} Algorithm for Two Colors with Symmetric Bounds for the \GroupUtilitarian{} Objective}
   \label{alg:binary_search}
\begin{algorithmic}
   \STATE {\bfseries Input:} set of points $\Points$, price of fairness $\upof$, for each color $\pcolor \in \Colors$ lower and upper proportion values $\beta_{\pcolor}, \alpha_{\pcolor}$, error parameter $\epsilon$. 
   \STATE Define the set $\eset=\{0,\epsilon,\dots,(\frac{1}{\epsilon}-1)\epsilon\}$ 
   \STATE Binary search $\Delta_1$ over the set $\eset$ by running the LP (\ref{lpf}) ( if $\Delta_1<\delta(r_2-r_1)$ then $\Delta_2=0$, otherwise set $\Delta_2 = \Delta_1-\delta(r_2-r_1)$ ). 
\end{algorithmic}
\end{algorithm}

Now we are ready to prove \Cref{thm:lp-violation2}.
\begin{proof}[Proof of \Cref{thm:lp-violation2}]
It follows from Lemma (\ref{lemma_bs_two_color}) that we can do binary search over the set $\eset$ using $\Delta_1$ as done in algorithm (\ref{alg:binary_search}). Clearly, at most $O\Big(\log(\frac{1}{\epsilon})\Big)$ many LPs will be run because of binary search. Further, we know that we will find a solution at most $\epsilon$ greater, i.e. we worst case best LP value is: $\Delta^*_1+\epsilon, \Delta^*_2+\epsilon = (\Delta^*_1 + \Delta^*_2) + 2\epsilon = \OPT + 2\epsilon$. 
\end{proof}

Next, we recall \Cref{thm:egal-violation}
\theoremEgalViolation*
\begin{proof}
For each color $\pcolor$ set $\dpc$ to the same value $\Delta$, do binary search over $\eset$ using LP(8). Clearly, the final value is at most $\epsilon$ greater than the optimal and we need at most $O(\log{(\frac{1}{\epsilon})})$ many LP runs. 
\end{proof}

Now we introduce the following lemma:
\begin{restatable}{lemma}{lemmaViolation}\label{viol_lemma} 
$\dpcr < \dpc + \frac{2}{L(U)}$, i.e. rounding will increase the violation by at most $\frac{2}{L(U)}$. 
\end{restatable}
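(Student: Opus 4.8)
The plan is to control, cluster by cluster, how far the color proportions can drift under the rounding step, and then observe that the violation is nothing but the (1-Lipschitz) distance of these proportions to the feasible band $[\beta_{\pcolor},\alpha_{\pcolor}]$. Fix a cluster $i$ and a color $\pcolor$, and write $a=\sum_{j\in\Points^{\pcolor}}x_{ij}$ and $b=\sum_{j}x_{ij}$ for the fractional color-$\pcolor$ mass and the fractional size of cluster $i$, and $a',b'$ for the corresponding quantities after rounding, so that the proportion $p_i^{\pcolor}=a/b$ becomes $a'/b'$.

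First I would invoke two facts about the rounding. (i) The rounding changes each color count and each cluster size by strictly less than one, i.e. $|a'-a|<1$ and $|b'-b|<1$; this is the standard guarantee of the dependent/flow rounding used for fair clustering. (ii) Every cluster of a feasible solution of cost at most $U$ — fractional or integral — has size at least $L(U)$, so $b\ge L(U)$ and $b'\ge L(U)$; and trivially $0\le a\le b$ since the color mass cannot exceed the cluster mass. Item (ii) is exactly the content (and proof technique) of Lemma~\ref{main_lb_lemma}, specialized to whichever objective and bound $U$ are in force.

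Next comes the elementary ratio estimate. Expanding
\[
 \frac{a'}{b'}-\frac{a}{b}=\frac{b(a'-a)+a(b-b')}{b\,b'},
\]
and using $|a'-a|<1$, $|b-b'|<1$, $a\le b$ and $b'\ge L(U)$ gives
\[
 \left|\frac{a'}{b'}-\frac{a}{b}\right|<\frac{b+a}{b\,b'}\le\frac{2b}{b\,b'}=\frac{2}{b'}\le\frac{2}{L(U)}.
\]
So every cluster's proportion for every color moves by strictly less than $\frac{2}{L(U)}$ under rounding. Since $\dpc=\max_i \mathrm{dist}(p_i^{\pcolor},[\beta_{\pcolor},\alpha_{\pcolor}])$ and $\dpcr$ is the same maximum for the rounded proportions, and $t\mapsto \mathrm{dist}(t,[\beta_{\pcolor},\alpha_{\pcolor}])$ is $1$-Lipschitz, each per-cluster violation grows by less than $\frac{2}{L(U)}$, and taking the maximum over clusters yields $\dpcr<\dpc+\frac{2}{L(U)}$.

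The only real obstacle is securing the two inputs cleanly rather than the arithmetic: that the specific rounding used here simultaneously perturbs every color count and every cluster size by strictly less than one, and that it never produces a cluster of size below $L(U)$ (which must be read off from the cost bound $U$, exactly as in Lemma~\ref{main_lb_lemma}, or from the lower-size constraint being carried through the LP). Once both are pinned down, the claim follows from the single displayed computation above.
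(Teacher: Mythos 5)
Your proof is correct and rests on the same two ingredients as the paper's: the flow-rounding guarantees (properties (ii) and (iii) in the paper's rounding section are exactly the floor/ceiling sandwich that gives your $|a'-a|<1$ and $|b'-b|<1$), and the lower bound $b'\ge L(U)$ on rounded cluster sizes, which the paper also invokes without comment in its final division step and which is justified because property (i) keeps the rounded cost at most $U$. Where you genuinely differ is the decomposition. The paper runs two separate one-sided chains, propagating the LP constraints $\sum_{j\in\Points^{\pcolor}}x_{ij}\le \min(\alpha_{\pcolor}+\dpc,1)\sum_j x_{ij}$ and the lower-bound counterpart through the ceiling/floor to obtain $\sum_{j\in\Points^{\pcolor}}\bar x_{ij}\le(\alpha_{\pcolor}+\dpc)\sum_j\bar x_{ij}+2$ (and the analogous $-2$ bound) before dividing by the cluster size; you instead bound the perturbation of the raw proportion $a/b\mapsto a'/b'$ by $2/L(U)$ in one symmetric computation and transfer it to the violation via the $1$-Lipschitzness of the distance to $[\beta_{\pcolor},\alpha_{\pcolor}]$. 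Your route buys a unified treatment of over- and under-representation and makes clear that the clipping terms $\min(\alpha_{\pcolor}+\dpc,1)$ and $\max(\beta_{\pcolor}-\dpc,0)$ in the paper's chain are immaterial, at the price of having to articulate $\dpc$ explicitly as a per-cluster distance-to-interval maximized over clusters (which matches the LP's definition, so this is fine). One small attribution correction: the bound $b'\ge L(U)$ does not come from Lemma~\ref{main_lb_lemma}, which computes $L(U)=4$ only for the X3C gadget; it is the general definition of $L(U)$ as the smallest cluster size compatible with cost bound $U$, applied to the rounded solution.
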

\vspace{-0.3cm}
\begin{proof}
Based on properties (ii) and (iii) from network flow rounding (see Section \ref{rounding_sec}), we can get the following bound for the upper proportion:
\begin{align*}
    & \sum_{j \in \Points^{\pcolor}} \bar{x}_{ij}  \leq  \ceil{\sum_{j \in \Points^{\pcolor}} {x}_{ij}} && (\text{by property (iii)})\\
    & \leq \ceil{\min\big((\alpha_h+\dpc),1\big) \Big( \sum_{j \in \Points} {x}_{ij} \Big)}  && (\text{problem constraint})\\
    & < \min\big((\alpha_h+\dpc),1\big) \Big( \sum_{j \in \Points} {x}_{ij} \Big) + 1  && (\text{ceiling upper bound})\\
    & \leq \min\big((\alpha_h+\dpc),1\big) \Big( \sum_{j \in \Points} \bar{x}_{ij} + 1\Big) + 1  && (\text{by property (ii)})\\
    & \leq \min\big((\alpha_h+\dpc),1\big) \Big( \sum_{j \in \Points} \bar{x}_{ij} \Big) + \min\big((\alpha_h+\dpc),1\big) + 1  && \\
    & \leq \min\big((\alpha_h+\dpc),1\big) \Big( \sum_{j \in \Points} \bar{x}_{ij} \Big) + 2  && (\text{since $\min\big((\alpha_h+\dpc),1\big) \leq 1$})\\
    &  \leq (\alpha_h+\dpc) \Big( \sum_{j \in \Points} \bar{x}_{ij} \Big) + 2  && \\
\end{align*}
This implies that the new violation for the rounded solution $\dpcr$ satisfies:
\begin{align*}
    \alpha_{\pcolor} + \dpcr = \frac{\sum_{j \in \Points^{\pcolor}} \bar{x}_{ij}}{\sum_{j \in \Points} \bar{x}_{ij}} < \alpha_{\pcolor} + \dpc + \frac{2}{\sum_{j \in \Points} \bar{x}_{ij}} \leq \alpha_{\pcolor} + \dpc + \frac{2}{L(U)} 
\end{align*}
Therefore, we have:
\begin{align*}
\dpcr - \dpc < \frac{2}{L(U)} 
\end{align*}

For the lower proportions, we also have: 
\begin{align*}
    & \sum_{j \in \Points^{\pcolor}} \bar{x}_{ij}  \ge  \floor{\sum_{j \in \Points^{\pcolor}} {x}_{ij}} && (\text{by property (iii)})\\
    & \ge \floor{\max\big((\beta_h-\dpc),0\big) \Big( \sum_{j \in \Points} {x}_{ij} \Big)}  && (\text{problem constraint})\\
    & > \max\big((\beta_h-\dpc),0\big) \Big( \sum_{j \in \Points} {x}_{ij} \Big) - 1  && (\text{ceiling upper bound})\\
    & \ge \max\big((\beta_h-\dpc),0\big) \Big( \sum_{j \in \Points} \bar{x}_{ij} - 1\Big) - 1  && (\text{by property (ii)})\\
    & \ge \max\big((\beta_h-\dpc),0\big) \Big( \sum_{j \in \Points} \bar{x}_{ij} \Big) - \max\big((\beta_h-\dpc),0\big) - 1  && \\
    & \ge \max\big((\beta_h-\dpc),0\big) \Big( \sum_{j \in \Points} \bar{x}_{ij} \Big) - 2  && (\text{since $\max\big((\beta_{\pcolor}-\dpc),0\big) \leq 1$})\\
    &  \ge (\beta_h-\dpc) \Big( \sum_{j \in \Points} \bar{x}_{ij} \Big) - 2  && \\
\end{align*}

\begin{align*}
    \beta_{\pcolor} - \dpcr = \frac{\sum_{j \in \Points^{\pcolor}} \bar{x}_{ij}}{\sum_{j \in \Points} \bar{x}_{ij}} > \beta_{\pcolor} - \dpc - \frac{2}{\sum_{j \in \Points} \bar{x}_{ij}} \ge \beta_{\pcolor} - \dpc - \frac{2}{L(U)} 
\end{align*}
Therefore, we have:
\begin{align*}
\dpcr - \dpc < \frac{2}{L(U)} 
\end{align*}
\end{proof}

Next, we recall \Cref{thm:round_viol_th}:
\theoremRoundViol*
\begin{proof}
(1) For the \GroupUtilitarian{}, by theorems (\ref{thm:lp-violation1}) and (\ref{thm:lp-violation2}) the LP solution has a violation of $|\Colors|+\epsilon$, then by lemma (\ref{viol_lemma}) and the definition of the $\GroupUtilitarian{}=\sum_{\pcolor \in \Colors} \dpc$, the violation is at most $|\Colors|(\epsilon+\frac{2}{L(U)})$.

(2) For the \Egalitarian{}, by theorem (\ref{thm:egal-violation}) and lemma (\ref{viol_lemma}), the rounded solution would have a worst case violation of $\epsilon+\frac{2}{L(U)}$ across the colors.  
\end{proof}

We introduce the following lemma which is important for proving \Cref{thm:add_imposs}: 
\begin{lemma}\label{mu_greater_than_zero}
Any polynomial time approximation algorithm for $\FABC$ for a general upper bound $U$ must have $\mu > 0$, i.e. it must have a strictly greater than zero additive approximation guarantee.
\end{lemma}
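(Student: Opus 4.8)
The plan is to argue by contradiction, using only the hardness already proved in this appendix. Suppose there were a polynomial-time algorithm $\mathcal{A}$ for $\FABC$ (with the bound $U$ given as part of the input) achieving additive fairness violation $\mu = 0$; that is, on every instance $\mathcal{A}$ returns an assignment of cost at most $U$ whose fairness objective equals the optimum $\OPT_{\FABC}$. Then $\mathcal{A}$ computes $\OPT_{\FABC}$ exactly, and in particular decides whether $\OPT_{\FABC}=0$. But an optimal value of $0$ means the underlying fair-assignment-under-bounded-cost instance admits a perfectly fair, cost-feasible solution, and deciding this is exactly the NP-hard problem $\FABC$ of \Cref{pof_np_hard}. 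Hence $P=NP$, a contradiction, so $\mu>0$.

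To make the reduction fully explicit (and to see that the optimum is genuinely bounded away from $0$ on the hard instances, so that $\mathcal{A}$ really does decide the problem rather than trivially), I would plug $\mathcal{A}$ into the X3C construction from the proof of \Cref{FA_np_hard}: take the graph of \Cref{fig_FA_reduc}, fix the centers to $S=\mathcal{F}$, and set $U=1$ for $k$-center and $U=n-k$ for $k$-median/$k$-means. By the helper lemmas there, a cost-at-most-$U$ assignment with zero violation (a valid $1:3$ red-to-blue split in every cluster) exists iff the X3C instance is a YES-instance, and otherwise the instance is still feasible (route every point to a nearest center), so $\OPT_{\FABC}$ is well defined and strictly positive. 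Thus running $\mathcal{A}$ and testing whether its output has violation $0$ decides X3C in polynomial time. Alternatively one can quote \Cref{LBmain} directly: a zero-violation solution of cost at most $U$ in particular attains violation below the threshold $\tfrac{1}{8L(U)}$ that \Cref{LBmain} rules out unless $P=NP$.

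The only subtlety I expect to have to address is what ``approximation algorithm for $\FABC$'' is assumed to guarantee about the cost. In the setting used throughout the paper the cost constraint is met exactly (cost $\le U$), and the dichotomy above applies verbatim, so the proof reduces to bookkeeping of cluster sizes plus an appeal to \Cref{pof_np_hard}. If one wanted the statement to also cover an algorithm that merely keeps the cost within a multiplicative factor of $U$, I would instead argue at the level of the reduction graph, checking which reassignments an enlarged budget actually permits (elements and auxiliary vertices sit at distance $3$ from every non-parent center), so that the ``perfectly fair and cost-feasible $\iff$ X3C is a YES-instance'' equivalence still survives for a sufficiently small blow-up; this is the step that takes a little care, but it is again pure bookkeeping rather than real computation.
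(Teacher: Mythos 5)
Your proof is correct and follows essentially the same route as the paper: both arguments exploit the hard instances from \Cref{pof_np_hard} on which $\OPT_{\FABC}=0$, where any multiplicative factor $\rho$ contributes nothing and a $\mu=0$ additive guarantee would force an exact solution to an NP-hard problem. The only nit is your opening gloss that $\mu=0$ means $\mathcal{A}$ ``computes $\OPT_{\FABC}$ exactly'' on every instance --- with a multiplicative ratio $\rho>1$ that holds only on the $\OPT=0$ instances, but those are precisely the ones your argument (and the paper's) uses, so nothing breaks.
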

\begin{proof}
The proof follows from the proof of Theorem (\ref{pof_np_hard}). Specifically, the proof of Theorem (\ref{pof_np_hard}) shows that hard instances for $\FABC$ could have an optimal value of 0 for the \GroupUtilitarian{}, \Egalitarian{}, and \Leximin{} objectives, specifically when $U=\OPT_{\text{FC}}$ where $\OPT_{\text{FC}}$ is the optimal value of fair clustering. Therefore, if a polynomial time approximation algorithm with approximation ratio $\rho \ge 1$ and additive approximation $\mu \ge 0$ is ran over such hard instances, then it would output a solution of value $\rho \OPT + \mu=\rho(0)+\mu=\mu$. If the algorithm has $\mu=0$, then it would mean that the problem has been solved optimally which is impossible unless $P=NP$. Therefore, $\mu>0$.
\end{proof}

Now we recall \Cref{thm:add_imposs}:
\theoremAddImpos*
\begin{proof}
By the result of Lemma \ref{mu_greater_than_zero} we know that we can hard instances with $\OPT=0$ and that any polynomial time algorithm should have an additive approximation $\mu > 0$. Further,  we consider the same X3C reduction of Theorem \ref{pof_np_hard} and Figure \ref{fig_FA_reduc} for $\FABC$ with the centers set to the points of $\mathcal{F}$. 

To prove \Cref{thm:add_imposs}, suppose by contradiction that an algorithm $\mathcal{A}$ exists that guarantees an additive approximation of $O(n^\delta)$ for $\delta \in [0,1)$. Suppose, we are given an instance of the problem with optimal solution value of $\OPT$ and $n$ many points. Note by Lemma \ref{helper_lemma_3} if $\Delta^i_{\text{red}}<\frac{1}{|C_i|}$, then there us no violation. It follows that if $\sum_{i \in [k]} (\Delta^i_{\text{red}} + \Delta^i_{\text{blue}}) < \frac{1}{4n}$ then we havwe no violation.

Now, create $D$ many duplicates of the given set of points. Let the distance between the points belonging to the same duplicate be the same as in the original instance, whereas for points in different duplicates the distance is infinity. Further, let the number of centers be $Dk$ where each duplicate has $k$ many centers assigned at the same points as the original instance. Given the original upper bound on the clustering objective $U$, the new upper bound $U'$ is set to $U'=U$ for the $k$-center, $U'=DU$ for the $k$-median, and $U'=\sqrt{D}U$ for the $k$-means objectives.   

If this modified instance is given to $\mathcal{A}$, then the output would have a value of at most $\rho D\OPT+c(Dn)^\delta$ for some $c>0$. If $D> \frac{1}{4^{\delta-1}}c^{\frac{1}{1-\delta}}n^{\frac{1+\delta}{1-\delta}}$ (which is polynomial in $n$), then the average violation across the duplicates is:
\begin{align*}
    & \frac{\rho D\OPT+c(Dn)^\delta}{D} = \rho \OPT+cn^\delta D^{\delta-1} \\
    & < \rho \OPT+ c \frac{1}{4} n^\delta c^{\frac{\delta-1}{1-\delta}} n^{\frac{(1+\delta)(\delta-1)}{1-\delta}}= \rho \OPT+\frac{1}{4n} = 0 + \frac{1}{4n}
\end{align*}
This means that there must exist at least one duplicate for which the violation is at most $\frac{1}{4n}$ \ which means that the problem has been exactly in polynomial time which is impossible unless $P=NP$.  
\end{proof}

Next, we recall \Cref{thm:opt_large_bound}:
\theoremOptLargeBound*
\begin{proof}
Suppose $\alpha(\mathcal{I}) \OPT_{\textbf{cb}}(\mathcal{I}) \leq \OPT_{\FC}(\mathcal{I})$ where $\OPT_{\FC}$ is the optimal fair clustering cost, then fair clustering is solvable in polynomial time which is impossible unless $P=NP$ since fair clustering is NP-hard. 

If $\alpha(\mathcal{I}) \OPT_{\textbf{cb}}(\mathcal{I}) > \OPT_{\FC}(\mathcal{I})$. Then this algorithm will not have any fairness violation if we choose  $U=\alpha(\mathcal{I}) \OPT_{\textbf{cb}}(\mathcal{I})$, further its cost is $\alpha(\mathcal{I}) \OPT_{\textbf{cb}}(\mathcal{I}) \leq \alpha(\mathcal{I}) \OPT_{\FC}(\mathcal{I})$. Therefore, we have a true polynomial time approximation algorithm for fair clustering with approximation ratio at most $\alpha(\mathcal{I})$.

Now we prove the second part. By definition the output of a true approximation for fair clustering would have no proportional violations therefore achieving the optimal value for any objective for $\FCBC$. Therefore, we have an optimal algorithm for $\FCBC$ for $U \ge \alpha'({\mathcal{I}}) \OPT_{\FC}(\mathcal{I}) \ge \alpha'({\mathcal{I}}) \OPT_{\textbf{cb}}(\mathcal{I})$.  
\end{proof}

Further, a true approximation algorithm for fair clustering algorithm would imply an exact algorithm for fair clustering under a bounded cost $\FCBC$.

\begin{theorem}
A true polynomial time $\alpha({\mathcal{I}})$ approximation algorithm for fair clustering implies that fair clustering under a bounded cost $\FCBC$ can be solved optimally in polynomial time for $U \ge \alpha({\mathcal{I}}) \OPT_{\FC}(\mathcal{I})$.  
\end{theorem}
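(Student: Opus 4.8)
The plan is to observe that this statement is essentially a repackaging of the second half of \Cref{thm:opt_large_bound}, so the proof reuses exactly the same idea with no new machinery. First I would run the hypothesized polynomial-time \emph{true} $\alpha(\mathcal{I})$-approximation algorithm for fair clustering on the given instance $\mathcal{I}$. By the definition of a true approximation, its output is a clustering that is genuinely fair --- it has zero proportional violation for every color $h \in \Colors$ --- and whose cost is at most $\alpha(\mathcal{I})\,\OPT_{\FC}(\mathcal{I})$.

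Next I would verify feasibility and optimality for the $\FCBC$ instance with bound $U$. Since by hypothesis $U \ge \alpha(\mathcal{I})\,\OPT_{\FC}(\mathcal{I}) \ge \OPT_{\FC}(\mathcal{I})$, the returned clustering has cost at most $U$ and is therefore a feasible solution of $\FCBC$. Because it has no proportional violation, it attains value $0$ for each of the fairness objectives (\GroupUtilitarian{}, \Egalitarian{}, and \Leximin{}), and $0$ is trivially a lower bound on each of these objectives since $\dpc \ge 0$ for all colors. Hence the clustering is an optimal solution of $\FCBC$, and it was produced in polynomial time.

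Finally I would note that the entire procedure runs in polynomial time: the only nontrivial computation is the single invocation of the assumed polynomial-time approximation algorithm, while checking the cost bound $U$ and reading off the per-cluster color proportions of the output takes linear time. The one (minor) subtlety to state carefully is that the chain $U \ge \alpha(\mathcal{I})\,\OPT_{\FC}(\mathcal{I}) \ge \OPT_{\FC}(\mathcal{I})$ forces the $\FCBC$ instance to be feasible with optimal fairness value exactly $0$; this is immediate because the fair-optimal clustering already has zero violation and cost $\OPT_{\FC}(\mathcal{I}) \le U$. No genuinely hard step arises here --- all the content lies in the definition of a true approximation together with \Cref{thm:opt_large_bound}.
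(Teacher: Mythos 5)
Your proposal is correct and follows essentially the same argument as the paper's own (very terse) proof: run the true approximation, note that its output has zero violation and cost at most $\alpha(\mathcal{I})\,\OPT_{\FC}(\mathcal{I}) \le U$, hence it is a feasible and fairness-optimal solution for the $\FCBC$ instance. The extra detail you supply (feasibility check, the observation that $0$ is a trivial lower bound on each fairness objective, and the polynomial-time accounting) is just an expanded version of the same reasoning.
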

\begin{proof}
Since we have a true approximation, then there would be no fairness violation (optimal value). Further, this would be $U$ such that $U \ge \alpha({\mathcal{I}}) \OPT_{\FC}(\mathcal{I}) \ge \alpha({\mathcal{I}}) \OPT(\mathcal{I})$.
\end{proof}

\section{Solution for the \Leximin{} Case}\label{app:lex-proof}
Here we present the full details for the \Leximin{} case, see algorithm (\ref{alg:lex_alg_appendix}) below. 
\begin{algorithm}[h!]
   \caption{Leximin Algorithm}
   \label{alg:lex_alg_appendix}
\begin{algorithmic}
   \STATE {\bfseries Input:} set of points $\Points$, price of fairness $\upof$, for each color $\pcolor \in \Colors$ lower and upper proportion values $\beta_{\pcolor}, \alpha_{\pcolor}$
   \STATE $\Colfixed = \{\}, \Delta_{\min}=0$ 
   \WHILE{$\Colfixed \neq \Colors$}
   \STATE \textbf{Step 0: } For each color $\pcolor \in \Colfixed$ set its violation to its minimum found in the previous iterations $\dpcmin$.
   \STATE \textbf{Step 1: } For each color $\pcolor \in \Colors-\Colfixed$ , set $\dpc =\Delta$.
   \STATE Find the the minimum $\Delta$, such that $\Delta < \Delta^{q-1}_{\min}$ that satisfies LP (\ref{lpf}) using binary search over $\eset$, let $\Delta^{q}_{\min} = \Delta + \frac{2}{L}$.
   \STATE \textbf{Step 2: } For the set $h'_{\ell} \in \Colors-\Colfixed$, find the minimum set of colors with violation $\Delta^{q}_{\min}$ and add them to $\Colfixed$ using LP (\ref{lp_bind}). 
   \STATE \textbf{Step 3: } If in \textbf{Step 2} no color is found, then randomly pick a color from $\Colors-\Colfixed$ and add it to $\Colfixed$.
   \ENDWHILE
\end{algorithmic}
\end{algorithm}

LP (\ref{lp_bind}) below is run once for each $h'_{\ell} \in \Colors - \Colfixed$ (note lines \ref{lp_bind_3} and \ref{lp_bind_4}). The LP does not have an objective and amounts to a feasibility check.  
\vspace{-1mm}
{\small
\begin{subequations}\label{lp_bind}
  \begin{equation}
    \label{lp_bind_1}
     \sum_{i,j} d^p(i,j) x_{ij} \leq \upofp
  \end{equation}
  \begin{equation}
    \label{lp_bind_2}
    \forall j \in \Points: \sum_{i \in S} x_{ij}=1, \quad x_{ij} \in [0,1]
  \end{equation}
    \begin{align}     \label{lp_bind_3}
     & \intertext{ for the given $h'_{\ell}$:}
     & \Bigg( \beta_h-\Big[\Delta^q_{\min}-\frac{2}{L}-\epsilon\Big] \Bigg) \Big( \sum_{j \in \Points} x_{ij} \Big) 
     \leq \sum_{\substack{j \in \Points,\\ \chi(j)=h'_{\pcolor}}} x_{ij}
     \leq \Bigg( \alpha_h+ \Big[\Delta^q_{\min}-\frac{2}{L}-\epsilon\Big] \Bigg) \Big( \sum_{j \in \Points} x_{ij} \Big) 
  \end{align}
    \begin{align}
    \label{lp_bind_4}
    & \forall \pcolor \in \Colors-\Big(\Colfixed \cup \{{h'_{\ell}}\} \Big), \forall i \in \{1,\dots,k\}: \\ 
    & (\beta_h-\Delta^q_{\min}) \Big( \sum_{j \in \Points} x_{ij} \Big) 
    \leq \sum_{\substack{j \in \Points,\\ \chi(j)=\pcolor}} x_{ij}  
    \leq (\alpha_h+ \Delta^q_{\min} ) \Big( \sum_{j \in \Points} x_{ij} \Big) 
  \end{align}
  \begin{equation}
    \label{lp_bind_5}
    \forall \pcolor \in \Colfixed:  (\beta_h-\dpcmin) \Big( \sum_{j \in \Points} x_{ij} \Big) 
    \leq \sum_{\substack{j \in \Points,\\ \chi(j)=\pcolor}} x_{ij}  
    \leq  (\alpha_h+\dpcmin) \Big( \sum_{j \in \Points} x_{ij} \Big) 
  \end{equation}
\end{subequations}
}

In algorithm (\ref{alg:lex_alg_appendix}), step 1 does a binary search over the set $\eset$ to find the minimum feasible violation $\Delta^q_{\min}$ for the active set of colors (in the set $\Colors - \Colfixed$). Step 2 finds the set of colors whose violation can be improved beyond $\Delta^q_{\min}$, by running an LP (see \ref{app:lex-proof}) specific to each color in $\Colors - \Colfixed$. If all colors in $\Colors - \Colfixed$ can improve, then in step 3 a random color is picked from that a set. Step 0 simply sets the violation to the optimal found value for the set of colors that can no longer be improved (the set $\Colfixed$).

\section{Discussion on the Size of the Smallest Cluster for a Given Cost Upper Bound}\label{lower_bound_appendix}
As discussed, it is clear from Theorem \ref{th_full_guarentee} that the larger the size of the smallest cluster, the better our approximation. In Section \ref{L_should_be_large} we consider examples where in the absence of outliers and for suitable values of $k$, we would not have small clusters. We note further that whereas the given cost is $U$ the final approximation in Theorem \ref{th_full_guarentee} is in terms of $\frac{1}{L(U')}$ where $U'=(2+\alpha)U$ with $\alpha$ being the color-blind approximation ratio. So it is reasonable to wonder about the gap between $L(U)$ from the lower bound of Theorem \ref{LBmain} and $L(U')$ from Theorem \ref{th_full_guarentee}. We show that the gap is example dependent, specifically in Section \ref{unbounded_gap} we show that the gap can be arbitrarily large and in Section \ref{bounded_gap} we show that it's possible that they are precisely equal, i.e. $L(U)=L(U')$. We note here that color assignments have no significance.

\subsection{For suitable $k$ with no outliers $L(U)$ is large}\label{L_should_be_large} 
For reasonable values of $k$ and in the absence of outliers, provided the upper clustering cost $U$ is not very large, we do not expect clusters of a small size, i.e. $L(U)$ is large. In Figure \ref{ordinary} we choose a value of $k$ that recovers the underlying clustering. We can see that there are no small clusters. On the other had, in Figure \ref{outliers} because of outliers we end up with small clusters. Moreover, if we choose a value of $k$ greater than 3, then we notice of the clusters fragment as in Figure \ref{k_larger}, this leads to smaller clusters but even then they are not pathologically small.

\begin{figure}[h!]
\vskip 0.2in
\begin{center}
\centerline{\includegraphics[width=0.5\columnwidth]{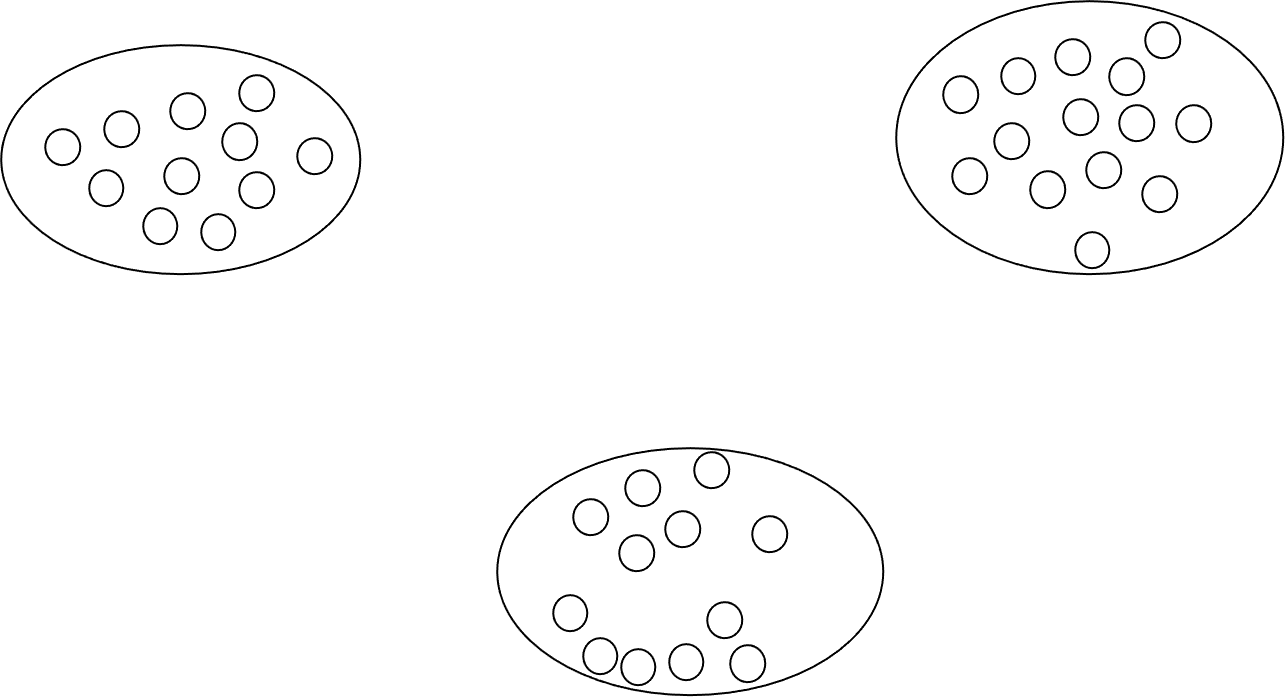}}
\caption{Here there are no outliers and we choose $k=3$ which recovers the clusters of the dataset.}
\label{ordinary}
\end{center}
\vskip -0.2in
\end{figure}

\begin{figure}[h!]
\vskip 0.2in
\begin{center}
\centerline{\includegraphics[width=0.5\columnwidth]{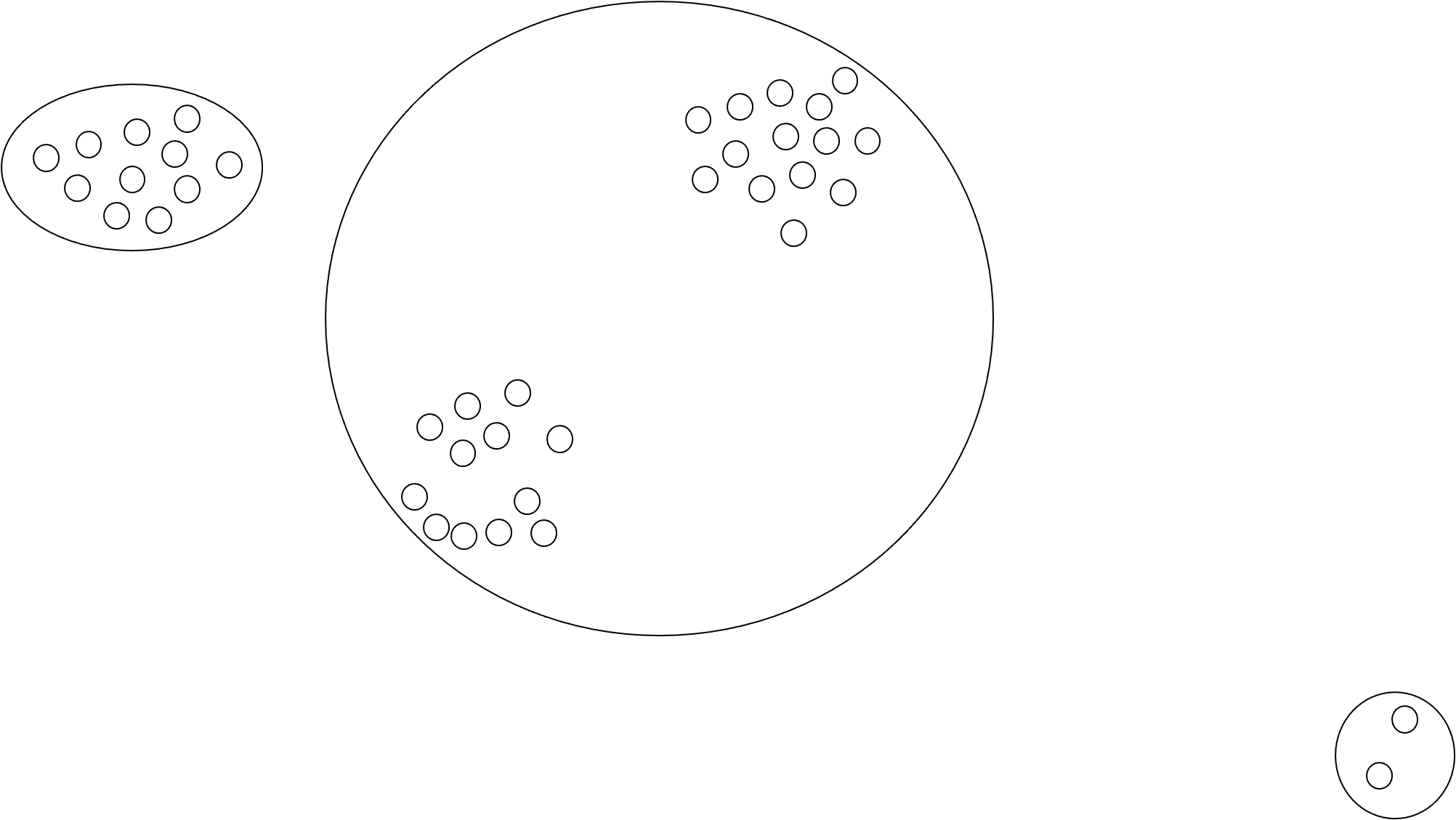}}
\caption{Here we have a cluster of two points because of the outlier points.}
\label{outliers}
\end{center}
\vskip -0.2in
\end{figure}

\begin{figure}[h!]
\vskip 0.2in
\begin{center}
\centerline{\includegraphics[width=0.5\columnwidth]{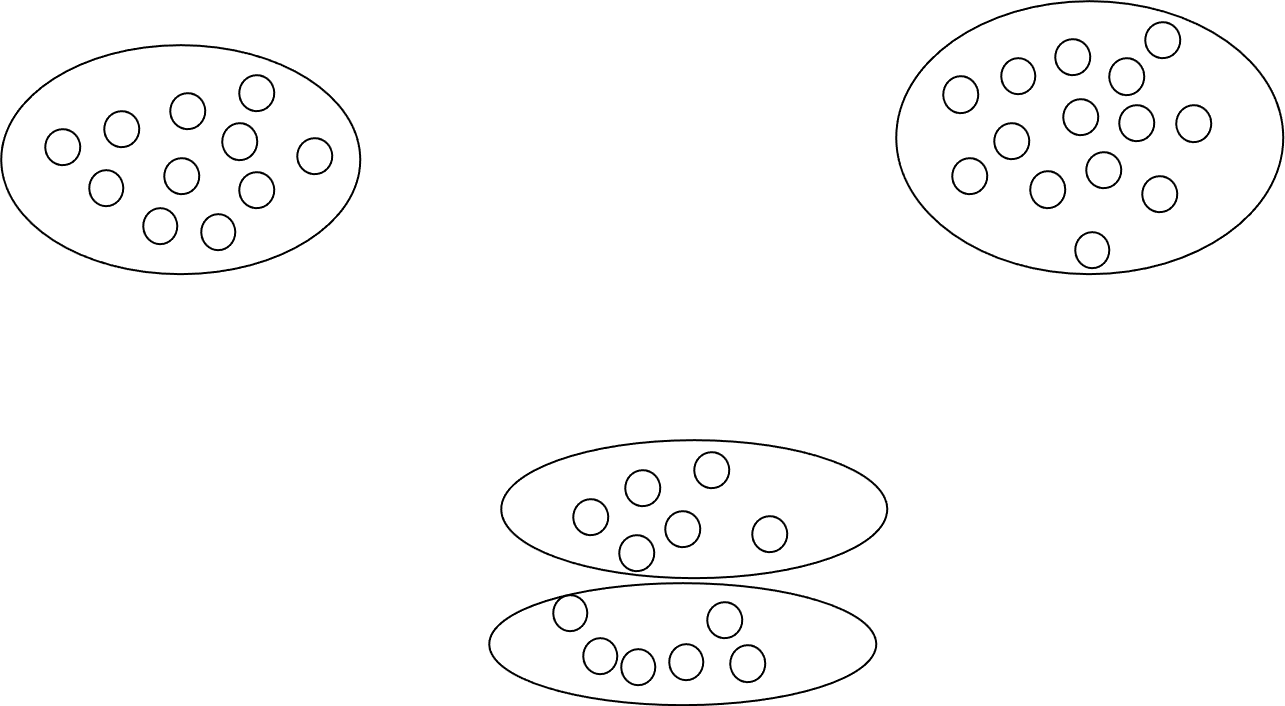}}
\caption{Here we increase $k$ to $k=4$.}
\label{k_larger}
\end{center}
\vskip -0.2in
\end{figure}

\subsection{Example where the gap between $L(U)$ and $L(U')$ is large}\label{unbounded_gap}
See Figure \ref{fig_unbounded_gap} which is the same as that of Figure \ref{FA_np_hard}. Note that by Lemma \ref{main_lb_lemma}, for the $k$-center if $U=1$, then $L(U)=4$. But at $U'=(2+\alpha)U=4$, we can get $L(U')=1$. This is not difficult to see since if we select any point from the top row $\mathcal{T}$, then every other point in the graph is at a distance of at most 2.   
\begin{figure}[h!]
\vskip 0.2in
\begin{center}
\centerline{\includegraphics[width=0.8\columnwidth]{Figs/hardness_fig_ready.png}}
\caption{}
\label{fig_unbounded_gap}
\end{center}
\vskip -0.2in
\end{figure}

\subsection{Example where $L(U')=L(U)$}\label{bounded_gap}
See Figure \ref{fig_bounded_gap} with $k=3$ and the distance between the clusters $R$ is sufficiently large, the size of the smallest cluster does not decrease with increasing cost for reasonable values of $U'$. For the $k$-center, we can simply have $R>(\alpha+2)r$ where $r$ is the radius of any cluster.   
\begin{figure}[h!]
\vskip 0.2in
\begin{center}
\centerline{\includegraphics[width=0.8\columnwidth]{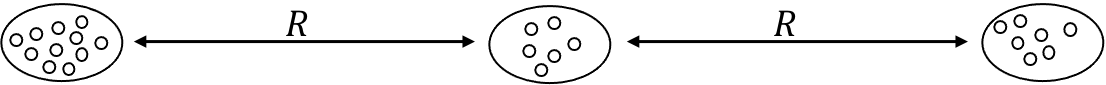}}
\caption{$L(U)$ is }
\label{fig_bounded_gap}
\end{center}
\vskip -0.2in
\end{figure}


\section{Tradeoff between the Upper and Fairness}\label{tradeoff_appendix}
It is worth asking if we can obtain a clear tradeoff between the upper clustering cost and the achievable fairness (whether using the \Egalitarian{} or \GroupUtilitarian{} objectives). In the two sections bellow, we show that in general this is not achievable. In some examples, the tradeoff is effectively a step function whereas in others it gradually increases. Throughout, we consider two clustering ($k=2$) with the $k$-center objective and set $\alpha_{\text{red}}=\alpha_{\text{blue}}=\beta_{\text{red}}=\beta_{\text{blue}}=\frac{1}{2}$. By the above proportions, it is not difficult to see that $\Delta_{\text{red}}=\Delta_{\text{blue}}$ and that $\GroupUtilitarian = 2\Egalitarian$, simply following the proofs of Lemma \ref{helper_lemma_1} and Lemma \ref{helper_lemma_2}, respectively. Since $\GroupUtilitarian = 2\Egalitarian$, we will only discuss the \Egalitarian{} objective for simplicity. Further, it follows as well that $\Delta_{\text{red}},\Delta_{\text{blue}} \leq \frac{1}{2}$ and that any non-trivial color-proportional clustering should have $\Delta_{\text{red}},\Delta_{\text{blue}} < \frac{1}{2}$.

\subsection{Example where the tradeoff is a step function}
In the following examples (a) and (b) shown in Figure \ref{to_step} the tradeoff follows a step function. Letting $r$ be the distance between the nearby same color points and letting $R$ be the smallest distance between points of different color. It is clear that if $U<R$, then $\Egalitarian=\frac{1}{2}$. However, if $U\ge R$, then $\Egalitarian=0$. Note that we can make the value of $R$ arbitrarily large as shown in (a) and (b).

\begin{figure}[h]
\begin{center}
\centerline{\includegraphics[width=\columnwidth]{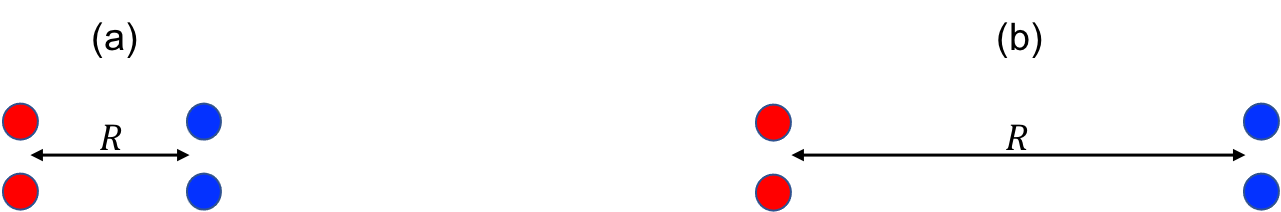}}
\caption{Examples where the tradeoff between the upper cost bound and the fairness objective is abrupt.}
\label{to_step}
\end{center}
\end{figure}

\subsection{Example where the fairness increases gradually with higher cost upper bound}
Consider the case where the points lie on a line and $|\Points^{\text{red}}|=|\Points^{\text{blue}}|=\frac{|\Points|}{2}=n'$ where $n'$ is odd. Clearly, $|\Points|=n=2n'$. Further, let all of the first $n'$ points be red and the reaming $n'$ points be blue. The distance between any two consecutive points is $r$. Figure \ref{to_gradual} shows an example of such a construction for $n'=5$.   

It is not difficult to see that for the two clustering case ($k=2$), that the optimal radius is $(\frac{n'-1}{2})r$, simply set the middle point of each color as the center. Note that we would have $\Egalitarian=\frac{1}{2}$, i.e. no mixing of the colors. 

If we index the upper $U$ by the following $U=U(m)=(\frac{n'-1}{2}+m)r$ where $m$ is an non-negative integer and $m<\frac{n'}{4}$, then it is not difficult to show that the achievable fairness is $\Egalitarian(m)=\frac{1}{2}-\frac{2m}{n'}$. That is, if we allow the cost to increase by $mr$, then we can improve the \Egalitarian{} objective by $\frac{2m}{n'}$. 

For example, in Figure \ref{to_gradual}, for $U=U(0)=2r$, then we cannot mix the color and $\Egalitarian=\frac{1}{2}$. If on the other hand, we choose $U=U(1)=3r$, then for the red color we can have a representation of $\frac{2}{5}$, leading to $\Egalitarian(1)=\frac{1}{2}-\frac{2}{5}$. 

\begin{figure}[h]
\begin{center}
\centerline{\includegraphics[width=\columnwidth]{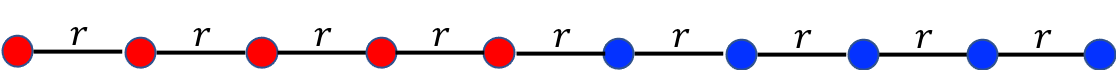}}
\caption{In the above example the tradeoff between the clustering cost and the fairness objective is gradual.}
\label{to_gradual}
\end{center}
\end{figure}

\section{Network Flow Rounding}\label{nf_rounding}
Here, we summarize the network flow rounding due to~\cite{bercea2018cost} as applied to our problem. Recall that it gives the following guarantees:
\begin{enumerate}[label=(\roman*)]
    \item \label{g1} $ \sum_{i,j} d^p(i,j) \bar{x}_{ij} \le   \sum_{i,j} d^p(i,j) x_{ij} $.
    \item \label{g2} $\forall i \in \{1,\dots,k\}: \floor{\sum_{j \in \Points} x_{ij}} \leq \sum_{j \in \Points} \bar{x}_{ij} \leq \ceil{\sum_{j \in \Points} x_{ij}}$ 
    \item \label{g3} $\forall \pcolor \in \Colors,\forall i \in \{1,\dots,k\}: \floor{\sum_{j \in \colPoints} x_{ij}} \leq \sum_{j \in \colPoints} \bar{x}_{ij} \leq \ceil{\sum_{j \in \colPoints} x_{ij}}$
\end{enumerate}

The first guarantee essentially states that we can round the fractional LP assignments to an integral solution without increasing the cost given that we solved the LP for a fixed set of centers. However, proving this depends on the objective. For $k$-center, assigning a point $j$ to any center $i$ with $x_{ij} > 0$ will not affect the cost of the solution with respect to the clustering objective. This is because $k$-center only minimizes the maximum radius of any cluster and we only allow $x_{ij} > 0$ if $d^p(i,j)$ is at most the maximum radius. On the other hand, a fractional solution to the LP for $k$-median or $k$-means may fractionally assign part of a point to a nearby center and part to a faraway center to improve fairness. A rounded integral solution that assign the point wholly to the farther center could potentially increase the cost. The proof due to~\cite{bercea2018cost} defines these objectives as reassignable ($k$-center) and separable ($k$-median and $k$-means) and focuses on the more challenging case of separable objectives.

We create a min-cost flow instance with unit capacities to solve this problem as follows.

\begin{itemize}
\item
For each center $i \in S$, we define a subset of points $V_i$ with a point $v_i^h$ for each color $\pcolor \in \Colors$ and a main point $v_i^0$. Each $v_i^h$ has a balance of $-\floor{\sum_{j \in \Points^{\pcolor}} x_{ij}}$ and $v_i^0$ has a balance of $-(\floor{\sum_{j \in \Points} x_{ij}} - \sum_{}\floor{\sum_{j \in \Points^{\pcolor}} x_{ij}})$. The arc set of these points is $(v_i^h, v_i^0)$ for each color $\pcolor \in \Colors$ and each arc has cost $0$.

\item
For each point $j \in \Points$, we define a point $v_j$ with balance of $1$. For each $x_{ij} > 0$ we add an arc $(v_j, v_i^h)$ with cost $d^p(i,j)$ where $h$ is the color of $j$.

\item
Finally, we add a sink $t$ with balance
$-(|\Points| - \sum{i \in S} \floor{\sum_{j \in \Points} x_{ij}})$. For each center $i$, we add an arc $(v_i^0, t)$ with cost $0$.
\end{itemize}

Note that all of the capacities, costs, and balances are integral and that the LP solution translates to a feasible flow. Thus, we can find an integral flow solution with cost at most that of LP solution and it is easy to see that this can then be translated back to an integral assignment. Also, note that our flow solution almost preserves the fairness of the LP solution. The additive error of $1$ for the second and third guarantees above arise from taking the floor (e.g., $\floor{\sum_{j \in \Points^{\pcolor}} x_{ij}}$) to have integrality.

\section{Additional Experimental Results}\label{further_exps}
In this section, we report additional experimental result on the $k$-means and $k$-median objectives. We also use the $\credit$ dataset from the UCI repository \cite{gunduz2013uci} with all 30,000 data points. The fairness attribute we use is marriage where we merge groups 2 (single) and 3 (other) into one group to have a binary attribute.

\subsection{Additional $k$-means Results}\label{means_sec}
\subsubsection{\GroupUtilitarian{} Objective} 
Here we add results for the \GroupUtilitarian{} for the \credit{} dataset with $\delta=0.05$, see Figure \ref{credit_kmeans_app}. We see that we can achieve better proportional violations for the smaller value of $k$. 
\begin{figure}
\centering
\includegraphics[width=0.5\linewidth]{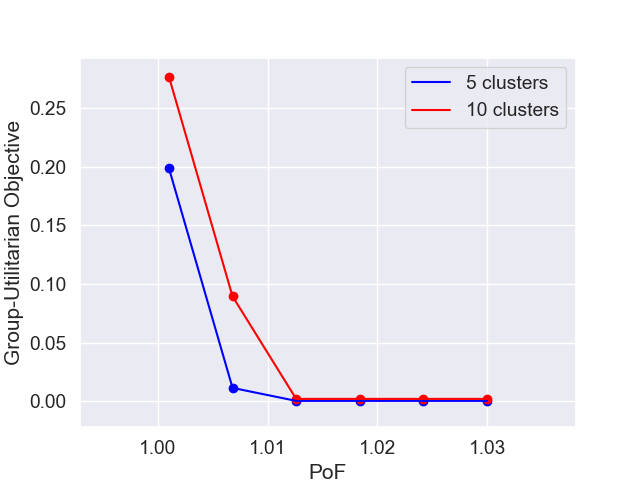}
\caption{$\POF$ versus the $\GroupUtilitarian{}$ objective for the $\credit$ dataset with $\delta=0.05$.}\label{credit_kmeans_app}
\end{figure}

\subsubsection{\Leximin{} Objective} 
Figure~\ref{lex_appendix_fig} shows the results for \Leximin{} on the $\cens$ dataset where we have $k=5$. We set $\delta=0.05$ and $\delta=0.2$, we see similar behaviour to that in Figure~\ref{lex_fig}. We also notice that for higher values of $\delta$ (more relaxed proportion bounds) some colors are able to achieve smaller proportional violation which we expect. 
\begin{figure}
\centering
\includegraphics[width=\linewidth]{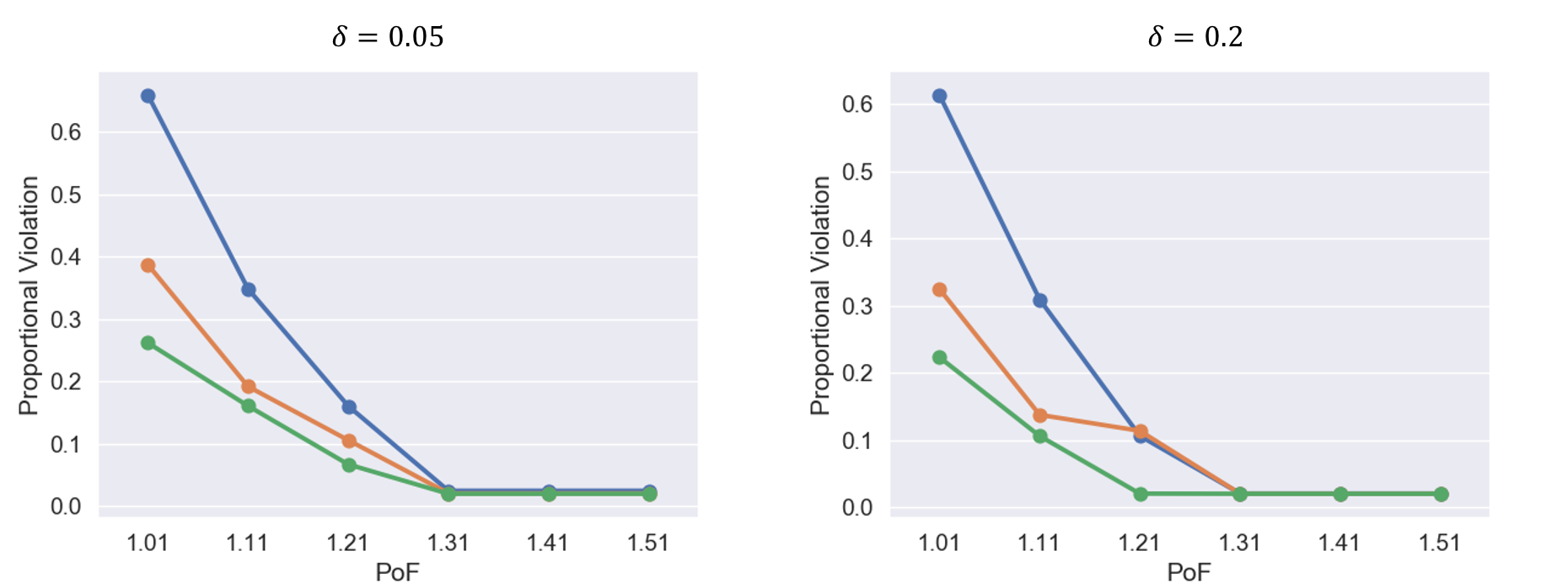}
\caption{$\POF$ versus the $\Leximin{}$ objective for the $\cens$ dataset for different values of $\delta$.}\label{lex_appendix_fig}
\end{figure}

\subsection{$k$-median Results}\label{median_sec}
For the color-blind implementation of the $k$-median objective we follow \cite{bera2019fair,esmaeili2020probabilistic} and use the 5-approximation of \cite{arya2004local} with modified $D$-sampling \cite{arthurk}. Because we are interested to see the behaviour of the algorithm and since the color-blind approximation is time consuming we sub-sample all datasets to 1,000 points.  
\subsubsection{\GroupUtilitarian{} Objective} 
Figure \ref{median_util_app} shows the performance on the different datasets for the \GroupUtilitarian{} objective. Note that we observe a similar trend as in figure \ref{util_fig} where it is easier to minimize the proportional violations when the number of clusters is lower. Note that we set $\delta=0.1$ for all datasets. 
\begin{figure}
\centering
\includegraphics[width=0.85\linewidth]{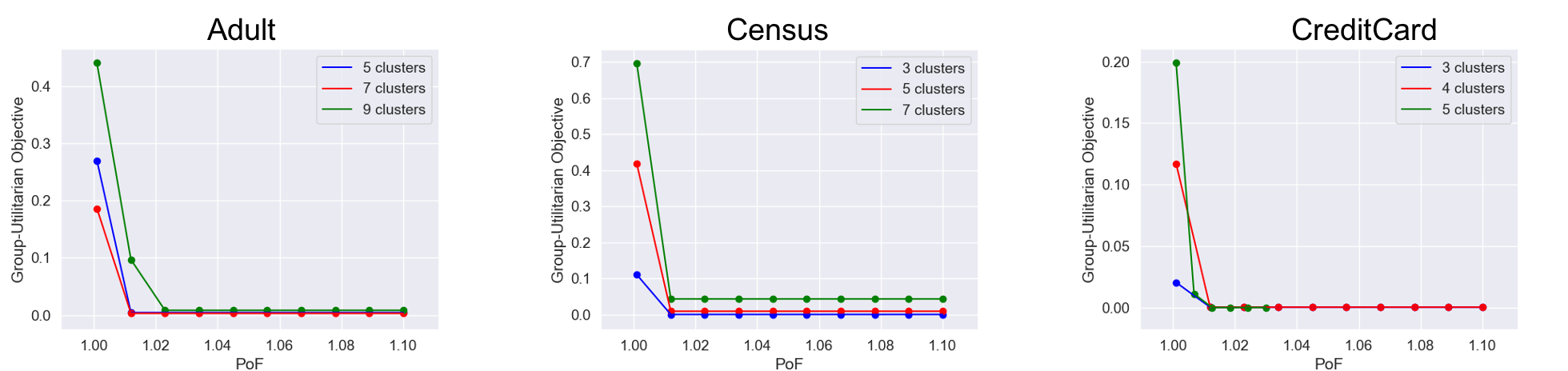}
\caption{$\POF$ versus the $\GroupUtilitarian{}$ objective for the \adult{}, \cens{}, and \credit{} datasets with for the $k$-median objective.}\label{median_util_app}
\end{figure}

\subsubsection{\Leximin{} Objective} 
Figure \ref{cens_lex_app} shows the results on the \Leximin{} objective on the \cens{} dataset with $k=5$ and $\delta=0.1$. The behaviour is very similar to the $k$-means. 
\begin{figure}
\centering
\includegraphics[width=0.5\linewidth]{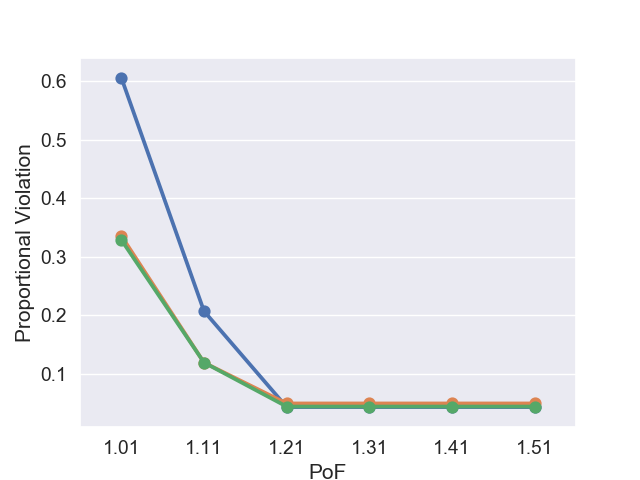}
\caption{$\POF$ versus the $\Leximin{}$ objective for the \cens{} dataset with $k=5$ and $\delta=0.1$ for the $k$-median objective.}\label{cens_lex_app}
\end{figure}

\subsection{Checking the Size of the Smallest Cluster}\label{app_lb_assump_exp}
Here we check the size of the smallest cluster again, for Section \ref{means_sec} the smallest cluster is of size $176$ for the $\credit$ dataset and $168$ for the \cens{} dataset. 

In Section \ref{median_sec} where the datasets have been sub-sampled to 1,000 points. The smallest cluster size found is 27, 15, and 24 for the \adult{}, \cens{}, and \credit{} datasets, respectively.  

It is clear from these experiments that size of the smallest cluster is large and therefore the approximations we obtain are not far from the optimal.


\section{Details of the Randomized Extension}\label{app:random_ext}
We give a sketch of the algorithm. Suppose the LP returns a solution $x_{i,j} \in [0,1]$ for all $(i,j)$, satisfying the following equalities for some non-negative quantities $a_i$ and $b_{i,\pcolor}$: 

\begin{enumerate}
     \item \label{assignment} $\forall j \in \Points$: $ \sum_{i} x_{ij} = 1$. 
    \item \label{ai} $\forall i \in \{1,\dots,k\}: \sum_{j \in \Points} x_{ij} = a_i$. 
    \item \label{bih} $\forall \pcolor \in \Colors,\forall i \in \{1,\dots,k\}: \sum_{j \in \colPoints} x_{ij} = b_{i,\pcolor}$. 
    \item \label{frac} $\forall (i,j): x_{i,j} \in [0,1]$. 
    \end{enumerate}

We will sketch a randomized algorithm that gradually rounds the $x_{i,j}$'s to the eventual $\bar{x}_{i,j}$ in a polynomial number of iterations. Let $Y_{i,j,t}$ denote the  (random) value of $x_{i,j}$ at the end of iteration $t$; initially, we deterministically have $Y_{i,j,0} = x_{i,j}$. Let $A_{i,t}$ and $B_{i,\pcolor, t}$ be random variables such that 
\begin{itemize}
         \item $\forall i \in \{1,\dots,k\}: A_{i,t} = \sum_{j \in \Points} Y_{i,j,t}$. 
    \item $\forall \pcolor \in \Colors,\forall i \in \{1,\dots,k\}: B_{i,\pcolor,t} = \sum_{j \in \colPoints} Y_{i,j,t}$. 
            \end{itemize}

Let us now describe iteration $t \geq 1$, which operates on the values $Y_{i,j,t-1}$ and probabilistically modifies them to the $Y_{i,j,t}$. Fix the values $Y_{i,j,t-1}$ to be some arbitrary $y_{i,j,t-1}$, and define $a_{i,t-1} = A_{i,t-1}$, $b_{i,\pcolor, t-1} = B_{i,\pcolor, t-1}$. We will maintain the following \textbf{five invariants}:

\begin{description}
\item[(I1)] $\forall j \in \Points$: $ \sum_{i} Y_{i,j,t} = 1$ with probability one.
\item[(I2)] $\forall i \in \{1,\dots,k\}: \lfloor a_{i,t-1} \rfloor \leq A_{i,t} \leq \lceil a_{i,t-1} \rceil $ with probability one.
\item[(I3)] $\forall \pcolor \in \Colors,\forall i \in \{1,\dots,k\}:  \lfloor b_{i,\pcolor,t-1} \rfloor \leq B_{i,\pcolor,t} \leq \lceil b_{i,\pcolor,t-1} \rceil$ with probability
one.
\item[(I4)] $E[Y_{i,j,t}] = y_{i,j,t-1}$.
\item[(I5)] $Y_{i,j,t} \in [0,1]$ with probability one.
\end{description}
In particular, we have the following key properties:
\begin{itemize}
\item if $a_{i,t-1}$ is an integer, then $A_{i,t} = a_{i,t-1}$ with probability one; 
\item if $b_{i,\pcolor,t-1}$ is an integer, then $B_{i,\pcolor,t} = b_{i,\pcolor,t-1}$ with probability one; and
\item if $y_{i,j,t-1}$ is an integer (which will be $0$ or $1$), then $Y_{i,j,t} = y_{i,j,t-1}$ with probability one. 
\end{itemize}

Our strategy is to show that there is a way of maintaining our invariants above, while making at least one \emph{more} $A_{i,t}$, $B_{i,\pcolor,t}$, or $Y_{i,j,t}$ integral at the end of iteration $t$. Since there is only a polynomial number of these terms and since we are done when all the $Y_{i,j,t}$'s are integers, our proof of correctness will then be complete
by a simple induction on $t$.

Briefly, iteration $t$ starts with the following constraint system with variables $z_{i,j}$, initialized with the feasible solution $z_{i,j} = y_{i,j,t-1}$: 

\begin{description}
     \item[(C1)] $\forall j \in \Points$ : $ \sum_{i} z_{ij} = 1$. 
    \item[(C2)] $\forall i \in \{1,\dots,k\} ~\emph{such that $a_{i,t-1}$ is an integer}: \sum_{j \in \Points} z_{ij} = a_{i,t-1}$. 
    \item[(C3)] $\forall \pcolor \in \Colors,\forall i \in \{1,\dots,k\} ~\emph{such that $b_{i,\pcolor,t-1}$ is an integer}: \sum_{j \in \colPoints} z_{ij} = b_{i,\pcolor,t-1}$. 
        \item[(C4)] $\forall (i,j) ~\emph{such that $y_{i,j,t-1}$ is an integer (i.e., lies in $\{0,1\}$)}: z_{i,j} = y_{i,j,t-1}$. 
     
    \end{description}
Given our initialization of $z$, we also have that $\forall (i,j): z_{i,j} \geq 0$.

We prove below via a careful counting argument that the system (C1)-(C4) of the form $Az = v$ is an under-determined system. Thus, there is a nonzero vector $r$ that is efficiently computable such that $Ar = 0$. We then calculate certain positive scalars $u_1$ and $u_2$, and probabilistically transition from the vector $y = (y_{i,j,t-1})$ to the random vector $Y = (Y_{i,j,t})$ as follows:
\begin{itemize}
    \item with probability $u_2/(u_1 + u_2)$, set $Y = y + u_1 r$;
    \item with the remaining probability of  $u_1/(u_1 + u_2)$, set $Y = y - u_2 r$.
\end{itemize}
Briefly,
$u_1$ and $u_2$ are chosen positive and just large enough so that we maintain our five invariants, while making at least one \emph{more} $A_{i,t}$, $B_{i,\pcolor,t}$, or $Y_{i,j,t}$ integral at the end of iteration $t$. 

\paragraph{Proof that (C1)-(C4) is an under-determined system.} Let us call pair $(i,j)$ \emph{rounded} if $y_{i,j,t-1}$ lies in $\{0,1\}$, and \emph{floating} otherwise (i.e., if $y_{i,j,t-1}$ lies in $(0,1)$). Let $R$ and $F$ respectively denote the sets of rounded and floating pairs. 

We next remove two types of redundant constraints from our system (C1)-(C4):
\begin{description}
    \item[(R1)] Given the constraints (C4), we can remove any constraint in (C1), (C2), or (C3) in which all pairs $(i,j)$ that appear in the LHS of the constraint, lie in $R$: such constraints are redundant given (C4), and are removed.
    \item[(R2)] If the constraint for $i$ in (C2)---if it appears in (C2)---is linearly dependent on the constraints for $(i,\cdot)$ in (C3),\footnote{This simply means here that the constraint for $i$ in (C2) is the sum of the constraints for $(i,\cdot)$ in (C3), since the latter are all supported on pairwise-disjoint sets of variables.} then we say that \emph{$i$ is (C2)-redundant} and remove the constraint for $i$ in (C2). 
    \end{description}  
Clearly, removal of these redundant equalities does not change our system. \emph{From now on, (C1)-(C4) refers to the reduced system \textbf{after} the removal of these redundant constraints.} 

We next define certain numbers of floating indices after the removals in (R1) and (R2). 
If the constraint for $j$ appears in (C1), let $N_1(j)$ denote the number of floating pairs $(\cdot,j)$ in the LHS of this constraint.  
The second number of floating indices is defined in a slightly-more-refined manner: 
if the constraint for $i$ appears in (C2), let $N_2(i)$ denote the number of floating pairs $(i,\cdot)$ in the LHS of this constraint, \emph{that do not appear in the LHS of any of the constraints for $(i,\cdot)$ in (C3)}. Next, if the constraint for $(i, h)$ appears in (C3), let $N_3(i,h)$ denote the number of floating pairs $(i,\cdot)$ in the LHS of this constraint. 

We next make three useful observations. 
Note first that by (R1), we have $N_1(j)$ and $N_3(i,h)$ are positive; in fact, since the constants in the RHS of (C1) and and (C3) are all \emph{integers}, a moment's thought reveals that each of $N_1(j)$ and $N_3(i,h)$ is \emph{at least two}. Second, we claim that each $N_2(i)$ is at least two as well. Indeed, since the constraint for $i$ appears in (C2), we must have that this constraint is linearly independent of (in particular, is not the sum of) the constraints for $(i,\cdot)$ in (C3); hence, the LHS of the constraint for $i$ in (C2) must have at least one variable not covered by the constraints for $(i,\cdot)$ in (C3), implying that $N_2(i)$ is positive as well. The fact that the constants in the RHS of (C2) are integers, again implies that $N_2(i) \geq 2$. Third, suppose $a_{i,t-1}$ is \emph{not} an integer, in which case we will say $i$ is ``(C2)-not-integral." In this case, it is easy to see that there must exist some $h$ such that $z_{i,h}$ \emph{does not} appear in the LHS of any of the constraints (C2) and (C3). 

We are finally to ready to prove that (C1)-(C4) is underdetermined. Note that the number of constraints in (C4) is $|R|$, and that the total number of variables is $|R| + |F|$. Let $n_1$, $n_2$, and $n_3$ denote the respective numbers of constraints in (C1), (C2), and (C3). Recall again that these (C1)-(C3) refer to the system after applying the removal steps (R1) and (R2). 

Since each variable in (C1) appears only once in (C1) and since $N_1(j) \geq 2$, we get that
\begin{equation}
\label{eqn:F1}
|F| \geq 2 n_1. 
\end{equation}

Similarly, from the consideration of (C2) and (C3) along with the observation above about indices $i$ that are (C2)-not-integral, we get
\begin{eqnarray}
|F| & \geq & 2 (n_2 + n_3) + 1 ~~\mbox{if there is a (C2)-not-integral $i$} \label{eqn:F2-easy} \\
& \geq & 2 (n_2 + n_3) ~~\mbox{otherwise} \label{eqn:F2-harder}
\end{eqnarray}

Thus, by averaging the above and using the fact that $|F|$ is an integer, we obtain 
\begin{eqnarray}
|F| & \geq & n_1 + n_2 + n_3 + 1 ~~\mbox{if there is a (C2)-not-integral $i$} \label{eqn:F2-easy2} \\
& \geq & n_1 + n_2 + n_3 ~~\mbox{otherwise} \label{eqn:F2-harder2}
\end{eqnarray}

Now, if our system is \emph{not} under-determined, we must have that the total number of constraints, which is $n_1 + n_2 + n_3 + |R|$, is at least as large as the total number of variables $|F| + |R|$. We see from (\ref{eqn:F2-easy2}) that this is impossible if there is a (C2)-not-integral $i$. Thus we may assume that there is \textbf{no} (C2)-not-integral $i$, and hence that the number of constraints exactly equals the number of variables. We will next show that this system is linearly dependent, which, from the fact that the number of constraints equals the number of variables, will show that the system is under-determined. Assume for convenience that no constraint in (C2) for any $i$ was removed in (R2) (if not, we simply replace this constraint by the sum of the constraints for $(i,\cdot)$ in (C3), in the following argument). Then, a moment's reflection shows that the sum of the constraints in (C1) equals the sum of the constraints in (C2), yielding the desired linear dependence.

\end{document}